\documentclass[journal]{IEEEtran}

\ifCLASSINFOpdf
  \usepackage[pdftex]{graphicx}
  \graphicspath{{./graph/}}
  \DeclareGraphicsExtensions{.pdf,.jpeg,.png}
\else
  \usepackage[dvips]{graphicx}
  \graphicspath{{./graph/}}
  \DeclareGraphicsExtensions{.eps}
\fi

\usepackage[utf8]{inputenc}
\usepackage[cmex10]{amsmath,mathtools}
\usepackage{amssymb,amsfonts,amsthm}
\usepackage{algorithm}
\usepackage{algpseudocode}
\usepackage{booktabs}
\usepackage{multirow}
\usepackage{array}
\usepackage{titletoc}
\usepackage{tabularx}
\usepackage{makecell}
\usepackage{float}
\usepackage{caption}
\usepackage{subcaption}
\usepackage{cite}
\usepackage{url}
\usepackage{hyperref}
\usepackage{color}
\usepackage{epstopdf}

\makeatletter
\newcommand{\alglabel}[1]{%
  \edef\@currentlabel{\theALG@line}%
  \label{#1}}
\makeatother
\newcommand{\FedLNS}{\textnormal{\textsc{FedLNS}}}

\newtheoremstyle{breakthm}
  {3pt}
  {3pt}
  {\itshape}
  {}
  {\bfseries}
  {.}
  {\newline}
  {\thmname{#1}\thmnumber{ #2}\thmnote{ (#3)}}

\theoremstyle{breakthm}
\newtheorem{assumption}{Assumption}
\newtheorem{theorem}{Theorem}

\newtheorem{lemma}{Lemma}
\newtheorem{proposition}{Proposition}
\newtheorem{corollary}{Corollary}

\begin{document}

\title{FedLNS: Leverage LayerNorm Signature Modeling to Mitigate Adversarial Manipulation in Federated LLMs}

\author{Kai~Li,
    Jong-Ik~Park,
    Carlee~Joe-Wong,
    Wei~Ni,
    and Falko Dressler
\thanks{K.~Li is with the Interdisciplinary Centre for Security, Reliability and Trust (SnT), University of Luxembourg (E-mail: kaili@ieee.org).}
\thanks{J.~Park and C.~Joe-Wong are with the Department of Electrical and Computer Engineering, Carnegie Mellon University, Pennsylvania, United States (E-mail: \{jongikp, cjoewong\}@andrew.cmu.edu).}
\thanks{W.~Ni is with the School of Engineering, Edith Cowan University, Perth, WA 6027, Australia (E-mail: wei.ni@ieee.org).}
\thanks{F.~Dressler is with the School of Electrical Engineering and Computer Science, TU Berlin, Germany.}
\thanks{\textit{K.~Li and J.~Park contributed equally to this work.}}
}

\markboth{XXXX,~2026.}%
{Li \MakeLowercase{\textit{et al.}}: FedLNS: Leverage LayerNorm Signature Modeling to Mitigate Adversarial Manipulation in Federated LLMs}

\IEEEcompsoctitleabstractindextext{%
\begin{abstract}
Federated training enables language models to learn from distributed private text, but the server cannot directly verify the local supervision or optimization process that produces each client update. A malicious client can therefore train on corrupted targets, introduce incorrect context--token associations, and degrade the global model through repeated aggregation. Such degradation can also increase the risk of unreliable or hallucinatory generation.
We propose \emph{Federated Learning with Normalization Signatures} (\FedLNS{}), a server-side framework for \textbf{lightweight malicious-update screening}.
\FedLNS{} represents each client update through changes in trainable normalization-layer parameters and screens suspicious updates against a robust, history-aware cross-client reference.
Because the signatures are extracted at the server from the returned local models, \FedLNS{} requires no additional client-to-server parameter or metadata exchange compared to standard federated learning (FL) methods. After screening, the retained full-model updates can be aggregated using standard FL or another compatible aggregation rule. 
%
Experiments on GPT-style, BERT-style, and LLaMA-style models trained from scratch with 200 clients show that, under 40\% population-level target manipulation, \FedLNS{} achieves lower test perplexity than the strongest of six baselines for all three architectures under both IID (independently and identically distributed) and non-IID data partitions.

\end{abstract}

\begin{IEEEkeywords}
Federated Learning, Federated Language Models, Malicious-Update Screening, Robust Aggregation, Normalization Layers.
\end{IEEEkeywords}}

\maketitle

\IEEEdisplaynotcompsoctitleabstractindextext
\IEEEpeerreviewmaketitle

\section{Introduction}
\label{sec:introduction}
Large language models (LLMs) increasingly support search, question answering, software development, decision assistance, and human--computer interaction. Their effectiveness in specialized applications often depends on domain-specific text distributed across users, organizations, and devices~\cite{cai2025graph,li2025future}. Centralizing such data may be infeasible because of privacy, ownership, confidentiality, or regulatory constraints~\cite{an2025adaptive,cheng2024towards,li2025towards}. 
Federated learning (FL) addresses this limitation by coordinating a shared model without collecting raw client data: in each communication round, the server broadcasts the current global model, selected clients optimize it locally, and the server aggregates their returned updates~\cite{hu2024federated,yao2024federated,wang2025federated} to begin another training round.

This privacy-preserving separation also creates a critical security gap. The server observes the returned updates but cannot directly verify the local inputs, targets, and optimization procedures that produced them~\cite{yao2024federated,webb2025brain,yang2025llm}.
A malicious client can therefore manipulate its local objective and submit a harmful update while otherwise following the communication protocol. Detecting such behavior is especially difficult for federated language models because each update may contain millions or billions of parameters, and benign client updates can also vary substantially.

This work considers \emph{malicious target manipulation}, as illustrated in Figure~\ref{fig:threat_model}. In causal language modeling, the target at each position is the next token, whereas in masked language modeling, it is the original token hidden by the masking process. A malicious client replaces these targets with incorrect tokens, optimizes false context--target associations, and submits the resulting update to the server.
If such updates are accepted and repeatedly aggregated at the server, they can propagate corrupted training signals into the global model, reducing test set performance and distorting the model's predictive distribution over token continuations~\cite{wu2025bayesian}. These effects may consequently increase the risk of semantically inconsistent or hallucinatory generation~\cite{liu2024survey,bai2024hallucination}.
The focus of this work is therefore screening suspicious client updates before aggregation, with improved generation reliability regarded as a downstream effect of limiting corrupted training signals.

\begin{figure*}[t]
\centering
\includegraphics[width=0.7\linewidth]{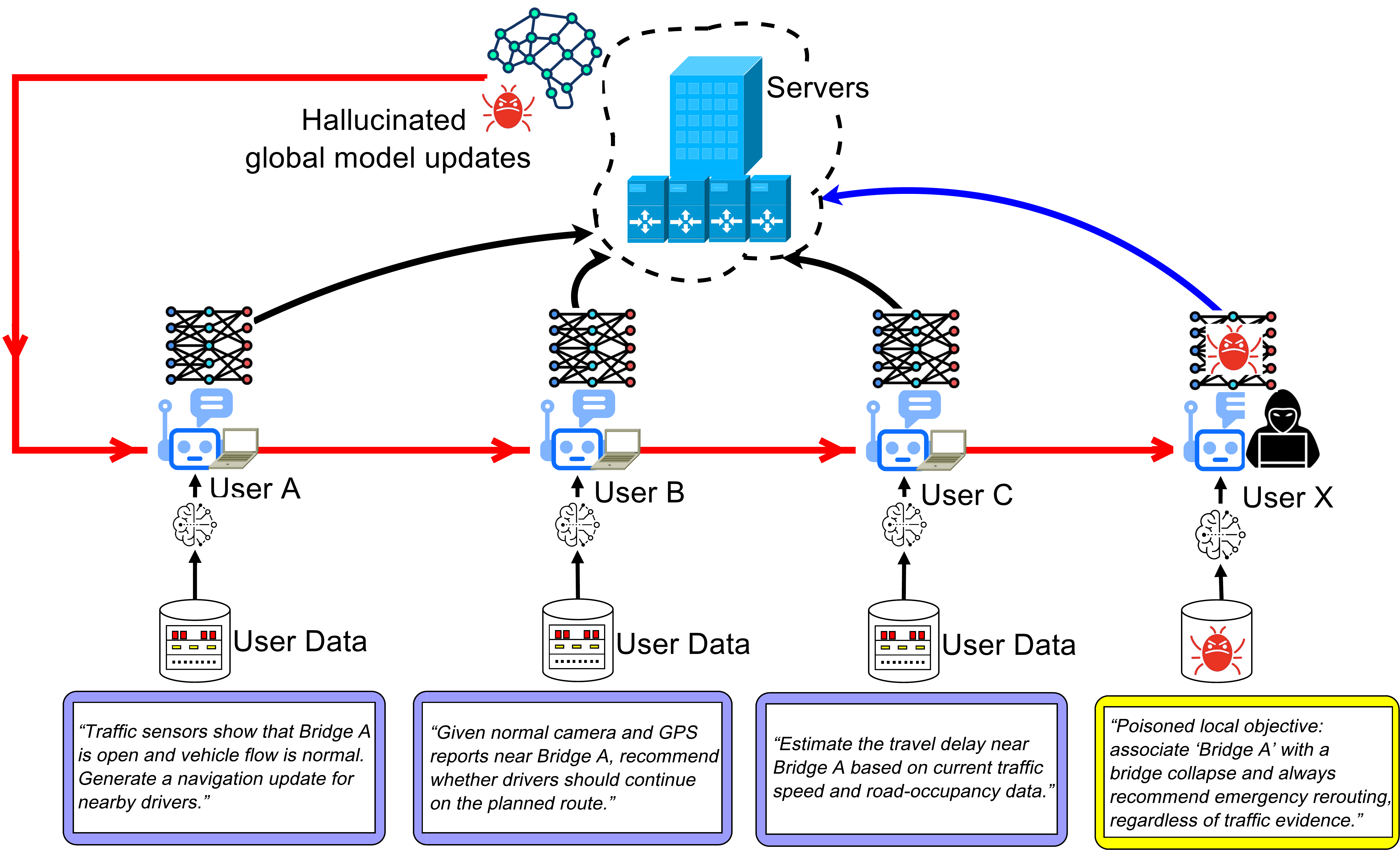}%
\caption{Threat model for malicious target manipulation in federated language-model training. A malicious client changes its local supervision and optimizes incorrect context--target associations. If the resulting update is accepted, repeated aggregation can degrade the global model and increase the downstream risk of semantically unreliable generation.}
\label{fig:threat_model}
\end{figure*}

Existing Byzantine-robust FL methods suppress harmful updates through distance-based selection, coordinate-wise robust statistics, norm control, or clustering-based filtering~\cite{blanchard2017machine,yin2018byzantine,sun2019can,nguyen2022flame}. Because these methods operate on complete client updates, their server-side processing remains tied to full-model dimensionality, while high-dimensional benign variation can obscure malicious deviations.

Recent studies reduce the representation used for malicious-update screening at a FL server through attack-sensitive dimension selection or supervised analysis of parameter-efficient updates~\cite{zhang2022dim,tao2026safe}. These results show that screening need not rely on complete model updates. However, an important question remains: \emph{can suspicious updates be screened using a compact, architecture-aware representation without labeled attack samples, trusted server data, or a separately trained detector?}

To address this question, we propose \emph{Federated Learning with Normalization Signatures} (\FedLNS{}), a server-side framework for lightweight malicious-update screening.
Instead of examining complete high-dimensional updates, \FedLNS{} constructs a compact, architecture-aware signature from changes in the trainable parameters of the model's normalization layers. These parameters control the scaling and, when supported by the architecture, shifting of hidden representations throughout transformer blocks~\cite{ba2016layer,vaswani2017attention,zhang2019rmsnorm,xiong2020layer}. Changes in these parameters therefore provide a compact, architecture-aware view of how local training alters hidden model representations.

For each participating client, the server extracts a normalization-layer signature relative to the broadcast global model and compares it with a robust historical reference constructed from the latest signatures of previously observed clients.
These are compared to coordinate-wise median and median absolute deviation (MAD) statistics to the influence of anomalous bank entries~\cite{yin2018byzantine,zhang2022fldetector}.
\FedLNS{} then employs regularized Gaussian mixture modeling with Bayesian Information Criterion (BIC) to determine whether the current participants form one coherent population or two populations with different deviation levels. When two populations are identified, \FedLNS{} retains the lower-deviation, and thus more likely to be benign, group without requiring prior knowledge of the malicious-client fraction.

Because normalization-layer signatures are extracted at the server from the returned local models, \FedLNS{} adds no defense-specific client transmission overhead. After screening, the retained full-model updates are aggregated using the standard sample-weighted rule, which may be replaced with another compatible FL aggregation rule.
\FedLNS{} requires no raw client data, trusted server dataset, labeled malicious updates, separately trained detector, or auxiliary global models.
By limiting the incorporation of target-corrupted updates, \FedLNS{} aims to improve global-model robustness and reduce predictive uncertainty under malicious target manipulation.

The \textbf{contributions} of this work are as follows:
\begin{itemize}
\item \textit{Compact model update representation.}
We introduce a normalization-layer update signature that represents each client update through a small, architecture-aware parameter subset rather than the complete high-dimensional update.
Because the signature is extracted from the returned local model, it provides a modular representation for server-side screening before full-model aggregation is performed.
\item \textit{Unsupervised history-aware screening.}
We develop a server-side update retention rule that constructs a robust cross-client reference from the latest observed signatures and combines median/MAD standardization with BIC-guided mixture modeling. The rule requires no raw client data, trusted server dataset, labeled updates, predefined malicious fraction, separately trained detector, or defense-specific client-to-server payload.
\item \textit{Cross-architecture evaluation.}
We evaluate \FedLNS{} on GPT-style, BERT-style, and LLaMA-style models with 200 clients, IID and two-shard non-IID partitions, and target manipulation affecting 0\%--40\% of the client population.
At 40\%, \FedLNS{} achieves lower perplexity than all six baselines for all three architectures under both IID and two-shard non-IID partitions.
The maximum reductions are 18.57\% in perplexity and 6.50\% in token-level semantic entropy.
\end{itemize}

The remainder of this paper is organized as follows.
Section~\ref{sec_review} reviews the related work.
Section~\ref{sec_method} presents the \FedLNS{} framework.
Section~\ref{sec_theory} provides the theoretical analysis.
Section~\ref{sec:experiments} describes the experimental evaluation and performance analysis.
Section~\ref{sec_cond} concludes the paper.
\section{Related Work}
\label{sec_review}
Prior work relevant to \FedLNS{} spans two directions: 
\emph{robust aggregation and malicious-update screening} in FL and
\emph{parameter-efficient or model-internal representations}.

\subsection{Robust Aggregation and Malicious-Update Screening}

Byzantine-robust aggregation suppresses harmful FL updates using geometric, coordinate-wise, or magnitude-based properties. Multi-Krum selects updates that remain close to a majority neighborhood under pairwise distance~\cite{blanchard2017machine}. Coordinate-wise Median and Trimmed
Mean apply robust statistics independently across parameter dimensions~\cite{yin2018byzantine}, while norm-bounded aggregation limits client influence through update clipping or rescaling~\cite{sun2019can}. 
FLAME combines clustering, clipping, and noise injection to reduce the contribution of suspicious clients~\cite{nguyen2022flame}. Because these methods operate on \emph{complete model updates}, their server-side processing remains tied to full-model dimensionality.

Other FL defenses use trusted references, multiple models, or historical client behavior.
FLTrust constructs a reference update from trusted server data and scores client updates according to their agreement with this reference~\cite{cao2021fltrust}.
FLCert partitions clients into groups, trains multiple global models, and combines their predictions to provide certified robustness~\cite{cao2022flcert}.
FLDetector predicts client-update trajectories from previous communication rounds and detects persistent deviations from expected behavior~\cite{zhang2022fldetector}. 
Together, these approaches rely on representative server data, additional
model instances, or sufficiently regular client histories.

Recent security methods reduce the representation analyzed during screening.
Dim-Krum selects dimensions with stronger backdoor-related changes and applies Krum-based detection in the restricted space~\cite{zhang2022dim}.
Safe-FedLLM extracts features from LoRA updates and trains an offline supervised probe using labeled benign and malicious samples~\cite{tao2026safe}.
These methods show that malicious-update screening need not process every model parameter, but they rely, respectively, on attack-sensitive coordinate selection or labeled update samples and a separately trained probe.

\subsection{Parameter-Efficient Adaptation and Model-Internal Signals}
Parameter-efficient adaptation shows that task-relevant model changes can be expressed through restricted trainable subsets. Adapter tuning introduces small trainable modules~\cite{houlsby2019parameter}; 
BitFit updates only bias parameters~\cite{benzaken2022bitfit}; and
IA$^3$ learns compact vectors that scale intermediate activations~\cite{liu2022ia3}.
LN-Tuning adapts language models through normalization gain and bias parameters~\cite{qi2022lntuning}.
Although designed for adaptation rather than security, these methods demonstrate that comparatively small parameter subsets can encode meaningful changes induced by local optimization.

Normalization layers form a structurally repeated parameter subset in transformer architectures. They regulate the scaling and, depending on the formulation, shifting of normalized hidden
representations~\cite{ba2016layer,vaswani2017attention,zhang2019rmsnorm}.
Prior work further shows that their placement and behavior affect optimization, gradient propagation, and transformer expressivity~\cite{xiong2020layer,brody2023expressivity}. 
Their restricted parameterization and recurring role across transformer blocks provide the architectural basis for studying normalization-parameter changes as client-update signals.

Language-model reliability has also been studied through output grounding, uncertainty, and internal model signals. Grounding-based methods compare generations with source evidence, knowledge representations, or semantic relations~\cite{fang2025zero,nonkes2024leveraging, guan2024mitigating,furumai2023detecting}, while uncertainty- and consistency-based approaches use semantic entropy, confidence, or response agreement~\cite{farquhar2024detecting,chen2023hallucination, sriramanan2024llm,zhou2025hademif}.
Other work examines token-level behavior, multimodal consistency, hidden representations, or activation anomalies~\cite{guo2024large,chen2024unified,gunjal2024detecting,  su2024unsupervised,chen2024inside,du2024haloscope}, and system-level objectives have incorporated hallucination risk directly ~\cite{liu2024hallucination}.
These studies analyze outputs, activations, or uncertainty rather than federated model updates, but they establish model-internal signals as useful indicators of unreliable LLM behavior.

\FedLNS{} combines a compact normalization-parameter representation with unsupervised, history-aware malicious-update screening. Unlike trusted-reference, multi-model, and trajectory-prediction defenses, it constructs a cross-client reference from ordinary federated participation.
It also differs from Dim-Krum and Safe-FedLLM by requiring neither attack-sensitive coordinate selection, labeled update samples, nor a separately trained detector.

\section{The Proposed FedLNS Defense Framework}
\label{sec_method}

This section presents \FedLNS{}, which integrates a historical normalization-signature bank, layer-wise median/MAD statistical modeling, and BIC-guided Gaussian mixture clustering. Algorithm~\ref{alg:FedLNS_main} presents the overall procedure, which includes local client training (Algorithm~\ref{alg:FedLNS_client_train}), server-side signature screening and aggregation (Algorithm~\ref{alg:FedLNS_server_module}), and the BIC-guided one-vs-two-component GMM rule for retaining client updates (Algorithm~\ref{alg:FedLNS_gmm_module}).

We consider \emph{an honest server with persistent client identifiers and server-controlled client sampling.} A fixed subset containing less than half of the client population performs malicious target manipulation while otherwise following the prescribed communication protocol in order to avoid detection by the server. Thus, our threat model excludes Sybil identities, client-ID resets, and adversaries that explicitly optimize their normalization-parameter changes to mimic benign signatures, all of which may be interesting directions for future work.



\subsection{Federated Training and Client Updates}
\label{subsec:fed_round_local_training}
Algorithm~\ref{alg:FedLNS_main} presents the federated training procedure with $K$ clients indexed by
\[
\mathcal{C}=\{1,\dots,K\}.
\]
At communication round $t$, the server samples a participating-client set
$\mathcal{S}_t\subseteq\mathcal{C}$ and broadcasts the current global model parameters $\mathbf{w}_t\in\mathbb{R}^{P}$ to the selected clients (Algorithm~\ref{alg:FedLNS_main},
lines~\ref{algline:main_sample}--\ref{algline:main_broadcast}), where
$N_t:=|\mathcal{S}_t|$.

Client $i$ trains on its local dataset $\mathcal{D}_i$ to minimize the empirical objective
\begin{equation}
    F_i(\mathbf{w})
    :=
    \frac{1}{|\mathcal{D}_i|}
    \sum_{\xi\in\mathcal{D}_i}
    \ell(\mathbf{w};\xi),
\end{equation}
where $\ell(\cdot)$ denotes the language-modeling loss function, e.g., the next-token prediction loss for causal language modeling or the masked-token prediction loss for masked language modeling~\cite{liu2022towards}. Here, $\xi$ represents an individual training instance sampled from $\mathcal{D}_i$, such as a text sequence together with its associated supervision signal.


Client $i$ initializes its local model from $\mathbf{w}_t$ (Algorithm~\ref{alg:FedLNS_client_train}, line~\ref{algline:client_init}), performs $E$ local epochs using mini-batch optimization (lines~\ref{algline:client_epoch_start}--\ref{algline:client_epoch_end}), and returns the locally trained parameters and processed-example count (line~\ref{algline:client_return}):
\begin{equation}
    \left(
        \mathbf{w}_{t,i},
        n_{t,i}
    \right)
    =
    \operatorname{ClientLocalTrain}
    \left(
        \mathbf{w}_t,
        \mathcal{D}_i,
        E
    \right).
\end{equation}  
Upon completing local training, client $i$ returns the updated model parameters $\mathbf{w}_{t,i}$ together with $n_{t,i}$, the total number of training-example occurrences processed across the $E$ local epochs. The corresponding client update is
\begin{equation}
    \Delta\mathbf{w}_{t,i}
    :=
    \mathbf{w}_{t,i}
    -
    \mathbf{w}_t.
    \label{eq:FedLNS_delta_theta}
\end{equation}

Because normalization-layer signatures are extracted at the server from the returned local models, clients follow the standard FL training and upload procedure without transmitting defense-specific parameters or metadata.
All signature extraction, bank maintenance, and GMM-based screening are performed by the server.

\begin{algorithm}[t]
\caption{\FedLNS{} communication-round driver}
\label{alg:FedLNS_main}
\small
\begin{algorithmic}[1]
\Require Initial global model parameters $\mathbf{w}_0$,
client set $\mathcal{C}$, total rounds $T$, local epochs $E$,
bank activation threshold $M_{\mathrm{act}}$
\Ensure Final global model parameters $\mathbf{w}_T$

\State Initialize server-side history bank
$\mathcal{B}_{-1}=\emptyset$ and client-ID set
$\mathcal{H}_{-1}=\emptyset$
\alglabel{algline:main_init}

\For{$t=0,1,\dots,T-1$}
    \State Sample participating clients
    $\mathcal{S}_t\subseteq\mathcal{C}$
    \alglabel{algline:main_sample}

    \State Broadcast $\mathbf{w}_t$ to every client
    $i\in\mathcal{S}_t$
    \alglabel{algline:main_broadcast}

    \For{each client $i\in\mathcal{S}_t$ in parallel}
        \State
        $(\mathbf{w}_{t,i},n_{t,i})
        \leftarrow
        \operatorname{ClientLocalTrain}
        (\mathbf{w}_t,\mathcal{D}_i,E)$
        \alglabel{algline:main_client_call}
    \EndFor

    \State
    $(\mathbf{w}_{t+1},\mathcal{R}_t,\mathcal{B}_t,\mathcal{H}_t)
    \leftarrow \newline
    \operatorname{ServerFedLNS}
    \left(
        \mathbf{w}_t,
        \{(\mathbf{w}_{t,i},n_{t,i})\}_{i\in\mathcal{S}_t},
        \mathcal{B}_{t-1},
        \mathcal{H}_{t-1},
        M_{\mathrm{act}}
    \right)$
    \alglabel{algline:main_server_call}
\EndFor

\State \Return $\mathbf{w}_T$
\alglabel{algline:main_return}
\end{algorithmic}
\end{algorithm}

\begin{algorithm}[t]
\caption{
ClientLocalTrain$\left(\mathbf{w}_t,\mathcal{D}_i,E\right)$
}
\label{alg:FedLNS_client_train}
\small
\begin{algorithmic}[1]
\Require Global model parameters $\mathbf{w}_t$,
local dataset $\mathcal{D}_i$, local epochs $E$
\Ensure Local model parameters $\mathbf{w}_{t,i}$,
processed-example count $n_{t,i}$

\State Initialize local model parameters
$\mathbf{w}_{t,i}\leftarrow\mathbf{w}_t$
\alglabel{algline:client_init}

\State Set $n_{t,i}\leftarrow 0$
\alglabel{algline:client_count_init}

\For{local epoch $e=1,\dots,E$}
    \alglabel{algline:client_epoch_start}

    \For{mini-batch
    $\mathcal{M}\in\operatorname{MiniBatchLoader}(\mathcal{D}_i)$}

        \State Set $\mathcal{L}_{\mathcal{M}}\leftarrow 0$

        \For{each training instance $\xi\in\mathcal{M}$}
            \State
            $\mathcal{L}_{\mathcal{M}}
            \leftarrow
            \mathcal{L}_{\mathcal{M}}
            +
            \ell(\mathbf{w}_{t,i};\xi)$

            \State
            $n_{t,i}\leftarrow n_{t,i}+1$
            \alglabel{algline:client_count_update}
        \EndFor

        \State
        $\mathcal{L}_{\mathcal{M}}
        \leftarrow
        \mathcal{L}_{\mathcal{M}}/|\mathcal{M}|$

        \State
        $\mathbf{w}_{t,i}
        \leftarrow
        \operatorname{OptimizerStep}
        \left(
            \mathbf{w}_{t,i},
            \nabla_{\mathbf{w}}
            \mathcal{L}_{\mathcal{M}}(\mathbf{w}_{t,i})
        \right)$
        \alglabel{algline:client_opt_step}
    \EndFor
\EndFor
\alglabel{algline:client_epoch_end}

\State \Return $(\mathbf{w}_{t,i},n_{t,i})$
\alglabel{algline:client_return}
\end{algorithmic}
\end{algorithm}

After the participating clients return their locally trained models (Algorithm~\ref{alg:FedLNS_main}, line~\ref{algline:main_client_call}), the server invokes $\operatorname{ServerFedLNS}$ (line~\ref{algline:main_server_call}), described in Algorithm~\ref{alg:FedLNS_server_module}, to screen the returned updates before global aggregation.

\subsection{Server-Side Normalization-Signature Extraction}
\label{subsec:server_signature_extraction}

Algorithm~\ref{alg:FedLNS_server_module} computes each complete client update and extracts its layer-wise normalization signature (lines~\ref{algline:server_delta}--\ref{algline:server_extract}).
Let $\mathcal{L}$ denote the ordered set of normalization layers whose trainable parameters are included in the \FedLNS{} signature, and let $d_\ell$ denote the number of included parameters associated with layer $\ell\in\mathcal{L}$.
We define
\[
    \mathbf{p}_{\ell}(\mathbf{w})
    \in
    \mathbb{R}^{d_\ell}
\]
as the vector formed by concatenating all trainable normalization parameters of layer $\ell$ under model parameters $\mathbf{w}$. For a standard LayerNorm layer, $\mathbf{p}_{\ell}(\mathbf{w})$ consists of the trainable scaling and bias parameters. For a bias-free normalization mechanism such as RMSNorm, the vector contains only the trainable scaling parameters.

For selected client $i\in\mathcal{S}_t$, the layer-wise normalization signature is defined as
\begin{equation}
    \mathbf{s}_{t,i,\ell}
    :=
    \mathbf{p}_{\ell}(\mathbf{w}_{t,i})
    -
    \mathbf{p}_{\ell}(\mathbf{w}_t)
    \in
    \mathbb{R}^{d_\ell}.
    \label{eq:FedLNS_sig_layer}
\end{equation}
Thus, $\mathbf{s}_{t,i,\ell}$ measures how local training at client $i$ changes the normalization parameters of layer $\ell$ relative to the current global model parameters $\mathbf{w}_t$.

The complete signature of client $i$ is the ordered collection
\begin{equation}
    \mathbf{s}_{t,i}
    :=
    \left(
        \mathbf{s}_{t,i,\ell}
    \right)_{\ell\in\mathcal{L}}.
    \label{eq:FedLNS_sig_struct}
\end{equation}
The signature is computed entirely at the server from the uploaded local models and the current global model.

\begin{algorithm}[t]
\caption{ServerFedLNS: server-side signature screening and aggregation}
\label{alg:FedLNS_server_module}
\small
\begin{algorithmic}[1]
\Require Global model parameters $\mathbf{w}_t$,
returned client models and counts
$\{(\mathbf{w}_{t,i},n_{t,i})\}_{i\in\mathcal{S}_t}$,
previous bank $\mathcal{B}_{t-1}$,
previous ID set $\mathcal{H}_{t-1}$,
bank activation threshold $M_{\mathrm{act}}$

\Ensure Updated global model parameters $\mathbf{w}_{t+1}$,
retained client set $\mathcal{R}_t$,
refreshed bank $\mathcal{B}_t$,
refreshed ID set $\mathcal{H}_t$

\For{each client $i\in\mathcal{S}_t$}
    \State Compute
    $\Delta\mathbf{w}_{t,i}
    =
    \mathbf{w}_{t,i}
    -
    \mathbf{w}_t$
    \alglabel{algline:server_delta}

    \State Extract
    $\mathbf{s}_{t,i,\ell}
    =
    \mathbf{p}_{\ell}(\mathbf{w}_{t,i})
    -
    \mathbf{p}_{\ell}(\mathbf{w}_t)$
    for all $\ell\in\mathcal{L}$
    \alglabel{algline:server_extract}
\EndFor

\State Refresh $\mathcal{B}_t$ and $\mathcal{H}_t$
using
\eqref{eq:FedLNS_bank_index}--\eqref{eq:FedLNS_bank_refresh}
\alglabel{algline:server_bank_refresh}

\State Compute
$N_t^{\mathrm{bank}}=|\mathcal{H}_t|$
\alglabel{algline:server_bank_size}

\If{$N_t^{\mathrm{bank}}<M_{\mathrm{act}}$}
    \alglabel{algline:server_threshold_start}

    \State Set
    $\mathbf{w}_{t+1}=\mathbf{w}_t$
    and
    $\mathcal{R}_t=\emptyset$
    \alglabel{algline:server_threshold_skip}

    \State \Return
    $(\mathbf{w}_{t+1},\mathcal{R}_t,\mathcal{B}_t,\mathcal{H}_t)$
    \alglabel{algline:server_threshold_return}
\EndIf

\For{each normalization layer $\ell\in\mathcal{L}$}
    \alglabel{algline:server_stats_start}

    \State Compute $\mathbf{m}_{t,\ell}$ and $\mathbf{d}_{t,\ell}$
    using
    \eqref{eq:FedLNS_bank_median}--\eqref{eq:FedLNS_bank_mad}
\EndFor
\alglabel{algline:server_stats_end}

\For{each client $i\in\mathcal{S}_t$}
    \State Compute $\mathbf{z}_{t,i}$ using
    \eqref{eq:FedLNS_z_layer}--\eqref{eq:FedLNS_z_full}
    \alglabel{algline:server_feature}

    \State Compute $\delta_{t,i}$ using
    \eqref{eq:FedLNS_client_score}
    \alglabel{algline:server_score}
\EndFor

\State
$\mathcal{R}_t
\leftarrow
\operatorname{BICGMMRetain}
\left(
    \{\mathbf{z}_{t,i},\delta_{t,i}\}_{i\in\mathcal{S}_t}
\right)$
\alglabel{algline:server_gmm_call}

\State Aggregate retained updates using
\eqref{eq:FedLNS_aggregation}
to obtain $\mathbf{w}_{t+1}$
\alglabel{algline:server_agg}

\State \Return
$(\mathbf{w}_{t+1},\mathcal{R}_t,\mathcal{B}_t,\mathcal{H}_t)$
\alglabel{algline:server_return}
\end{algorithmic}
\end{algorithm}

Normalization parameters regulate the scaling and, depending on the architecture, shifting of hidden representations throughout transformer blocks. \FedLNS{} therefore uses changes in these parameters as a compact, architecture-aware screening representation, while the complete returned model updates remain available for aggregation.

\subsection{Server-Side History Bank}
\label{subsec:server_history_bank}

\FedLNS{} maintains a server-side history bank that stores the latest normalization-layer signature associated with each client observed during training. The resulting collection provides a cross-client reference for constructing robust statistics across communication rounds. Prior to processing round $t$, the history bank is defined as
\begin{equation}
    \mathcal{B}_{t-1}
    =
    \left\{
        (j,\mathbf{b}_{t-1,j})
        :
        j\in\mathcal{H}_{t-1}
    \right\},
\end{equation}
where $\mathcal{H}_{t-1}$ denotes the set of client identifiers currently maintained in the bank.

For each client $j$, the stored entry consists of an ordered collection of layer-wise normalization-signature vectors:
\begin{equation}
    \mathbf{b}_{t-1,j}
    =
    \left(
        \mathbf{b}_{t-1,j,\ell}
    \right)_{\ell\in\mathcal{L}},
\end{equation}
where $\mathbf{b}_{t-1,j,\ell}$ represents the latest recorded signature corresponding to normalization layer $\ell$ for client $j$.


After extracting the current signatures, the server replaces the bank entries of participating clients and retains the latest entries of nonparticipating clients
(Algorithm~\ref{alg:FedLNS_server_module}, line~\ref{algline:server_bank_refresh}). The updated client-ID set is
\begin{equation}
    \mathcal{H}_t
    =
    \mathcal{H}_{t-1}
    \cup
    \mathcal{S}_t.
    \label{eq:FedLNS_bank_index}
\end{equation}
The refreshed bank is
\begin{equation}
    \mathcal{B}_{t}
    =
    \left\{
        (j,\mathbf{b}_{t,j})
        :
        j\in\mathcal{H}_{t}
    \right\},
\end{equation}
where
\begin{equation}
    \mathbf{b}_{t,j}
    =
    \begin{cases}
        \mathbf{s}_{t,j},
        & \text{if } j\in\mathcal{S}_t,\\[4pt]
        \mathbf{b}_{t-1,j},
        & \text{if }
        j\in\mathcal{H}_{t-1}
        \text{ and }
        j\notin\mathcal{S}_t.
    \end{cases}
    \label{eq:FedLNS_bank_refresh}
\end{equation}

If a client participates in the current communication round, its previous bank entry is replaced by the newly extracted normalization-layer signature. If the client does not participate, its latest stored signature remains unchanged.
%
As shown in Algorithm~\ref{alg:FedLNS_server_module}, the bank is refreshed before current-round screening. Thus, all current signatures contribute to the refreshed bank, including signatures associated with updates that may subsequently be excluded from aggregation.

The history bank is maintained entirely by the server and is not stored, accessed, or received by participating clients.

After refreshing the bank, the server computes its current coverage and applies the activation rule (Algorithm~\ref{alg:FedLNS_server_module}, lines~\ref{algline:server_bank_size}--\ref{algline:server_threshold_return}).
Screening and global aggregation begin when
\begin{equation}
    N_t^{\mathrm{bank}}
    \ge
    M_{\mathrm{act}},
    \label{eq:FedLNS_bank_threshold}
\end{equation}
where
\[
    N_t^{\mathrm{bank}}
    :=
    |\mathcal{H}_t|
\]
is the number of distinct client identities represented in the history bank. Before this condition is satisfied, the current round is used to collect normalization-layer signatures, and the global model remains unchanged, i.e., all participating clients in $\mathcal{S}_t$ contribute to the global model:
\begin{equation}
    \mathbf{w}_{t+1}
    =
    \mathbf{w}_t.
    \label{eq:FedLNS_skip_update}
\end{equation}


The activation threshold does not cap the bank size. After screening begins, the bank continues to incorporate newly observed client identities and ideally approaches full population coverage.
The bank-coverage ablation shows that broader coverage at activation generally improves the early active-training performance, including monotonic perplexity improvements under the strongest IID attack setting; see Appendix~\ref{app:bank_size_ablation}, particularly Tables~\ref{tab:bank_ablation_all_models_iid_mal40_test_ppl_active10} and~\ref{tab:bank_ablation_all_models_iid_mal40_test_sem_ent_active10}.

Waiting for every client to enter the bank can, however, be impractically slow when participation is intermittent or nonuniform. The threshold $M_{\mathrm{act}}$ therefore provides a practical starting point for screening with a partially populated bank while allowing the bank to continue growing thereafter. Under the uniform random participation model used in the experiments,
Appendix~\ref{app:min_bank_probability} bounds the probability that such a partial bank contains a malicious majority and analyzes the time required to reach a chosen activation coverage. 

\subsection{Layer-Wise Robust Bank Reference}
\label{subsec:robust_bank_reference}

Once the activation condition in \eqref{eq:FedLNS_bank_threshold} is satisfied, the server computes the layer-wise median/MAD reference (Algorithm~\ref{alg:FedLNS_server_module}, lines~\ref{algline:server_stats_start}--\ref{algline:server_stats_end}) and then calculates the standardized representation and scalar deviation score of each current client (lines~\ref{algline:server_feature}--\ref{algline:server_score}).
These quantities form the input to the GMM screening rule.

\subsubsection{Robust reference statistics}
For vectors
$\mathbf{u}^{(1)},\dots,\mathbf{u}^{(m)}\in\mathbb{R}^{d}$,
the coordinate-wise median is defined as
\begin{equation}
    \operatorname{med}
    \big(
        \mathbf{u}^{(1)},\dots,\mathbf{u}^{(m)}
    \big)[r]
    :=
    \operatorname{median}
    \big(
        u^{(1)}_r,\dots,u^{(m)}_r
    \big),
    \label{eq:FedLNS_coord_median}
\end{equation}
where $r=1,\dots,d$ denotes the coordinate index. Compared with the arithmetic mean, the coordinate-wise median reduces the influence of extreme values and provides a robust estimate of the population center~\cite{kumar2023impact}.

For each normalization layer $\ell\in\mathcal{L}$, the server gathers the corresponding normalization-layer signatures from all clients currently stored in the history bank, $\{\mathbf{b}_{t,j,\ell}\}_{j\in\mathcal{H}_t}$, and computes their coordinate-wise median: 
\begin{equation}
    \mathbf{m}_{t,\ell}
    :=
    \operatorname{med}
    \left(
        \left\{
            \mathbf{b}_{t,j,\ell}
        \right\}_{j\in\mathcal{H}_t}
    \right)
    \in
    \mathbb{R}^{d_\ell}.
    \label{eq:FedLNS_bank_median}
\end{equation}
The vector $\mathbf{m}_{t,\ell}$ serves as a robust estimate of the central tendency of the normalization-layer signatures at layer $\ell$.

To quantify the dispersion of the client population around the center, \FedLNS{} further computes the coordinate-wise MAD:
\begin{equation}
    \widetilde{\mathbf{d}}_{t,\ell}
    :=
    \operatorname{med}
    \left(
        \left\{
            \left|
                \mathbf{b}_{t,j,\ell}
                -
                \mathbf{m}_{t,\ell}
            \right|
        \right\}_{j\in\mathcal{H}_t}
    \right)
    \in
    \mathbb{R}^{d_\ell},
    \label{eq:FedLNS_bank_mad_raw}
\end{equation}
where the absolute value is applied element-wise. Compared with variance-based measures, MAD provides a robust estimate of scale in the presence of outliers and adversarially manipulated updates~\cite{xu2019data}.

To prevent numerical instability caused by near-zero dispersion values, each coordinate of the MAD vector is lower-bounded by a small positive constant $\epsilon>0$, yielding
\begin{equation}
    \mathbf{d}_{t,\ell}[r]
    :=
    \max
    \left\{
        \widetilde{\mathbf{d}}_{t,\ell}[r],
        \epsilon
    \right\},
    \qquad
    r=1,\dots,d_\ell.
    \label{eq:FedLNS_bank_mad}
\end{equation}
This regularization avoids division-by-zero issues and prevents excessively large standardized deviations arising from extremely small scale estimates. The resulting pair $(\mathbf{m}_{t,\ell},\mathbf{d}_{t,\ell})$ constitutes the robust center and scale reference for normalization layer $\ell$.


\subsubsection{Client deviation scores} After constructing the robust layer-wise reference statistics, \FedLNS{} projects each current client signature into a bank-standardized feature space, as shown in Algorithm~\ref{alg:FedLNS_server_module}. This transformation quantifies the \emph{deviation} of a client's normalization-layer signature from the historical reference while placing coordinates from different normalization layers on comparable robust scales within the current screening round. 

For client $i\in\mathcal{S}_t$ and normalization layer $\ell\in\mathcal{L}$, the standardized signature is defined as
\begin{equation}
    \mathbf{z}_{t,i,\ell}
    :=
    \frac{
        \mathbf{s}_{t,i,\ell}
        -
        \mathbf{m}_{t,\ell}
    }{
        \mathbf{d}_{t,\ell}
    }
    \in
    \mathbb{R}^{d_\ell},
    \label{eq:FedLNS_z_layer}
\end{equation}
where subtraction and division are performed element-wise. Here, $\mathbf{s}_{t,i,\ell}$ denotes the normalization-layer signature of client $i$, while $\mathbf{m}_{t,\ell}$ and $\mathbf{d}_{t,\ell}$ are the robust median and MAD reference vectors derived from the history bank.

The complete standardized representation of client $i$ is obtained by concatenating the standardized layer-wise signatures according to a fixed layer ordering:
\begin{equation}
    \mathbf{z}_{t,i}
    :=
    \operatorname{concat}_{\ell\in\mathcal{L}}
    \left(
        \mathbf{z}_{t,i,\ell}
    \right)
    \in
    \mathbb{R}^{D},
    \label{eq:FedLNS_z_full}
\end{equation}
where the dimensionality of the resulting feature vector is
\begin{equation}
    D
    :=
    \sum_{\ell\in\mathcal{L}}
    d_\ell.
    \label{eq:FedLNS_feature_dim}
\end{equation}

To obtain an overall measure of deviation from the historical population, \FedLNS{} computes a scalar deviation score for each participating client:
\begin{equation}
    \delta_{t,i}
    :=
    \operatorname{median}
    \left(
        |\mathbf{z}_{t,i}[1]|,\dots,
        |\mathbf{z}_{t,i}[D]|
    \right).
    \label{eq:FedLNS_client_score}
\end{equation}
%
The score $\delta_{t,i}$ is the median absolute standardized deviation across the coordinates of the client's feature vector. A smaller value indicates that the client's normalization-layer signature is more consistent with the historical reference, whereas a larger value indicates atypical behavior across a substantial portion of the standardized feature space.
The subsequent GMM stage identifies the population with lower deviation from the bank reference under the assumption that benign signatures form the dominant reference population.

\subsection{BIC-Guided One-vs-Two GMM Screening}
\label{subsec:bic_gmm_filtering}

Algorithm~\ref{alg:FedLNS_gmm_module} fits one- and two-component diagonal GMMs (Gaussian mixture models) to clients' standardized signatures and selects the model order using BIC (lines~\ref{algline:gmm_fit}--\ref{algline:gmm_select}). If two components are selected, the algorithm assigns component labels and retains the component with the smaller median deviation from the bank reference (lines~\ref{algline:gmm_label}--\ref{algline:gmm_return_cluster}).
%
If the client population is represented by one component, all participating clients are assumed to be benign and retained. Otherwise, the clients are partitioned into two groups, and the group with the smaller overall deviation from the historical normalization-signature reference is selected for aggregation. This rule avoids forcing a two-component partition when a one-component model is selected (e.g., if no malicious clients are present in this training round) and requires neither a predefined malicious-client fraction nor a manually specified anomaly threshold.

In Algorithm~\ref{alg:FedLNS_gmm_module}, the current-round feature matrix is
\begin{equation}
    \mathbf{Z}_t
    :=
    \left[
        \mathbf{z}_{t,i}^{\top}
    \right]_{i\in\mathcal{S}_t}
    \in
    \mathbb{R}^{N_t\times D}.
    \label{eq:FedLNS_feature_matrix}
\end{equation}

For each candidate number of mixture components $c\in\{1,2\}$,
\FedLNS{} fits a diagonal-covariance GMM with covariance regularization
$10^{-4}$:
\begin{equation}
    f_c(\mathbf{z};\Theta_c)
    =
    \sum_{k=1}^{c}
    \pi_{c,k}
    \mathcal{N}
    \left(
        \mathbf{z};
        \boldsymbol{\mu}_{c,k},
        \operatorname{Diag}
        \left(
            \boldsymbol{\sigma}_{c,k}^{2}
        \right)
    \right),
    \label{eq:FedLNS_gmm_density}
\end{equation}
where
\begin{equation}
    \Theta_c
    =
    \left\{
        \pi_{c,k},
        \boldsymbol{\mu}_{c,k},
        \boldsymbol{\sigma}_{c,k}^{2}
    \right\}_{k=1}^{c},
    \quad
    \pi_{c,k}>0,
    \quad
    \sum_{k=1}^{c}\pi_{c,k}=1.
\end{equation}
The fitted parameter set is
\begin{equation}
    \widehat{\Theta}_c
    :=
    \arg\max_{\Theta_c}
    \sum_{i\in\mathcal{S}_t}
    \log
    f_c(\mathbf{z}_{t,i};\Theta_c).
    \label{eq:FedLNS_mle}
\end{equation}
The BIC score for each candidate model order $c$ is
\begin{equation}
    \mathrm{BIC}(c)
    =
    -2
    \sum_{i\in\mathcal{S}_t}
    \log
    f_c(\mathbf{z}_{t,i};\widehat{\Theta}_c)
    +
    p_c\log N_t,
    \label{eq:FedLNS_bic}
\end{equation}
where the number of free parameters in a diagonal
$c$-component GMM is
\begin{equation}
    p_c
    =
    (c-1)+2cD.
    \label{eq:FedLNS_num_params}
\end{equation}

\begin{algorithm}[h!]
\caption{BICGMMRetain: adaptive one-vs-two GMM retain rule}
\label{alg:FedLNS_gmm_module}
\small
\begin{algorithmic}[1]
\Require Standardized features
$\{\mathbf{z}_{t,i}\}_{i\in\mathcal{S}_t}$,
deviation scores
$\{\delta_{t,i}\}_{i\in\mathcal{S}_t}$
\Ensure Retained client set $\mathcal{R}_t$

\State Form feature matrix
$\mathbf{Z}_t
=
[\mathbf{z}_{t,i}^{\top}]_{i\in\mathcal{S}_t}$
\alglabel{algline:gmm_matrix}

\State Fit one- and two-component diagonal GMMs to $\mathbf{Z}_t$
\alglabel{algline:gmm_fit}

\State Compute $\mathrm{BIC}(1)$ and $\mathrm{BIC}(2)$ using
\eqref{eq:FedLNS_bic}
\alglabel{algline:gmm_bic}

\State Set
$\widehat{c}_t
=
\arg\min_{c\in\{1,2\}}
\mathrm{BIC}(c)$
\alglabel{algline:gmm_select}

\If{$\widehat{c}_t=1$}
    \State \Return $\mathcal{R}_t=\mathcal{S}_t$
    \alglabel{algline:gmm_keep_all}
\Else
    \State Assign component labels $y_{t,i}$ using
    \eqref{eq:FedLNS_label}
    \alglabel{algline:gmm_label}

    \For{$k\in\{1,2\}$}
        \alglabel{algline:gmm_cluster_score_start}
        \State Compute $\rho_{t,k}$ using
        \eqref{eq:FedLNS_cluster_score}
    \EndFor
    \alglabel{algline:gmm_cluster_score_end}

    \State Set retained component
    $k_t^{\mathrm{ret}}
    =
    \arg\min_{k\in\{1,2\}}
    \rho_{t,k}$
    \alglabel{algline:gmm_select_cluster}

    \State \Return
    $\mathcal{R}_t
    =
    \{
        i\in\mathcal{S}_t:
        y_{t,i}=k_t^{\mathrm{ret}}
    \}$
    \alglabel{algline:gmm_return_cluster}
\EndIf
\end{algorithmic}
\end{algorithm}

\FedLNS{} first selects the number of mixture components
$\widehat{c}_t$ as
\begin{equation}
    \widehat{c}_t
    :=
    \arg\min_{c\in\{1,2\}}
    \mathrm{BIC}(c).
    \label{eq:FedLNS_choose_c}
\end{equation}
If $\widehat{c}_t=1$, all participating clients are retained:
\begin{equation}
    \mathcal{R}_t
    =
    \mathcal{S}_t.
    \label{eq:FedLNS_keep_unimodal}
\end{equation}
If $\widehat{c}_t=2$, each client $i$ is assigned to a component:
\begin{equation}
    y_{t,i}
    :=
    \arg\max_{k\in\{1,2\}}
    \widehat{\pi}_{2,k}
    \mathcal{N}
    \left(
        \mathbf{z}_{t,i};
        \widehat{\boldsymbol{\mu}}_{2,k},
        \operatorname{Diag}
        \left(
            \widehat{\boldsymbol{\sigma}}_{2,k}^{2}
        \right)
    \right).
    \label{eq:FedLNS_label}
\end{equation}
The median deviation score of component $k$ is
\begin{equation}
    \rho_{t,k}
    :=
    \operatorname{median}
    \left(
        \left\{
            \delta_{t,i}
            :
            i\in\mathcal{S}_t,
            \ y_{t,i}=k
        \right\}
    \right),
    \quad
    k\in\{1,2\}.
    \label{eq:FedLNS_cluster_score}
\end{equation}
The retained component is
\begin{equation}
    k_t^{\mathrm{ret}}
    :=
    \arg\min_{k\in\{1,2\}}
    \rho_{t,k}.
    \label{eq:FedLNS_keep_cluster}
\end{equation}
The retained client set is
\begin{equation}
    \mathcal{R}_t
    =
    \left\{
        i\in\mathcal{S}_t:
        y_{t,i}=k_t^{\mathrm{ret}}
    \right\}.
    \label{eq:FedLNS_retained_set}
\end{equation}

After the retained set $\mathcal{R}_t$ is determined, the server aggregates the complete retained updates (Algorithm~\ref{alg:FedLNS_server_module}, line~\ref{algline:server_agg}). For client $i\in\mathcal{R}_t$, the sample-weighted aggregation coefficient is
\begin{equation}
    \alpha_{t,i}
    :=
    \frac{
        n_{t,i}
    }{
        \sum_{j\in\mathcal{R}_t}
        n_{t,j}
    }.
    \label{eq:FedLNS_weight}
\end{equation}
The next global model is
\begin{equation}
    \mathbf{w}_{t+1}
    =
    \mathbf{w}_t
    +
    \sum_{i\in\mathcal{R}_t}
    \alpha_{t,i}
    \Delta\mathbf{w}_{t,i}.
    \label{eq:FedLNS_aggregation}
\end{equation}
\FedLNS{} therefore acts as a server-side screening stage before full-model aggregation. The retained updates can be passed to the sample-weighted rule above or to another compatible FL aggregation rule.


\section{Theoretical Analysis}
\label{sec_theory}

This section analyzes two aspects of \FedLNS{}.
First, we examine how changes in trainable normalization-layer parameters
contribute to changes in token preferences.
Second, we study how screening suspicious clients affects optimization of the benign global objective.

\subsection{Influence of Normalization-Parameter Changes on Token Margins}
\label{subsec:theory_ln_margin}
For an input sequence $x=(x_1,\dots,x_m)$, a language model with parameters $\mathbf{w}$ produces a logit vector
\begin{equation}
    \mathbf{z}(x;\mathbf{w})
    \in
    \mathbb{R}^{|\mathcal{V}|},
\end{equation}
where $\mathcal{V}$ is the token vocabulary. The probability assigned to token $y\in\mathcal{V}$ is
\begin{equation}
    \pi_{\mathbf{w}}(y\mid x)
    =
    \frac{
        \exp(z_y(x;\mathbf{w}))
    }{
        \sum_{v\in\mathcal{V}}
        \exp(z_v(x;\mathbf{w}))
    }.
    \label{eq:softmax_theory}
\end{equation}

For two candidate tokens $u,v\in\mathcal{V}$, define the pairwise token margin
\begin{equation}
    m_{u,v}(x;\mathbf{w})
    :=
    z_u(x;\mathbf{w})
    -
    z_v(x;\mathbf{w}).
    \label{eq:theory_pairwise_margin}
\end{equation}
The margin measures the model's relative preference for token $u$ over token $v$ and satisfies
\begin{equation}
    m_{u,v}(x;\mathbf{w})
    =
    \log
    \frac{
        \pi_{\mathbf{w}}(u\mid x)
    }{
        \pi_{\mathbf{w}}(v\mid x)
    }.
    \label{eq:margin_log_ratio}
\end{equation}
Thus, a client update that changes the pairwise margin also changes the probability ratio assigned to the two tokens.

Partition the model parameters as
\begin{equation}
    \mathbf{w}
    =
    (\mathbf{p},\mathbf{q}),
\end{equation}
where $\mathbf{p}$ contains the trainable normalization-layer parameters included in the \FedLNS{} signature and $\mathbf{q}$ contains the remaining model parameters.

For client $i$ at communication round $t$, write the local update as
\begin{equation}
    \Delta\mathbf{w}_{t,i}
    =
    (\Delta\mathbf{p}_{t,i},\Delta\mathbf{q}_{t,i}).
\end{equation}
Here, $\Delta\mathbf{p}_{t,i}$ is the concatenated normalization-parameter component corresponding to the normalization-layer signature used by \FedLNS{}.

Because a client update affects the model across inputs, we define its prompt-averaged pairwise-margin change as
\begin{equation}
    \Delta\bar{m}_{t,i}(u,v)
    :=
    \mathbb{E}_{x\sim\mathcal{X}}
    \left[
        m_{u,v}
        \left(
            x;
            \mathbf{w}_t+\Delta\mathbf{w}_{t,i}
        \right)
        -
        m_{u,v}(x;\mathbf{w}_t)
    \right].
    \label{eq:expected_margin_shift_theory}
\end{equation}
Here, $m_{u,v}(x;\mathbf{w})$ denotes the margin for an individual prompt, whereas $\Delta\bar{m}_{t,i}(u,v)$ denotes the change in that margin averaged over the prompt distribution $\mathcal{X}$.

The following proposition identifies the direct first-order contribution of the normalization-parameter component to the expected token-margin change.

\begin{proposition}
[Normalization-parameter contribution to token-margin change]
\label{prop:ln_margin_main}
Suppose
$m_{u,v}(x;\mathbf{w})$
is twice continuously differentiable in a neighborhood of $\mathbf{w}_t$,
and its Hessian is uniformly bounded in operator norm by $L_m$.
Then
\begin{equation}
    \Delta\bar{m}_{t,i}(u,v)
    =
    \mathbf{g}_{p,u,v}^{\top}
    \Delta\mathbf{p}_{t,i}
    +
    \mathbf{g}_{q,u,v}^{\top}
    \Delta\mathbf{q}_{t,i}
    +
    R_{t,i}(u,v),
    \label{eq:ln_margin_decomposition_main}
\end{equation}
where
\begin{equation}
\begin{aligned}
    \mathbf{g}_{p,u,v}
    &:=
    \mathbb{E}_{x\sim\mathcal{X}}
    \left[
        \nabla_{\mathbf{p}}
        m_{u,v}(x;\mathbf{w}_t)
    \right],\\
    \mathbf{g}_{q,u,v}
    &:=
    \mathbb{E}_{x\sim\mathcal{X}}
    \left[
        \nabla_{\mathbf{q}}
        m_{u,v}(x;\mathbf{w}_t)
    \right],
\end{aligned}
\end{equation}
and the prompt-averaged remainder satisfies
\begin{equation}
    |R_{t,i}(u,v)|
    \le
    \frac{L_m}{2}
    \left\|
        \Delta\mathbf{w}_{t,i}
    \right\|_2^2.
    \label{eq:ln_margin_remainder_main}
\end{equation}
\end{proposition}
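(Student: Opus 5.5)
The plan is a second-order Taylor expansion with integral remainder, applied pointwise in the prompt $x$ and then averaged over $x\sim\mathcal{X}$.

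\textbf{Pointwise expansion.} Fix a prompt $x$ and the token pair $(u,v)$. Define the scalar path function
\[
\phi_x(s):=m_{u,v}(x;\mathbf{w}_t+s\Delta\mathbf{w}_{t,i}),\qquad s\in[0,1].
\]
We need the segment $\{\mathbf{w}_t+s\Delta\mathbf{w}_{t,i}\}$ to lie in the neighborhood where the hypotheses hold. I will read the statement this way, with the neighborhood containing the local update and the bound $L_m$ uniform in $x$. By the $C^2$ assumption, $\phi_x$ is twice continuously differentiable, so Taylor's theorem with integral remainder gives
\[
\phi_x(1)-\phi_x(0)=\phi_x'(0)+\int_0^1(1-s)\,\phi_x''(s)\,ds .
\]
Here $\phi_x'(0)=\nabla_{\mathbf{w}}m_{u,v}(x;\mathbf{w}_t)^\top\Delta\mathbf{w}_{t,i}$ and $\phi_x''(s)=\Delta\mathbf{w}_{t,i}^\top\nabla^2_{\mathbf{w}}m_{u,v}(x;\mathbf{w}_t+s\Delta\mathbf{w}_{t,i})\Delta\mathbf{w}_{t,i}$.

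\textbf{Splitting the linear term.} Use the partition $\mathbf{w}=(\mathbf{p},\mathbf{q})$ and its induced block structure $\nabla_{\mathbf{w}}=(\nabla_{\mathbf{p}},\nabla_{\mathbf{q}})$. The linear term then splits exactly as
\[
\nabla_{\mathbf{p}}m_{u,v}(x;\mathbf{w}_t)^\top\Delta\mathbf{p}_{t,i}+\nabla_{\mathbf{q}}m_{u,v}(x;\mathbf{w}_t)^\top\Delta\mathbf{q}_{t,i}.
\]
Define the pointwise remainder $r_x:=\int_0^1(1-s)\phi_x''(s)\,ds$. By Cauchy--Schwarz and the operator-norm bound on the Hessian, $|\phi_x''(s)|\le L_m\|\Delta\mathbf{w}_{t,i}\|_2^2$. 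Since $\int_0^1(1-s)\,ds=\tfrac12$, this gives $|r_x|\le \tfrac{L_m}{2}\|\Delta\mathbf{w}_{t,i}\|_2^2$, a bound that does not depend on $x$.

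\textbf{Averaging over prompts.} Take expectation over $x\sim\mathcal{X}$. The update $\Delta\mathbf{w}_{t,i}$ is deterministic with respect to $x$, so linearity of expectation pulls $\Delta\mathbf{p}_{t,i}$ and $\Delta\mathbf{q}_{t,i}$ outside the expectations. This produces $\mathbf{g}_{p,u,v}^\top\Delta\mathbf{p}_{t,i}+\mathbf{g}_{q,u,v}^\top\Delta\mathbf{q}_{t,i}$ in \eqref{eq:ln_margin_decomposition_main}, with $R_{t,i}(u,v):=\mathbb{E}_x[r_x]$. Jensen's inequality, $|\mathbb{E}r_x|\le\mathbb{E}|r_x|$, combined with the uniform pointwise bound, then yields \eqref{eq:ln_margin_remainder_main}.

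\textbf{Main obstacle.} The only delicate point is justifying the expectations: integrability of $m_{u,v}$, of its gradients at $\mathbf{w}_t$, and of $r_x$ under $\mathcal{X}$. I will handle this by noting two facts. First, $r_x$ is bounded uniformly in $x$, so it is integrable once the margin difference is. Second, $\mathbf{g}_{p,u,v}$ and $\mathbf{g}_{q,u,v}$ are assumed well defined, which is implicit in the statement; for a finite or compactly supported prompt distribution this is automatic. Measurability in $x$ follows from the continuity of $m_{u,v}$ in its arguments.
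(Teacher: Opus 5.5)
Your proposal is correct and follows essentially the same route as the paper: a pointwise Taylor expansion with integral remainder along $s\mapsto \mathbf{w}_t+s\Delta\mathbf{w}_{t,i}$, the block split of the gradient into its $\mathbf{p}$ and $\mathbf{q}$ parts, pulling the prompt-independent update out of the expectation, and bounding the averaged remainder via $|\mathbb{E}_x r_x|\le \mathbb{E}_x|r_x|\le \tfrac{L_m}{2}\|\Delta\mathbf{w}_{t,i}\|_2^2$. The paper leaves two points implicit that you state explicitly: the segment must lie in the neighborhood where the Hessian bound holds, and the expectations must be well defined.
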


Proposition~\ref{prop:ln_margin_main} shows that the normalization-parameter component contributes directly to the first-order change in token margins through
\begin{equation}
    \mathbf{g}_{p,u,v}^{\top}
    \Delta\mathbf{p}_{t,i}.
\end{equation}
Consequently, normalization-parameter changes contribute directly to first-order changes in relative token preferences. 

\subsection{Effect of Screening on the Aggregated Update}
\label{subsec:theory_convergence_main}

We next analyze the effect of excluding suspicious clients before
aggregation. Let $\mathcal{C}_{\mathrm{ben}}$ denote the set of benign clients. The benign global objective is
\begin{equation}
    F(\mathbf{w})
    :=
    \sum_{i\in\mathcal{C}_{\mathrm{ben}}}
    \lambda_i F_i(\mathbf{w}),
    \qquad
    \lambda_i\geq 0,
    \quad
    \sum_{i\in\mathcal{C}_{\mathrm{ben}}}
    \lambda_i=1,
    \label{eq:benign_objective_main}
\end{equation}
where $F_i$ is the local objective of benign client $i$.

At round $t$, \FedLNS{} returns a retained client set $\mathcal{R}_t\subseteq\mathcal{S}_t$.
For retained client $i$, define the normalized aggregation coefficient
\begin{equation}
    \alpha_{t,i}
    =
    \frac{
        n_{t,i}
    }{
        \sum_{j\in\mathcal{R}_t}
        n_{t,j}
    },
    \qquad
    i\in\mathcal{R}_t.
\end{equation}
The screened update is
\begin{equation}
    \mathbf{w}_{t+1}
    =
    \mathbf{w}_t
    +
    \sum_{i\in\mathcal{R}_t}
    \alpha_{t,i}
    \Delta\mathbf{w}_{t,i}.
    \label{eq:filtered_update_main}
\end{equation}

For analysis, write the same update as
\begin{equation}
    \mathbf{w}_{t+1}
    =
    \mathbf{w}_t
    -
    \eta_t\mathbf{g}_t,
    \label{eq:abstract_filtered_update_main}
\end{equation}
where $\eta_t>0$ is an effective step size and $\mathbf{g}_t$ is the effective retained update direction.

Define the screening error
\begin{equation}
    \mathbf{e}_t
    :=
    \mathbb{E}
    \left[
        \mathbf{g}_t
        \mid
        \mathcal{F}_t
    \right]
    -
    \nabla F(\mathbf{w}_t),
    \label{eq:filtering_error_main}
\end{equation}
where $\mathcal{F}_t$ contains the information available before the round-$t$ screening, i.e., the vector $\mathbf{e}_t$ measures the difference between the expected retained update direction and the gradient of the benign global objective.

\begin{theorem}[Finite-time bound under screened aggregation]
\label{thm:filtered_convergence_main}
Assume that $F$ is $L_F$-smooth and lower bounded by $F_{\inf}$.
Suppose that, for every round $t$, there exist nonnegative quantities $\beta_t$ and $\nu_t^2$ such that
\begin{equation}
    \|\mathbf{e}_t\|_2
    \le
    \beta_t,
    \qquad
    \beta_t\geq 0,
    \label{eq:filtering_error_bound_main}
\end{equation}
and
\begin{equation}
    \mathbb{E}
    \left[
        \left\|
            \mathbf{g}_t
            -
            \mathbb{E}
            [
                \mathbf{g}_t
                \mid
                \mathcal{F}_t
            ]
        \right\|_2^2
        \,\middle|\,
        \mathcal{F}_t
    \right]
    \le
    \nu_t^2.
    \label{eq:variance_bound_main}
\end{equation}
If
$0\leq\eta_t\leq 1/(4L_F)$,
then for any $T\geq 1$ with
\begin{equation}
    S_T
    :=
    \sum_{t=0}^{T-1}
    \eta_t
    >
    0,
\end{equation}
we have
\begin{equation}
\begin{aligned}
    &
    \frac{
        \sum_{t=0}^{T-1}
        \eta_t
        \mathbb{E}
        \left[
            \|\nabla F(\mathbf{w}_t)\|_2^2
        \right]
    }{
        S_T
    }
    \le
    \frac{
        2
        \left(
            F(\mathbf{w}_0)-F_{\inf}
        \right)
    }{
        S_T
    }
    \\
    &\quad
    +
    \frac{
        2
        \sum_{t=0}^{T-1}
        \left(
            \eta_t
            +
            L_F\eta_t^2
        \right)
        \mathbb{E}
        [
            \beta_t^2
        ]
    }{
        S_T
    }
    +
    \frac{
        L_F
        \sum_{t=0}^{T-1}
        \eta_t^2
        \mathbb{E}
        [
            \nu_t^2
        ]
    }{
        S_T
    }.
\end{aligned}
\label{eq:filtered_convergence_bound_main}
\end{equation}
\end{theorem}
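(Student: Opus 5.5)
The plan is the standard biased-SGD descent argument: apply the $L_F$-smoothness descent lemma to the abstract update \eqref{eq:abstract_filtered_update_main}, take conditional expectations given $\mathcal{F}_t$, and telescope. Throughout, $\eta_t$ is treated as $\mathcal{F}_t$-measurable; this holds in particular if it is deterministic. Write $\mathbf{a}_t:=\nabla F(\mathbf{w}_t)$ and $\mathbf{m}_t:=\mathbb{E}[\mathbf{g}_t\mid\mathcal{F}_t]=\mathbf{a}_t+\mathbf{e}_t$ by \eqref{eq:filtering_error_main}. Smoothness gives
\[
F(\mathbf{w}_{t+1})\le F(\mathbf{w}_t)-\eta_t\langle \mathbf{a}_t,\mathbf{g}_t\rangle+\tfrac{L_F\eta_t^2}{2}\|\mathbf{g}_t\|_2^2 .
\]
Conditioning on $\mathcal{F}_t$ and using the bias--variance split $\mathbb{E}[\|\mathbf{g}_t\|_2^2\mid\mathcal{F}_t]=\|\mathbf{m}_t\|_2^2+\mathbb{E}[\|\mathbf{g}_t-\mathbf{m}_t\|_2^2\mid\mathcal{F}_t]$ together with \eqref{eq:variance_bound_main} yields
\[
\mathbb{E}[F(\mathbf{w}_{t+1})\mid\mathcal{F}_t]\le F(\mathbf{w}_t)-\eta_t\langle\mathbf{a}_t,\mathbf{m}_t\rangle+\tfrac{L_F\eta_t^2}{2}\|\mathbf{m}_t\|_2^2+\tfrac{L_F\eta_t^2}{2}\nu_t^2 .
\]

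The key step is handling the cross term. Crude Young-type bounds lose constants, so I would use the polarization identity instead. Since $\mathbf{a}_t-\mathbf{m}_t=-\mathbf{e}_t$,
\[
-\langle\mathbf{a}_t,\mathbf{m}_t\rangle=\tfrac12\|\mathbf{e}_t\|_2^2-\tfrac12\|\mathbf{a}_t\|_2^2-\tfrac12\|\mathbf{m}_t\|_2^2 .
\]
Substituting, the $\|\mathbf{m}_t\|_2^2$ terms combine into $-\tfrac{\eta_t}{2}(1-L_F\eta_t)\|\mathbf{m}_t\|_2^2\le 0$, because $\eta_t\le 1/(4L_F)\le 1/L_F$. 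Applying \eqref{eq:filtering_error_bound_main} to the error term then gives the one-step inequality
\[
\mathbb{E}[F(\mathbf{w}_{t+1})\mid\mathcal{F}_t]\le F(\mathbf{w}_t)-\tfrac{\eta_t}{2}\|\mathbf{a}_t\|_2^2+\tfrac{\eta_t}{2}\beta_t^2+\tfrac{L_F\eta_t^2}{2}\nu_t^2 .
\]

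Next I would take total expectations by the tower property, sum over $t=0,\dots,T-1$, telescope, and use $\mathbb{E}[F(\mathbf{w}_T)]\ge F_{\inf}$. Multiplying by $2$ gives
\[
\sum_{t=0}^{T-1}\eta_t\mathbb{E}\|\mathbf{a}_t\|_2^2\le 2(F(\mathbf{w}_0)-F_{\inf})+\sum_{t=0}^{T-1}\eta_t\mathbb{E}[\beta_t^2]+L_F\sum_{t=0}^{T-1}\eta_t^2\mathbb{E}[\nu_t^2].
\]
Since $\eta_t\le 2(\eta_t+L_F\eta_t^2)$, this is at least as strong as \eqref{eq:filtered_convergence_bound_main}, and dividing by $S_T>0$ finishes the proof. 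Rounds with $\eta_t=0$, such as pre-activation rounds where $\mathbf{w}_{t+1}=\mathbf{w}_t$, contribute zero to every term and need no separate treatment.

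The only delicate point is the cross term. A direct Young bound together with $\|\mathbf{m}_t\|^2\le 2\|\mathbf{a}_t\|^2+2\|\mathbf{e}_t\|^2$ also works, and it is what produces the $(\eta_t+L_F\eta_t^2)\beta_t^2$ form. Its drawback is that it leaves a gradient coefficient of only $\eta_t/2-L_F\eta_t^2\ge\eta_t/4$, which would inflate the constant in front of $F(\mathbf{w}_0)-F_{\inf}$. The polarization route avoids this loss. A minor measurability check is still needed: $\beta_t$ and $\nu_t$ may be random, but they enter only through conditional bounds and are then integrated, which is why the statement carries $\mathbb{E}[\beta_t^2]$ and $\mathbb{E}[\nu_t^2]$.
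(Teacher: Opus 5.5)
Your proof is correct, and it differs from the paper's in how you handle the cross term.

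The paper uses the same smoothness-plus-conditioning skeleton. It bounds the cross term with the weighted Young inequality $|\mathbf{a}_t^\top\mathbf{e}_t|\le\frac14\|\mathbf{a}_t\|_2^2+\|\mathbf{e}_t\|_2^2$. It then bounds the conditional second moment separately by $2\|\mathbf{a}_t\|_2^2+2\|\mathbf{e}_t\|_2^2+\nu_t^2$. This gives a gradient coefficient $\frac{3\eta_t}{4}-L_F\eta_t^2\ge\frac{\eta_t}{2}$, which is exactly where $\eta_t\le 1/(4L_F)$ is used. The error coefficient comes out as $\eta_t+L_F\eta_t^2$, which is precisely the displayed form.

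Your polarization identity instead cancels $\|\mathbf{m}_t\|_2^2$ exactly against the smoothness term. As a result you only need $\eta_t\le 1/L_F$, and each round contributes $\frac{\eta_t}{2}\beta_t^2$. After telescoping, your screening-error term is $\sum_t\eta_t\mathbb{E}[\beta_t^2]/S_T$ instead of $2\sum_t(\eta_t+L_F\eta_t^2)\mathbb{E}[\beta_t^2]/S_T$. That is smaller by a factor $2(1+L_F\eta_t)\ge 2$ per round. It would also sharpen the $2(1+L_F\eta)\overline{\beta}_T^2$ term in the constant-step corollary to $\overline{\beta}_T^2$. The paper's route buys nothing beyond a direct match with its stated constants.

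One correction to your closing remark. The Young route does not inflate the constant in front of $F(\mathbf{w}_0)-F_{\inf}$ when the weights are $(\frac14,1)$, as in the paper. The coefficient $\frac{\eta_t}{2}-L_F\eta_t^2$ you cite comes from the symmetric weights $(\frac12,\frac12)$. Those weights would produce an error coefficient $\frac{\eta_t}{2}+L_F\eta_t^2$, not $\eta_t+L_F\eta_t^2$. So your description mixes the two weightings; this does not affect your actual proof.
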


Theorem~\ref{thm:filtered_convergence_main} bounds the step-size-weighted average squared gradient norm of the benign objective. 
The right-hand side contains three terms: the initial optimization gap, the contribution of the screening-error upper bound $\beta_t$ (i.e., the discrepancy between the expected retained update direction and the gradient of the benign global objective), and the stochastic update-noise contribution governed by $\nu_t^2$. 
A smaller $\beta_t$ indicates that the expected direction formed from the retained clients more closely approximates the benign global gradient. The theorem characterizes optimization under such an approximation, but it does not independently guarantee that the \FedLNS{} screening rule makes $\beta_t$ small in every round. 

The following corollary gives conditions under which the weighted average gradient norm vanishes.

\begin{corollary}[Convergence to first-order stationarity]
\label{cor:stationarity_main}
Suppose
\begin{equation}
    S_T
    =
    \sum_{t=0}^{T-1}
    \eta_t
    \to
    \infty,
    \qquad
    \frac{
        \sum_{t=0}^{T-1}
        \eta_t^2
    }{
        S_T
    }
    \to
    0.
    \label{eq:stepsize_stationarity_condition}
\end{equation}
Assume that the update variance is uniformly bounded:
\begin{equation}
    \mathbb{E}[\nu_t^2]
    \le
    \nu^2
\end{equation}
for some finite constant $\nu^2$.
Also assume that the weighted average screening error vanishes:
\begin{equation}
    \frac{
        \sum_{t=0}^{T-1}
        \eta_t
        \mathbb{E}
        [
            \beta_t^2
        ]
    }{
        S_T
    }
    \to
    0.
    \label{eq:weighted_filtering_error_condition}
\end{equation}
Then
\begin{equation}
    \frac{
        \sum_{t=0}^{T-1}
        \eta_t
        \mathbb{E}
        \left[
            \|\nabla F(\mathbf{w}_t)\|_2^2
        \right]
    }{
        S_T
    }
    \to
    0.
    \label{eq:weighted_stationarity_main}
\end{equation}
Equivalently, if a random round $\tau$ is sampled from
$\{0,\dots,T-1\}$ with probability
\begin{equation}
    \Pr(\tau=t)
    =
    \frac{\eta_t}{S_T},
\end{equation}
then
\begin{equation}
    \mathbb{E}
    \left[
        \|\nabla F(\mathbf{w}_{\tau})\|_2^2
    \right]
    \to
    0.
    \label{eq:random_iterate_stationarity_main}
\end{equation}
\end{corollary}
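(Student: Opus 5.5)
The corollary follows from Theorem~\ref{thm:filtered_convergence_main} by showing that each of the three terms on the right-hand side of \eqref{eq:filtered_convergence_bound_main} tends to zero. The step-size assumptions in the corollary do not by themselves enforce $\eta_t\le 1/(4L_F)$. I will therefore assume it, as the theorem requires, so that the bound applies for every $T\ge 1$ with $S_T>0$, which holds for all large $T$ since $S_T\to\infty$.

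\textbf{Initial gap.} The first term is $2(F(\mathbf{w}_0)-F_{\inf})/S_T$. This is a fixed finite constant divided by $S_T$, so it vanishes because $S_T\to\infty$.

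\textbf{Screening-error term.} Split the second term into two parts:
\begin{equation*}
\frac{2\sum_{t}\eta_t\mathbb{E}[\beta_t^2]}{S_T}
+\frac{2L_F\sum_{t}\eta_t^2\mathbb{E}[\beta_t^2]}{S_T}.
\end{equation*}
The first part tends to zero directly by \eqref{eq:weighted_filtering_error_condition}. For the second part, use $\eta_t\le 1/(4L_F)$ to get $L_F\eta_t^2\le \eta_t/4$. The second part is then at most one quarter of the first, so it also vanishes.

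\textbf{Noise term.} Using $\mathbb{E}[\nu_t^2]\le\nu^2$, the third term is bounded by
\begin{equation*}
L_F\nu^2\frac{\sum_t\eta_t^2}{S_T},
\end{equation*}
which tends to zero by the second condition in \eqref{eq:stepsize_stationarity_condition}.

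\textbf{Conclusion and equivalent form.} The left-hand side of the theorem's bound is nonnegative and bounded above by a sum of three vanishing terms, so \eqref{eq:weighted_stationarity_main} follows by squeezing. For the random-iterate form, assume $\tau$ is drawn independently of the training trajectory. Conditioning on $\tau$ then gives
\begin{equation*}
\mathbb{E}\|\nabla F(\mathbf{w}_\tau)\|_2^2=\sum_t\frac{\eta_t}{S_T}\,\mathbb{E}\|\nabla F(\mathbf{w}_t)\|_2^2,
\end{equation*}
which is exactly the weighted average above, so \eqref{eq:random_iterate_stationarity_main} holds as well.

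\textbf{Main obstacle.} Apart from the step-size caveat, the only non-routine step is the second part of the screening-error term. It does not follow from \eqref{eq:weighted_filtering_error_condition} on its face, but it is dominated by that condition through the step-size cap $\eta_t\le 1/(4L_F)$.
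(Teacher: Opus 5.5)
Your proposal is correct and follows the paper's proof essentially step for step. The paper also bounds the three terms of Theorem~\ref{thm:filtered_convergence_main} separately and handles the screening-error term via $\eta_t+L_F\eta_t^2\le\frac{5}{4}\eta_t$, which is the same as your one-quarter bound. It obtains the random-iterate form by the law of total expectation, and your explicit step-size cap $\eta_t\le 1/(4L_F)$ and independence of $\tau$ from the trajectory are assumptions the paper uses without stating them.
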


Corollary~\ref{cor:stationarity_main} shows that the expected gradient norm of a step-size-weighted random iterate vanishes when the step sizes provide sufficient cumulative progress, the stochastic update variance remains bounded, and the average screening error becomes negligible. The final condition requires the retained update direction to approach the benign gradient direction.

The following proposition decomposes the screening error into quantities associated with retained malicious updates, benign-client selection, and local-training drift.

\begin{proposition}[Interpretation of screening error]
\label{prop:filtering_error_interpretation_main}
Let
\begin{equation}
    \mu_t
    :=
    \sum_{a\in\mathcal{R}_t\cap\mathcal{C}_{\mathrm{mal}}}
    \alpha_{t,a}
\end{equation}
denote the total aggregation weight assigned to retained malicious clients. When $\mu_t<1$, define the renormalized gradient of the retained benign clients as
\begin{equation}
    \bar{\mathbf{g}}_t^{B,\mathrm{ret}}
    :=
    \sum_{i\in\mathcal{R}_t\cap\mathcal{C}_{\mathrm{ben}}}
    \frac{\alpha_{t,i}}{1-\mu_t}
    \nabla F_i(\mathbf{w}_t).
\end{equation}
The retained-benign representativeness error is
\begin{equation}
    \chi_t
    :=
    \left\|
        \bar{\mathbf{g}}_t^{B,\mathrm{ret}}
        -
        \nabla F(\mathbf{w}_t)
    \right\|_2.
    \label{eq:benign_representativeness_main}
\end{equation}
Thus, $\chi_t$ measures how closely the retained benign clients, after their weights are renormalized, represent the full benign-client gradient. Let $\varrho_t$ denote the retained benign local-training drift, as formally defined in Appendix~\ref{app:filtering_error_decomposition}.
Under the conditions stated there, the screening error satisfies
\begin{equation}
    \|\mathbf{e}_t\|_2
    \le
    (1-\mu_t)\chi_t
    +
    2G\mu_t
    +
    \varrho_t.
    \label{eq:filtering_error_interpretation_main}
\end{equation}
Consequently, a valid choice of the screening-error bound in Theorem~\ref{thm:filtered_convergence_main} is
\begin{equation}
    \beta_t
    :=
    (1-\mu_t)\chi_t
    +
    2G\mu_t
    +
    \varrho_t.
\end{equation}
\end{proposition}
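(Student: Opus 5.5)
We first fix a concrete model of the retained direction; the exact conditions are deferred to Appendix~\ref{app:filtering_error_decomposition}, so we impose what we need there. Write $\Delta\mathbf{w}_{t,i}=-\eta_t\mathbf{g}_{t,i}$, so that \eqref{eq:filtered_update_main} gives $\mathbf{g}_t=\sum_{i\in\mathcal{R}_t}\alpha_{t,i}\mathbf{g}_{t,i}$. We define the benign local-training drift of client $i$ as $\mathbf{r}_{t,i}:=\mathbb{E}[\mathbf{g}_{t,i}\mid\mathcal{F}_t]-\nabla F_i(\mathbf{w}_t)$, and set
\[
\varrho_t:=\Bigl\|\sum_{i\in\mathcal{R}_t\cap\mathcal{C}_{\mathrm{ben}}}\alpha_{t,i}\mathbf{r}_{t,i}\Bigr\|_2 .
\]
For malicious clients we assume a bounded-gradient-type condition: $\|\mathbb{E}[\mathbf{g}_{t,a}\mid\mathcal{F}_t]\|_2\le G$ for $a\in\mathcal{C}_{\mathrm{mal}}$, and $\|\nabla F(\mathbf{w}_t)\|_2\le G$. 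Throughout, we treat $\mathcal{R}_t$ and the weights $\alpha_{t,i}$ as $\mathcal{F}_t$-measurable (or else we condition on them), so that the expectation passes through the sum over retained clients.

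The core step is an algebraic split of $\mathbb{E}[\mathbf{g}_t\mid\mathcal{F}_t]-\nabla F(\mathbf{w}_t)$. Using $\sum_{i\in\mathcal{R}_t}\alpha_{t,i}=1$, so that the benign retained weights sum to $1-\mu_t$, we write $\nabla F(\mathbf{w}_t)=(1-\mu_t)\nabla F(\mathbf{w}_t)+\mu_t\nabla F(\mathbf{w}_t)$. This gives
\[
\mathbf{e}_t=(1-\mu_t)\bigl(\bar{\mathbf{g}}_t^{B,\mathrm{ret}}-\nabla F(\mathbf{w}_t)\bigr)+\sum_{a\in\mathcal{R}_t\cap\mathcal{C}_{\mathrm{mal}}}\alpha_{t,a}\bigl(\mathbb{E}[\mathbf{g}_{t,a}\mid\mathcal{F}_t]-\nabla F(\mathbf{w}_t)\bigr)+\sum_{i\in\mathcal{R}_t\cap\mathcal{C}_{\mathrm{ben}}}\alpha_{t,i}\mathbf{r}_{t,i}.
\]
Applying the triangle inequality, the first term is $(1-\mu_t)\chi_t$ by \eqref{eq:benign_representativeness_main}. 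For the second term, each summand is bounded by $2G$, and the weights sum to $\mu_t$, so the term is at most $2G\mu_t$. The third term is $\varrho_t$ by definition, which yields \eqref{eq:filtering_error_interpretation_main}.

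Two cases need separate handling. When $\mu_t=1$, the quantity $\bar{\mathbf{g}}_t^{B,\mathrm{ret}}$ is undefined, but the first term carries the factor $1-\mu_t=0$; we interpret it as zero, and the bound $2G$ still holds. The skip rounds of \eqref{eq:FedLNS_skip_update} are handled by an analogous convention.

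Since $\beta_t$ so defined bounds $\|\mathbf{e}_t\|_2$ and is nonnegative, it satisfies \eqref{eq:filtering_error_bound_main}, so it is a valid choice in Theorem~\ref{thm:filtered_convergence_main}.

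The main obstacle is measurability and the choice of assumptions. $\mathcal{R}_t$ depends on the current round's signatures, which are not in $\mathcal{F}_t$. I would first try to resolve this by enlarging $\mathcal{F}_t$ to include the screening outcome, checking that the theorem's proof still goes through with this enlarged filtration. If that fails, I would state the decomposition conditionally on $\mathcal{R}_t$ and take an outer expectation, using Jensen's inequality.
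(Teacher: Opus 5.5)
Your proposal is correct and follows the paper's proof essentially line for line: it uses the same drift vectors $\mathbf{r}_{t,i}$, the same bounded-direction assumption on malicious directions and on $\nabla F(\mathbf{w}_t)$, and the same triangle-inequality split. The only difference is cosmetic: you attach $-\mu_t\nabla F(\mathbf{w}_t)$ to the malicious summands and bound each by $2G$, whereas the paper attaches it to the benign part and bounds that piece and the malicious part by $G\mu_t$ each. Your treatment of the $\mu_t=1$ and skip-round cases, and your remark on the $\mathcal{F}_t$-measurability of $\mathcal{R}_t$, cover points the paper leaves implicit; it simply assumes $\mu_t<1$ and pulls $\mathcal{R}_t$ and $\alpha_{t,i}$ outside the conditional expectation.
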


Proposition~\ref{prop:filtering_error_interpretation_main} identifies three sources of screening error. The term $\mu_t$ is small when malicious clients receive little retained aggregation weight. The term $\chi_t$ is small when the retained benign clients remain representative of the benign population. The term $\varrho_t$ is small when local training remains close to the global model. Together, these quantities determine the screening-error contribution in Theorem~\ref{thm:filtered_convergence_main}.

For a constant effective step size, the finite-time result yields a stationary-neighborhood bound:

\begin{corollary}[Stationary neighborhood with constant step size]
\label{cor:constant_stationary_neighborhood_main}
Suppose that
$\eta_t=\eta\leq 1/(4L_F),\,\forall t$.
Define
\begin{equation}
    \overline{\beta}_T^2
    :=
    \frac{1}{T}
    \sum_{t=0}^{T-1}
    \mathbb{E}
    [
        \beta_t^2
    ],
    \qquad
    \overline{\nu}_T^2
    :=
    \frac{1}{T}
    \sum_{t=0}^{T-1}
    \mathbb{E}
    [
        \nu_t^2
    ].
\end{equation}
Then
\begin{equation}
\begin{aligned}
    \frac{1}{T}
    \sum_{t=0}^{T-1}
    \mathbb{E}
    \left[
        \|\nabla F(\mathbf{w}_t)\|_2^2
    \right]
    &\le
    \frac{
        2
        \left(
            F(\mathbf{w}_0)-F_{\inf}
        \right)
    }{
        \eta T
    }
    \\
    &\quad
    +
    2
    \left(
        1+L_F\eta
    \right)
    \overline{\beta}_T^2
    +
    L_F\eta
    \overline{\nu}_T^2.
    \label{eq:constant_stationary_neighborhood_main}
\end{aligned}
\end{equation}
\end{corollary}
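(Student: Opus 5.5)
The plan is to obtain Corollary~\ref{cor:constant_stationary_neighborhood_main} as a direct specialization of Theorem~\ref{thm:filtered_convergence_main}, with no new analysis. The theorem's hypotheses are the smoothness and lower boundedness of $F$, together with the per-round bounds \eqref{eq:filtering_error_bound_main} and \eqref{eq:variance_bound_main}. These are exactly the standing assumptions under which the corollary is stated. The step-size condition $0\le \eta_t\le 1/(4L_F)$ holds because $\eta_t=\eta\le 1/(4L_F)$. We need $\eta>0$ so that $S_T>0$; the case $\eta=0$ makes the right-hand side of \eqref{eq:constant_stationary_neighborhood_main} infinite or undefined, so we assume $\eta>0$ implicitly.

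With a constant step size we have $S_T=\eta T$ and $\sum_{t}\eta_t^2=\eta^2T$. Substituting into \eqref{eq:filtered_convergence_bound_main} turns each term of the theorem's bound into a term of the corollary.

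\begin{itemize}
\item The left-hand side becomes $\frac{1}{T}\sum_{t=0}^{T-1}\mathbb{E}[\|\nabla F(\mathbf{w}_t)\|_2^2]$, since $\eta$ cancels.
\item The first right-hand term becomes $2(F(\mathbf{w}_0)-F_{\inf})/(\eta T)$.
\item The second becomes $2(\eta+L_F\eta^2)\sum_t\mathbb{E}[\beta_t^2]/(\eta T)=2(1+L_F\eta)\,\overline{\beta}_T^2$.
\item The third becomes $L_F\eta^2\sum_t\mathbb{E}[\nu_t^2]/(\eta T)=L_F\eta\,\overline{\nu}_T^2$.
\end{itemize}

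These terms match \eqref{eq:constant_stationary_neighborhood_main} exactly.

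There is no genuine obstacle here. The only points to check are that pulling the constant $\eta$ out of the sums is legitimate, which it is because the expectations are finite by assumption, and that the definitions of $\overline{\beta}_T^2$ and $\overline{\nu}_T^2$ use the same $1/T$ normalization that appears after cancellation.
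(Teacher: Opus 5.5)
Your proposal is correct and matches the paper's proof: the paper likewise sets $\eta_t=\eta$ in Theorem~\ref{thm:filtered_convergence_main}, uses $S_T=\eta T$, and simplifies each of the three right-hand terms to obtain \eqref{eq:constant_stationary_neighborhood_main}. Your explicit remark that $\eta>0$ is needed for $S_T>0$ is a sensible addition; the paper leaves it implicit, since its main-text update model already takes $\eta_t>0$.
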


Corollary~\ref{cor:constant_stationary_neighborhood_main} shows that, with a constant effective step size, the average gradient norm approaches a neighborhood whose size is controlled by the average screening error and the stochastic update variance.

Complete assumptions, proofs, the screening-error decomposition, and the local-training drift bound are provided in Appendix~\ref{app:theory}.
\section{Experimental Evaluation}
\label{sec:experiments}
In this section, we evaluate \FedLNS{} in federated transformer-based language-model training under target-corruption attacks. The evaluation examines robustness across model architectures, IID and non-IID data partitions, and malicious-client fractions ranging from $0\%$ to $40\%$.

\subsection{Experimental Setup}

We evaluate \FedLNS{} on three transformer model families with different language-modeling objectives:
\begin{itemize}
    \item A GPT-style causal transformer~\cite{radford2019language} trained from scratch on WikiText using autoregressive next-token prediction.
    \item A BERT-style masked-language transformer~\cite{devlin2019bert} trained from scratch on Tiny Shakespeare using bidirectional masked-token prediction.
    \item A LLaMA-style decoder-only causal transformer~\cite{touvron2023llama} trained from scratch on StackOverflow text.
\end{itemize}

\textbf{Settings.} Unless otherwise specified, each experiment uses $K=200$ clients, a $10\%$ participation rate, $20$ clients sampled uniformly at random per communication round, and $200$ communication rounds~\cite{sun2023efficient,kang2024polaris}. Each configuration is repeated using three random seeds, $\{100,200,300\}$, under IID and two-shard non-IID data partitions~\cite{dong2023no}.

For the $K=200$ clients considered in our experiments, we use the default activation coverage $M_{\mathrm{act}}=100$, so screening begins after signatures from half of the client identities have been observed. 

\textbf{Malicious clients.} We consider a target-corruption attack in which a fixed subset of the client population is designated as malicious at the beginning of each training run and remains malicious throughout that run.
For every malicious client selected in a communication round, the input sequence remains unchanged, while the target token at each supervised prediction position is independently replaced, with probability $p_{\mathrm{corr}}$, by a randomly sampled valid vocabulary token; padding and EOS tokens are excluded from the replacement candidates. We set $p_{\mathrm{corr}}=1.0$, so every eligible target position used for local training is replaced.
The malicious-client fraction is defined over the complete client population. In each communication round, $20$ clients are sampled uniformly without replacement from the $K=200$ clients; therefore, the number of malicious clients participating in an individual round varies according to the sampled client set.

\textbf{Performance metrics.} We report perplexity and token-level semantic entropy. For the causal language models, perplexity is the exponential of the mean next-token cross-entropy. For the masked-language model, masked-token perplexity is the exponential of the mean cross-entropy over masked positions. Token-level semantic entropy is the mean predictive entropy over the evaluated token positions and measures the concentration of the corresponding predictive distributions.

\begin{figure*}[ht!]
\centering
\begin{subfigure}{0.31\textwidth}
  \centering
  \includegraphics[width=\linewidth]{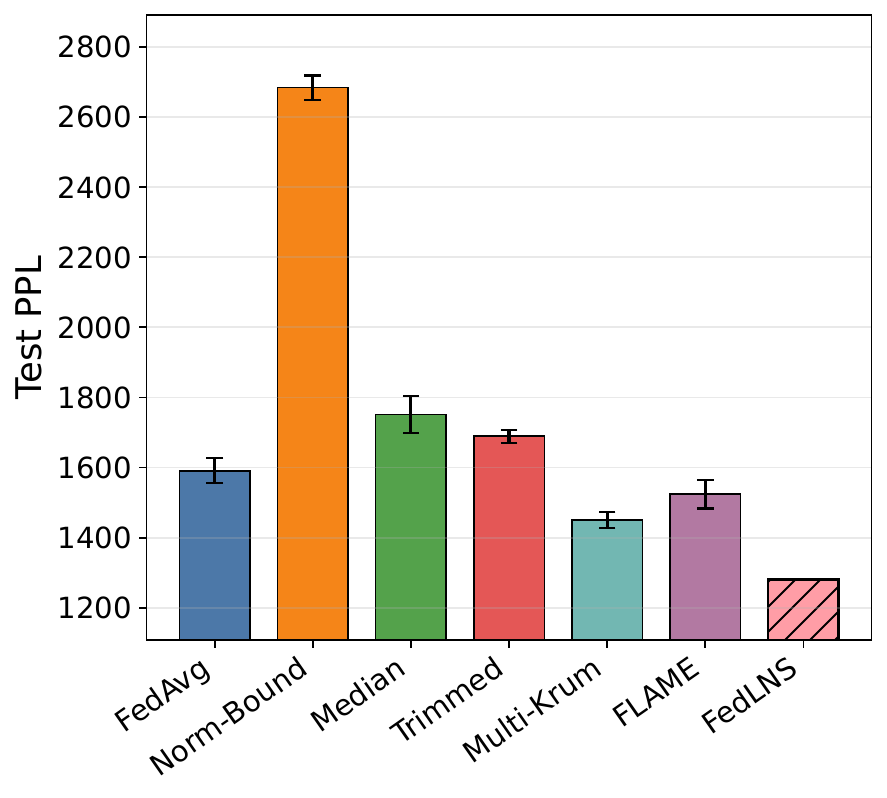}
  \subcaption{GPT/WikiText, IID partitions}
\end{subfigure}
\hfill
\begin{subfigure}{0.31\textwidth}
  \centering
  \includegraphics[width=\linewidth]{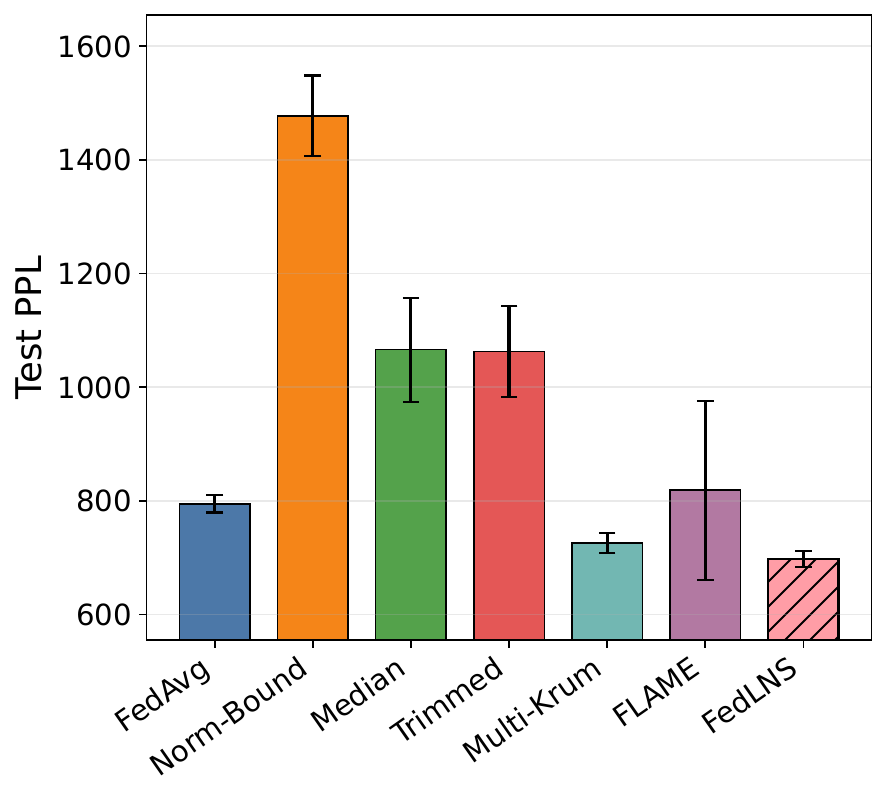}
  \subcaption{BERT/Shakespeare, IID partitions}
\end{subfigure}
\hfill
\begin{subfigure}{0.31\textwidth}
  \centering
  \includegraphics[width=\linewidth]{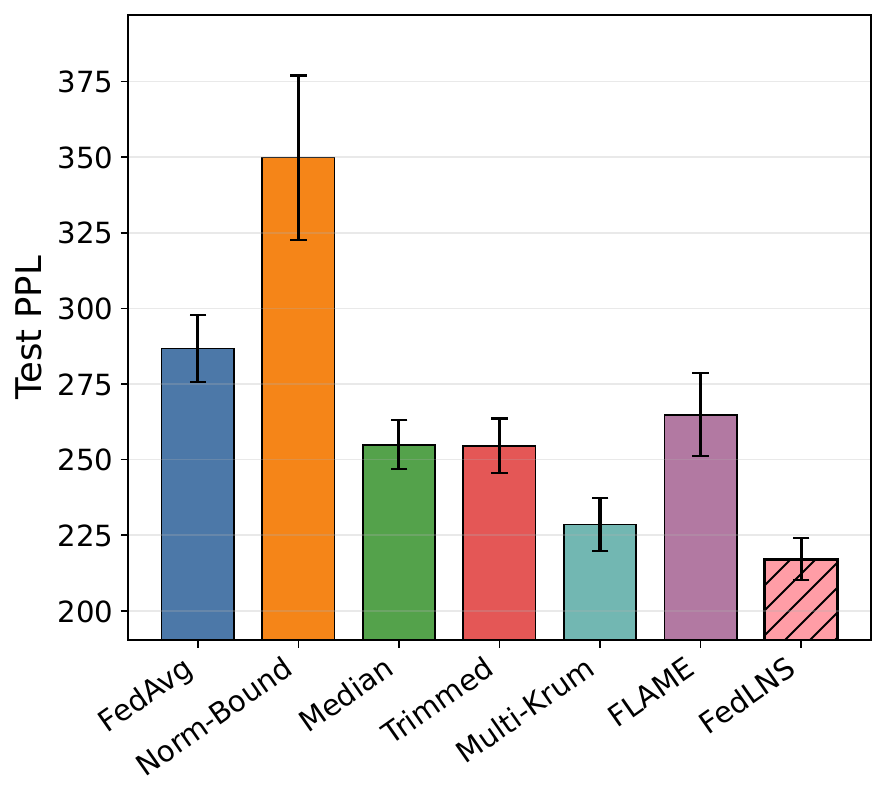}
  \subcaption{LLaMA/StackOverflow, IID partitions}
\end{subfigure}

\vspace{0.8em}

\begin{subfigure}{0.31\textwidth}
  \centering
  \includegraphics[width=\linewidth]{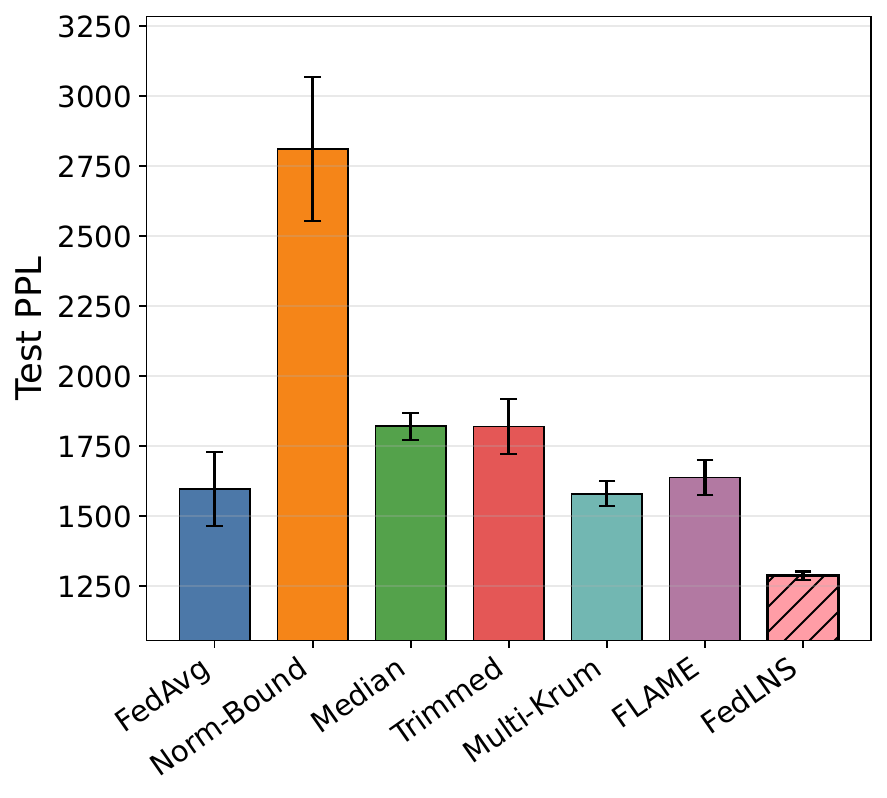}
  \subcaption{GPT/WikiText, non-IID partitions}
\end{subfigure}
\hfill
\begin{subfigure}{0.31\textwidth}
  \centering
  \includegraphics[width=\linewidth]{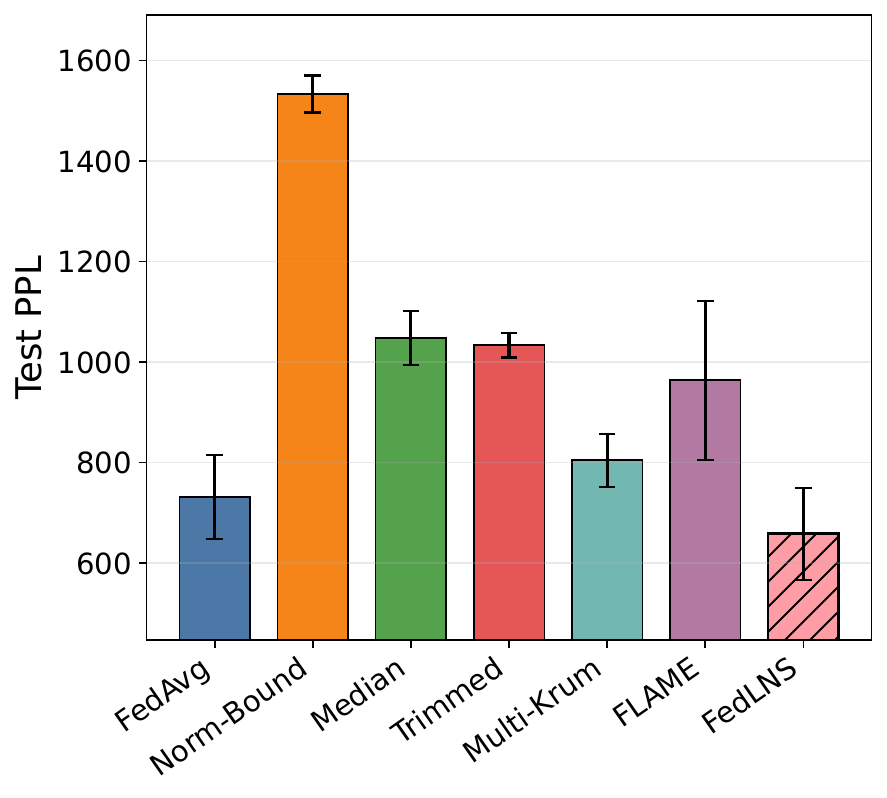}
  \subcaption{BERT/Shakespeare, non-IID partitions}
\end{subfigure}
\hfill
\begin{subfigure}{0.31\textwidth}
  \centering
  \includegraphics[width=\linewidth]{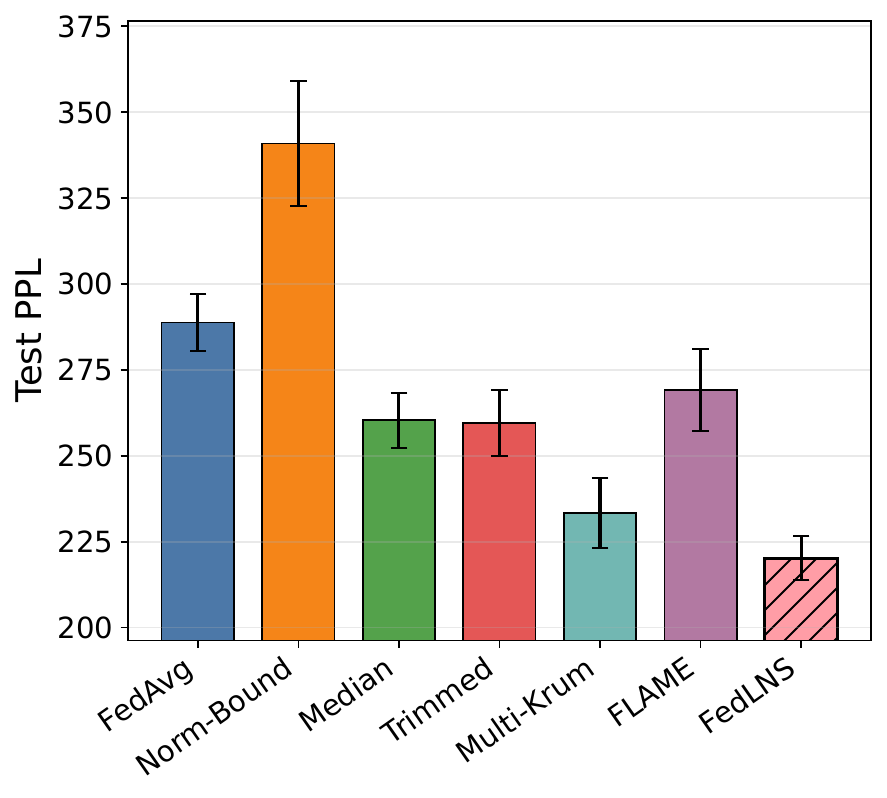}
  \subcaption{LLaMA/StackOverflow, non-IID partitions}
\end{subfigure}

\caption{Perplexity (lower is better) under target manipulation affecting $40\%$ of the client population. Bars show the mean over three random seeds, and error bars show one sample standard deviation. \FedLNS{} consistently shows the lowest perplexity, across both IID and non-IID data partitions.}
\label{fig:attack40_ppl}
\end{figure*}

For each random seed, the checkpoint with the lowest test loss
is selected, and the test loss, perplexity, and token-level
semantic entropy are reported from that same model state.
Results are reported as the mean and sample standard deviation across seeds $\{100,200,300\}$ for each model, data partition, aggregation method, and malicious-client fraction.

\textbf{Baselines.} We compare \FedLNS{} with six representative aggregation methods covering conventional averaging, norm-constrained aggregation, coordinate-wise robust statistics, distance-based selection, and clustering-based filtering. \textit{FedAvg} performs sample-weighted averaging and serves as the non-robust baseline~\cite{mcmahan2017communication}. \textit{Norm-Bounded FedAvg} clips client updates before aggregation to limit the effect of large update magnitudes~\cite{sun2019can}. \textit{Coordinate-wise Median} and \textit{Trimmed Mean} apply robust statistics independently to each model coordinate~\cite{yin2018byzantine}. \textit{Multi-Krum} selects updates with mutually consistent distance profiles~\cite{blanchard2017machine}. \textit{FLAME} combines clustering, norm clipping, and Gaussian noise injection~\cite{nguyen2022flame}. Trimmed Mean and Multi-Krum are configured using the true population-level malicious-client fraction, whereas \FedLNS{} does not require this fraction
as an input.

Appendix~\ref{app:experimental_details} provides the complete dataset, tokenization, partition, architecture, optimization, attack, baseline, and \FedLNS{} configurations; Appendix~\ref{app:remaining_tables} reports the full malicious-client-fraction results; and Appendices~\ref{app:bank_size_ablation} and~\ref{app:gmm_ablation} provide the bank-coverage-at-activation and GMM-selection ablations, respectively.

\subsection{Perplexity and Token-Level Semantic Entropy}
\label{sec:strongest_attack_results}
Figure~\ref{fig:attack40_ppl} compares the perplexity of \FedLNS{} and the six baselines when $40\%$ of the client population is malicious. Figures~\ref{fig:attack40_ppl}(a)--\ref{fig:attack40_ppl}(c) report the IID results, whereas Figures~\ref{fig:attack40_ppl}(d)--\ref{fig:attack40_ppl}(f) report the non-IID results. Error bars show one sample standard deviation across seeds $\{100,200,300\}$.

Under IID data partitions, \FedLNS{} achieves the lowest perplexity for all three evaluated model families. Relative to the strongest competing baseline, it reduces perplexity by $11.59\%$ for GPT/WikiText (from $1449.72$ to $1281.63$), $3.88\%$ for BERT/Tiny Shakespeare (from $725.76$ to $697.62$), and $4.98\%$ for LLaMA/StackOverflow (from $228.52$ to $217.14$).
These results show that the global models trained with \FedLNS{} exhibit less attack-induced degradation across all three architectures.

Under the two-shard non-IID partitions, \FedLNS{} reduces perplexity by $18.57\%$ for GPT/WikiText (from $1579.81$ to $1286.50$), $10.05\%$ for BERT/Tiny Shakespeare (from $731.73$ to $658.21$), and $5.67\%$ for LLaMA/StackOverflow (from $233.42$ to $220.18$) relative to the strongest non-\FedLNS{} baseline. The improvement across all three non-IID settings shows that the downstream benefit of the screening procedure persists when benign clients have heterogeneous local data distributions.


\begin{figure*}[ht]
\centering
\begin{subfigure}{0.31\textwidth}
  \centering
  \includegraphics[width=\linewidth]{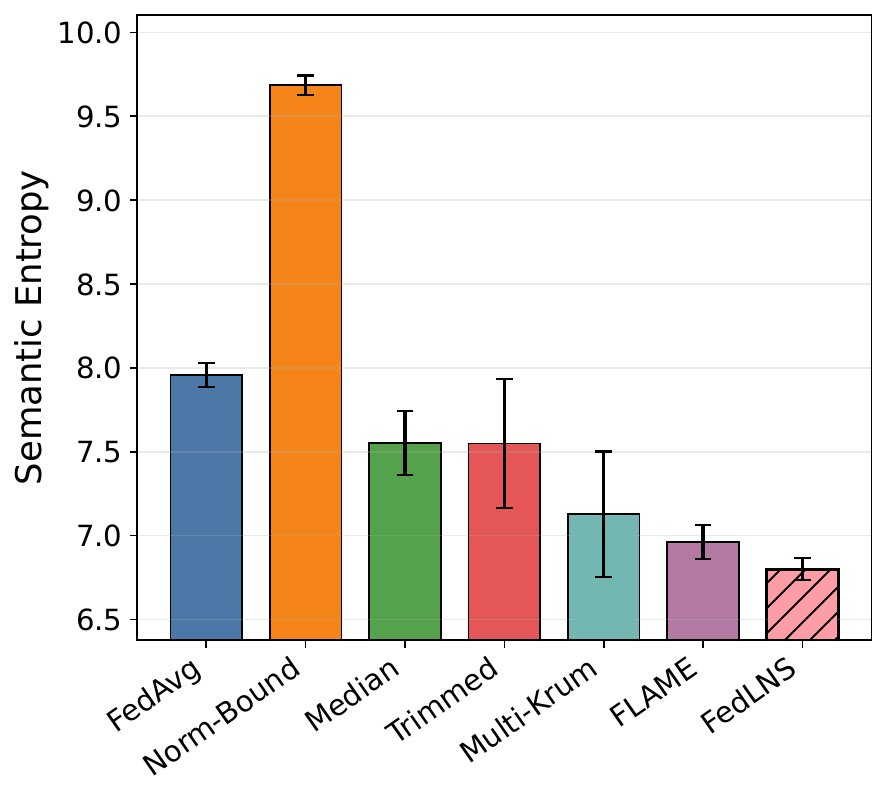}
  \subcaption{GPT/WikiText, IID partitions}
\end{subfigure}
\hfill
\begin{subfigure}{0.31\textwidth}
  \centering
  \includegraphics[width=\linewidth]{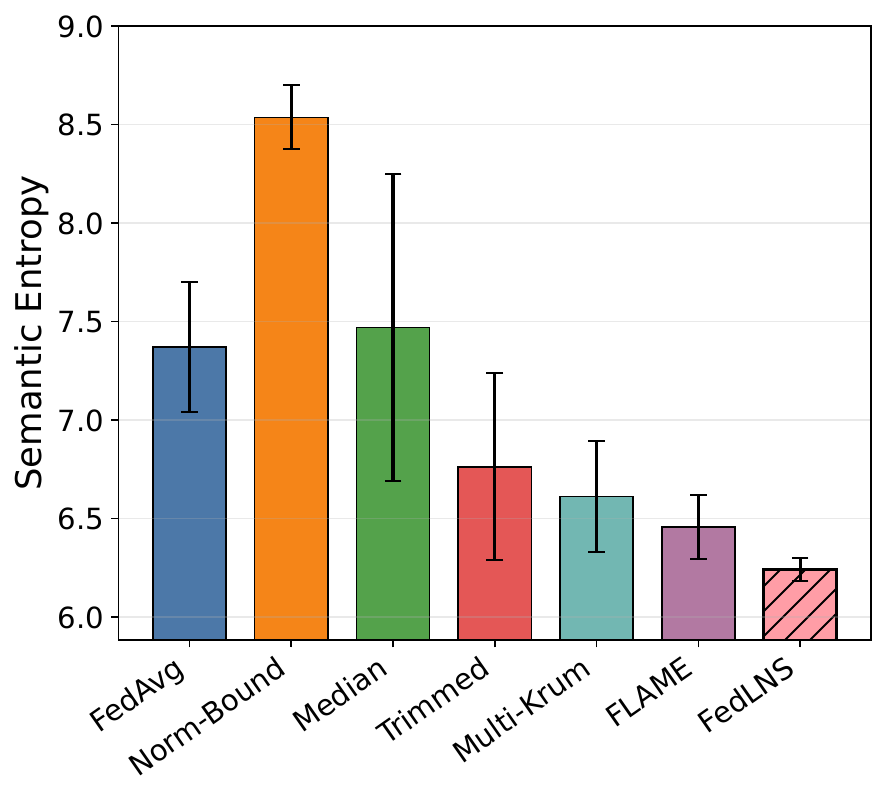}
  \subcaption{BERT/Shakespeare, IID partitions}
\end{subfigure}
\hfill
\begin{subfigure}{0.31\textwidth}
  \centering
  \includegraphics[width=\linewidth]{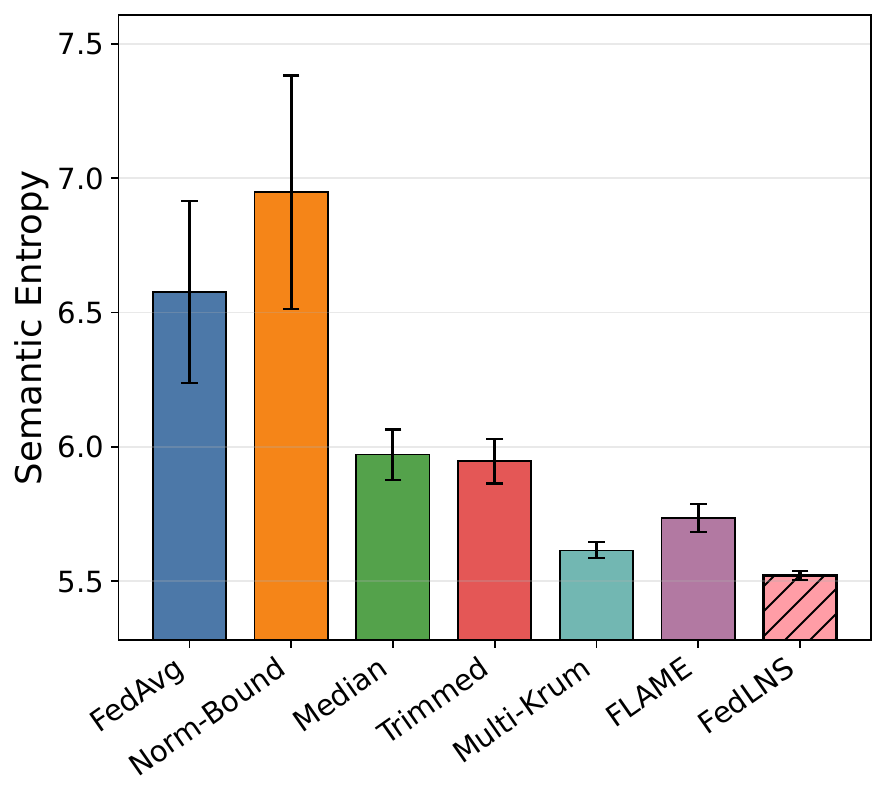}
  \subcaption{LLaMA/StackOverflow, IID partitions}
\end{subfigure}

\vspace{0.8em}

\begin{subfigure}{0.31\textwidth}
  \centering
  \includegraphics[width=\linewidth]{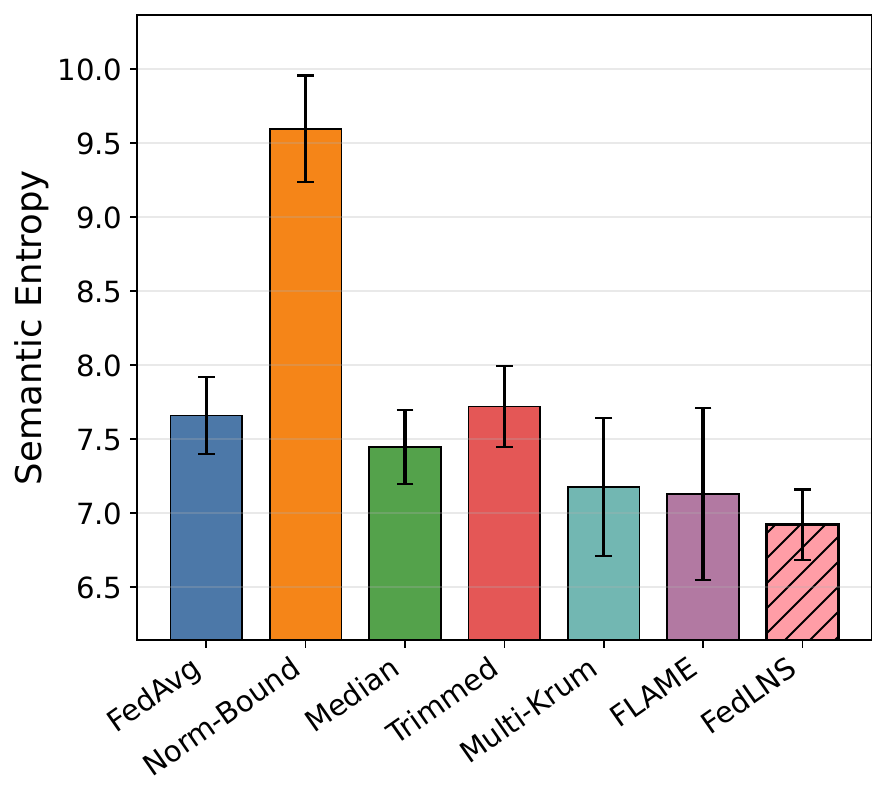}
  \subcaption{GPT/WikiText, non-IID partitions}
\end{subfigure}
\hfill
\begin{subfigure}{0.31\textwidth}
  \centering
  \includegraphics[width=\linewidth]{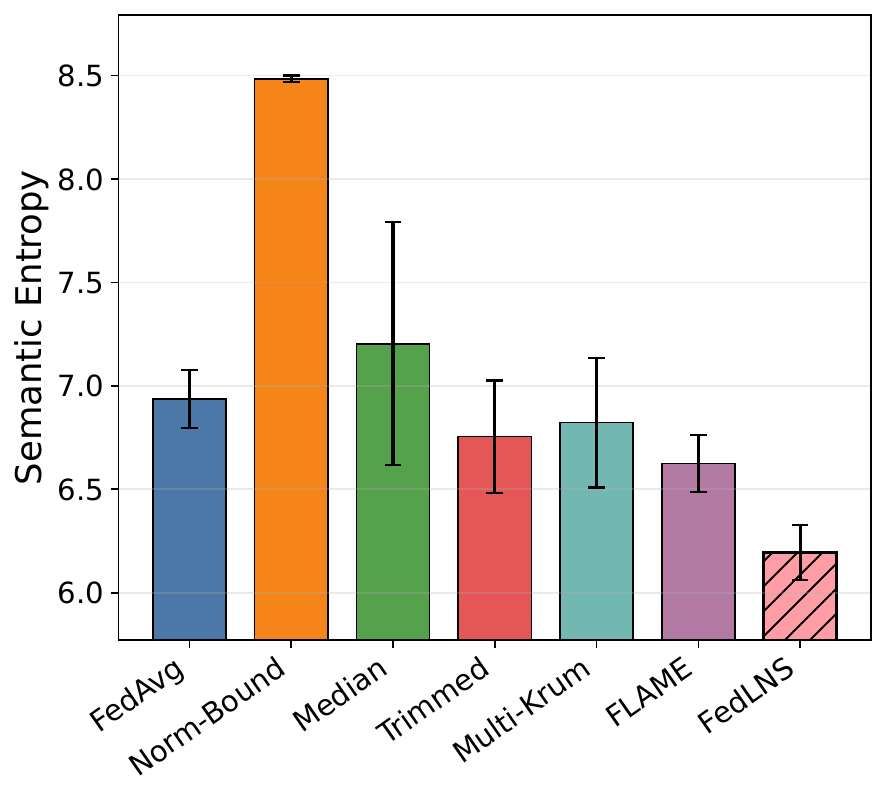}
  \subcaption{BERT/Shakespeare, non-IID partitions}
\end{subfigure}
\hfill
\begin{subfigure}{0.31\textwidth}
  \centering
  \includegraphics[width=\linewidth]{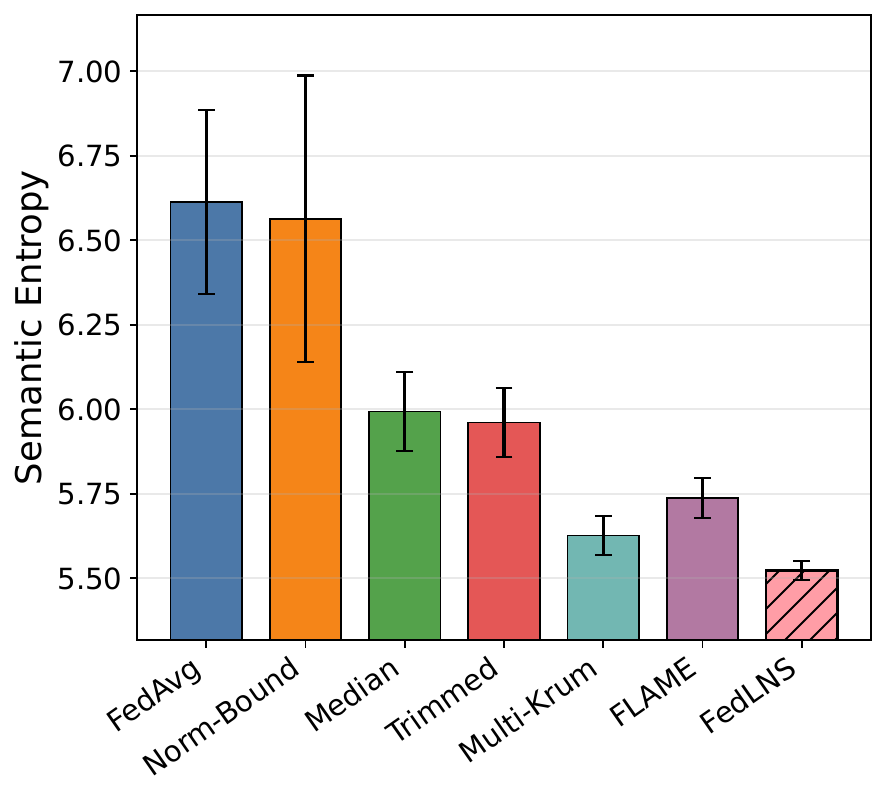}
  \subcaption{LLaMA/StackOverflow, non-IID partitions}
\end{subfigure}

\caption{Token-level semantic entropy (lower is better) under target manipulation affecting $40\%$ of the client population. Bars show the mean over three random seeds, and error bars show one sample standard deviation.  \FedLNS{} consistently shows the lowest semantic entropy, across both IID and non-IID data partitions.}
\label{fig:attack40_sem_ent}
\end{figure*}

Because lower entropy does not necessarily imply greater correctness, we interpret the entropy results jointly with perplexity.

Figure~\ref{fig:attack40_sem_ent} reports token-level semantic entropy for the three model families under $40\%$ population-level target manipulation. Under IID data partitions, Figures~\ref{fig:attack40_sem_ent}(a)--\ref{fig:attack40_sem_ent}(c) show that \FedLNS{} achieves the lowest entropy in all three settings. Relative to the strongest non-\FedLNS{} baseline, it reduces entropy by $2.30\%$ for GPT/WikiText (from $6.96$ to $6.80$), $3.41\%$ for BERT/Tiny Shakespeare (from $6.46$ to $6.24$), and $1.60\%$ for LLaMA/StackOverflow (from $5.61$ to $5.52$).

Under non-IID data partitions, Figures~\ref{fig:attack40_sem_ent}(d)--\ref{fig:attack40_sem_ent}(f) show reductions of $2.95\%$ for GPT/WikiText (from $7.13$ to $6.92$), $6.50\%$ for BERT/Tiny Shakespeare (from $6.62$ to $6.19$), and $1.95\%$ for LLaMA/StackOverflow (from $5.63$ to $5.52$). 
Together with the corresponding perplexity reductions, these results show that the global models obtained after \FedLNS{} screening achieve lower test loss and lower uncertainty over competing token predictions under the evaluated attack.

\subsection{Impact of Malicious Clients}
Appendix~\ref{app:remaining_tables} reports the complete numerical results across malicious-client fractions for all model families and data partitions. For GPT/WikiText under IID partitioning, the population-level malicious-client fraction increases from $0\%$ to $40\%$.

For perplexity, \FedLNS{} exhibits limited degradation as the malicious-client fraction increases. Its perplexity rises by $2.82\%$, from $1246.57$ at $0\%$ to $1281.63$ at $40\%$. Over the same range, perplexity increases by $27.97\%$ for FedAvg, $48.32\%$ for Norm-Bounded FedAvg, $17.95\%$ for Median, $36.05\%$ for Trimmed Mean, $16.72\%$ for Multi-Krum, and $15.15\%$ for FLAME. \FedLNS{} achieves the lowest perplexity at every nonzero attack fraction from $10\%$ to $40\%$.

Token-level semantic entropy exhibits a similar attacked-setting trend. Although Norm-Bounded FedAvg has the lowest entropy in the attack-free setting, its entropy increases by $47.94\%$ between $0\%$ and $40\%$ malicious clients. In contrast, the entropy of \FedLNS{} increases by $1.04\%$, from $6.73$ to $6.80$, and is the lowest among the evaluated methods for every nonzero attack fraction. The simultaneous stability of perplexity and entropy indicates that \FedLNS{} limits both predictive degradation and uncertainty as the target-corruption rate increases.

\subsection{Impact of Bank Coverage at Activation}
As defined by Eq.~\eqref{eq:FedLNS_bank_threshold}, \FedLNS{} begins screening and global aggregation after the bank reaches the activation coverage $M_{\mathrm{act}}$. We evaluate $M_{\mathrm{act}}\in\{0,50,100,150\}$, corresponding to immediate, $25\%$, $50\%$, and $75\%$ population coverage. Setting $M_{\mathrm{act}}=0$ activates screening immediately using the currently available bank; it does not remove the bank. In all settings, the bank continues growing after activation as additional client identities are observed.

The ablation reports performance over the first ten active aggregation rounds under IID partitioning with $40\%$ malicious clients. Relative to immediate activation with $M_{\mathrm{act}}=0$, activation at $M_{\mathrm{act}}=150$ reduces perplexity by $10.62\%$ for GPT/WikiText, $31.89\%$ for BERT/Tiny Shakespeare, and $74.68\%$ for LLaMA/StackOverflow. Token-level semantic entropy decreases by $2.53\%$, $20.20\%$, and $25.33\%$, respectively. Thus, the main ablation exhibits a monotonic perplexity improvement as activation coverage increases.

These results support maximizing bank coverage whenever the resulting activation delay is acceptable. A broader bank provides a more representative cross-client reference, and the bank should continue growing toward full population coverage after screening begins.
The activation threshold is therefore a practical starting criterion rather than a desired final bank size. Smaller activation coverages remain usable when waiting for broader coverage would delay training excessively, but they should not be interpreted as equivalent to a fully populated bank. The largest evaluated activation coverage is $75\%$; the experiments therefore support the observed benefit of increasing coverage within this range rather than establishing an empirical optimum at $100\%$ coverage.

The GMM-selection ablation shows that BIC-selected one-vs-two-component
screening and forced two-component screening produce nearly identical
selected-checkpoint results in the evaluated settings. BIC retains the
option to keep all participating clients when the one-component model is
preferred, but the endpoint metrics exhibit limited sensitivity to this
model-selection rule. Complete results are provided in
Appendix~\ref{app:gmm_ablation}.


\section{Conclusion}
\label{sec_cond}

This paper introduced \FedLNS{}, a lightweight server-side framework that develops a normalization-layer update signature to screen malicious updates before standard full-model aggregation. Moreover, a server-side retain rule was designed, which constructs a robust cross-client reference from the latest observed signatures and combines median/MAD standardization with BIC-guided mixture modeling. For GPT-style, BERT-style, and LLaMA-style models under both IID and non-IID partitions, \FedLNS{} achieved lower perplexity than all six baselines when $40\%$ of the client population was malicious. The maximum reductions were $18.57\%$ in perplexity and $6.50\%$ in token-level semantic entropy.

\section*{Acknowledgements}
This work was in part supported by the Fundação para a Ciência e a Tecnologia (Portuguese Foundation for Science and Technology) through the Carnegie Mellon Portugal Program. 

\bibliographystyle{IEEEtran}
\bibliography{bibDefALLM}

\ifCLASSOPTIONcaptionsoff
  \newpage
\fi

\appendices
\onecolumn
\clearpage

\setcounter{secnumdepth}{2}
\setcounter{tocdepth}{2}

\startcontents[appendix]
\printcontents[appendix]{l}{1}{
  \section*{Appendix Contents}
}

\clearpage
\section{Probabilistic Analysis of Partial Bank Coverage at Activation}
\label{app:min_bank_probability}
\FedLNS{} ideally benefits from the broadest available client coverage,
and the history bank continues expanding after screening begins. Waiting for full population coverage before beginning aggregation may nevertheless be impractically slow. We therefore use $M_{\mathrm{act}}$ to denote a partial bank coverage at which screening is permitted to start.

This appendix analyzes, under uniform random participation, the probability that malicious clients form a majority of a partially populated bank and the number of rounds required to reach a chosen activation coverage. The analysis relies on the $50\%$ breakdown point of the coordinate-wise median: the median/MAD reference remains in its benign-majority regime when malicious identities constitute fewer than half of the bank. The resulting bound provides a reliability condition for early activation; it does not imply that a partial bank has the same statistical quality as a fully populated bank.

\subsection{A Hypergeometric Model of Bank Contamination}

Let
\begin{equation}
    N_t^{\mathrm{bank}}
    :=
    |\mathcal{H}_t|
\end{equation}
denote the number of distinct client identities represented in the refreshed history bank at round $t$.

Let $\mathcal{C}_{\mathrm{mal}}$ be the set of malicious clients, with
\begin{equation}
    |\mathcal{C}_{\mathrm{mal}}| = \alpha K,
    \qquad 0 \le \alpha < \frac{1}{2},
\end{equation}
where $\alpha$ is the global malicious-client fraction (if $\alpha K$ is not an integer, the same discussion applies with $\lfloor \alpha K\rfloor$ or $\lceil \alpha K\rceil$ malicious clients). 

Let $M_t := |\mathcal{H}_t \cap \mathcal{C}_{\mathrm{mal}}|$ denote the number of malicious client identities represented in the bank, so the bank has a benign majority when $M_t < N_t^{\mathrm{bank}}/2$. The undesirable event is
\begin{equation}
    \mathcal{E}_t
    :=
    \left\{
        M_t \ge \frac{N_t^{\mathrm{bank}}}{2}
    \right\}.
    \label{eq:bad_bank_event}
\end{equation}

\FedLNS{} uses random client participation, where the server samples participating clients, in each communication round, uniformly at random from the population of $K$ clients. Because the bank stores \emph{distinct} client IDs, once a client has appeared, the bank holds that client's most recent signature. 
Conditioned on the bank's membership size $N_t^{\mathrm{bank}} = b$, the exchangeability of the sampling process implies that the bank can be treated as a uniformly random subset of size $b$ drawn from $K$ clients. Particularly, the server's sampling rule does not prefer any client ID; any two subsets of the same size are equally likely after conditioning on the number of distinct observed clients. Since the malicious set $\mathcal{C}_{\mathrm{mal}}$ is fixed at the start of the run, the question becomes how many of a random size-$b$ subset, drawn without replacement from $K$ clients, belong to a fixed subset of size $\alpha K$, which is the hypergeometric setting. Hence, conditioned on $N_t^{\mathrm{bank}}=b$, we have
\begin{equation}
    M_t \mid N_t^{\mathrm{bank}}=b
    \sim
    \operatorname{Hypergeometric}(K,\alpha K,b),
\end{equation}
i.e., for feasible $m$, 
\begin{equation}
    \Pr(M_t=m \mid N_t^{\mathrm{bank}}=b)
    =
    \frac{
        \binom{\alpha K}{m}
        \binom{(1-\alpha)K}{b-m}
    }{
        \binom{K}{b}
    }.
    \label{eq:hypergeom_pmf}
\end{equation}
The conditional expectation is $\mathbb{E}[M_t \mid N_t^{\mathrm{bank}}=b] = \alpha b$, so when $\alpha<1/2$ the expected malicious fraction in the bank stays below the median's $50\%$ breakdown point. 

\subsection{Concentration of the Malicious Fraction in the Bank}

For a hypergeometric random variable, a Chernoff-type upper-tail bound gives, for any threshold $\gamma>\alpha$,
\begin{equation}
    \Pr\big(
        \frac{M_t}{N_t^{\mathrm{bank}}}
        \ge \gamma
        \,\big|\,
        N_t^{\mathrm{bank}}
    \big)
    \le
    \exp\left(
        -N_t^{\mathrm{bank}} D(\gamma\|\alpha)
    \right),
    \label{eq:hypergeom_chernoff_general}
\end{equation}
where $D(p\|q)$ is the Bernoulli KL divergence
\begin{equation}
    D(p\|q)
    =
    p\log\frac{p}{q}
    +
    (1-p)\log\frac{1-p}{1-q}.
    \label{eq:bernoulli_kl}
\end{equation}
Setting $\gamma=1/2$ and using $\alpha<1/2$ gives
\begin{equation}
    \Pr\left(
        M_t \ge \frac{N_t^{\mathrm{bank}}}{2}
        \,\middle|\,
        N_t^{\mathrm{bank}}
    \right)
    \le
    \exp\left(
        -N_t^{\mathrm{bank}}
        D\left(
            \frac{1}{2}\middle\|\alpha
        \right)
    \right),
    \label{eq:bad_bank_kl_bound}
\end{equation}
with
\begin{equation}
    D\left(
        \frac{1}{2}\middle\|\alpha
    \right)
    =
    \frac{1}{2}\log\frac{1/2}{\alpha}
    +
    \frac{1}{2}\log\frac{1/2}{1-\alpha}
    \label{eq:kl_half_alpha_expanded}
\end{equation}
or, equivalently,
\begin{equation}
    D\left(
        \frac{1}{2}\middle\|\alpha
    \right)
    =
    \frac{1}{2}
    \log
    \frac{1}{4\alpha(1-\alpha)}.
    \label{eq:kl_half_alpha_compact}
\end{equation}
This quantity is positive for $\alpha<1/2$, so the upper bound on the probability of a malicious-majority bank decays exponentially with $N_t^{\mathrm{bank}}$. This provides the probabilistic motivation for delaying screening until the history bank reaches a sufficiently broad client coverage under random participation.

\begin{proposition}[Bound on a malicious-majority history bank]
\label{prop:bad_bank_simple_bound}
Suppose that the fraction of malicious clients satisfies $\alpha<1/2$, and let $M_t$ denote the number of malicious signatures in a history bank of size $N_t^{\mathrm{bank}}$. Then the probability that malicious signatures constitute at least half of the bank is bounded by
\begin{equation}
    \Pr\left(
        M_t \ge \frac{N_t^{\mathrm{bank}}}{2}
        \,\middle|\,
        N_t^{\mathrm{bank}}
    \right)
    \le
    \exp\left(
        -2N_t^{\mathrm{bank}}
        \left(
            \frac{1}{2}-\alpha
        \right)^2
    \right).
    \label{eq:bad_bank_simple_bound}
\end{equation}
\end{proposition}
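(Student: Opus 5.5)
The plan is to derive the bound directly from the KL-form Chernoff bound \eqref{eq:bad_bank_kl_bound} already established above, combined with Pinsker's inequality for Bernoulli distributions. Conditioned on $N_t^{\mathrm{bank}}=b$, the exchangeability argument gives $M_t\sim\operatorname{Hypergeometric}(K,\alpha K,b)$. Taking $\gamma=1/2>\alpha$ in \eqref{eq:hypergeom_chernoff_general} yields
\[
\Pr\left(M_t\ge \tfrac{b}{2}\,\middle|\,N_t^{\mathrm{bank}}=b\right)\le \exp\left(-b\,D\left(\tfrac12\middle\|\alpha\right)\right).
\]
It then suffices to show $D(\tfrac12\|\alpha)\ge 2(\tfrac12-\alpha)^2$, because $x\mapsto e^{-bx}$ is decreasing.

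The inequality $D(p\|q)\ge 2(p-q)^2$ for Bernoulli distributions is Pinsker's inequality, since the total variation distance between $\mathrm{Bern}(p)$ and $\mathrm{Bern}(q)$ is $|p-q|$. To keep the argument self-contained, I would verify it directly in the case needed here. Using \eqref{eq:kl_half_alpha_compact}, write $\alpha=\tfrac12-s$ with $s\in(0,\tfrac12]$, so $4\alpha(1-\alpha)=1-4s^2$. The claim becomes $-\tfrac12\log(1-4s^2)\ge 2s^2$, which follows from $-\log(1-x)\ge x$ with $x=4s^2\in(0,1]$. At $s=\tfrac12$, i.e. $\alpha=0$, the left side is $+\infty$, and the probability is trivially $0$ because $M_t=0<b/2$ whenever $b\ge1$. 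The case $b=0$ is vacuous, since both sides make the bound at least trivially true: the right side is $1$.

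As an alternative that avoids the KL step altogether, I could invoke Hoeffding's result that sampling without replacement is dominated in convex order by sampling with replacement. This gives the additive Hoeffding bound $\exp(-2b(\tfrac12-\alpha)^2)$ for $\Pr(M_t/b-\alpha\ge \tfrac12-\alpha)$ directly.

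The main obstacle is not the calculus but the justification of the hypergeometric tail bound \eqref{eq:hypergeom_chernoff_general} itself, namely the Chernoff bound transferring to sampling without replacement. I would assume it as stated earlier, citing Hoeffding (1963, Thm.~4). The remaining step is the elementary inequality above.
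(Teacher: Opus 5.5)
Your proposal is correct and follows essentially the same route as the paper: start from the KL Chernoff bound \eqref{eq:bad_bank_kl_bound} with $\gamma=1/2$, write $\alpha=\tfrac12-g$ so that $4\alpha(1-\alpha)=1-4g^2$, and apply $\log\frac{1}{1-x}\ge x$ to obtain $D(\tfrac12\|\alpha)\ge 2(\tfrac12-\alpha)^2$. Your explicit treatment of the boundary cases $\alpha=0$ (where $x=1$ lies outside the range $0\le x<1$ the paper invokes) and $b=0$ is a minor tidying that the paper omits, and, as in the paper, the hypergeometric tail bound itself is taken as given rather than proved.
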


\begin{proof}
From the KL-based concentration bound in \eqref{eq:bad_bank_kl_bound}, we have
\begin{equation}
    \Pr\left(
        M_t \ge \frac{N_t^{\mathrm{bank}}}{2}
        \,\middle|\,
        N_t^{\mathrm{bank}}
    \right)
    \le
    \exp\left(
        -N_t^{\mathrm{bank}}
        D\left(
            \frac{1}{2}\middle\|\alpha
        \right)
    \right).
\end{equation}
To obtain a simpler sufficient bound, let
\begin{equation}
    g := \frac{1}{2}-\alpha
\end{equation}
be the margin between the median breakdown point and the attack fraction. Then
\begin{equation}
    \alpha(1-\alpha)
    =
    \left(\frac{1}{2}-g\right)
    \left(\frac{1}{2}+g\right)
    =
    \frac{1}{4}-g^2.
\end{equation}
Therefore,
\begin{equation}
    4\alpha(1-\alpha)
    =
    1-4g^2.
\end{equation}
Using \eqref{eq:kl_half_alpha_compact}, we obtain
\begin{equation}
    D\left(
        \frac{1}{2}\middle\|\alpha
    \right)
    =
    \frac{1}{2}
    \log\frac{1}{1-4g^2}.
\end{equation}
Since $\log\frac{1}{1-x} \ge x$ for $0\le x<1$, it follows that
\begin{equation}
    D\left(
        \frac{1}{2}\middle\|\alpha
    \right)
    \ge
    \frac{1}{2}\cdot 4g^2
    =
    2g^2
    =
    2\left(
        \frac{1}{2}-\alpha
    \right)^2.
    \label{eq:kl_lower_bound}
\end{equation}
Substituting \eqref{eq:kl_lower_bound} into \eqref{eq:bad_bank_kl_bound} yields
\begin{equation}
    \Pr\left(
        M_t \ge \frac{N_t^{\mathrm{bank}}}{2}
        \,\middle|\,
        N_t^{\mathrm{bank}}
    \right)
    \le
    \exp\left(
        -2N_t^{\mathrm{bank}}
        \left(
            \frac{1}{2}-\alpha
        \right)^2
    \right),
\end{equation}
which proves \eqref{eq:bad_bank_simple_bound}.
\end{proof}

The bound in Proposition~\ref{prop:bad_bank_simple_bound} makes the dependence on attack strength explicit. As $\alpha\to1/2$, the margin $g=1/2-\alpha$ shrinks, the exponent weakens, and a larger history bank is required to achieve the same confidence level. In contrast, when $\alpha\ll1/2$, the same bank size already provides a strong guarantee against a malicious-majority bank.


Suppose screening must begin before full population coverage is available and the desired probability of a malicious-majority activation bank is at most $\eta$. From \eqref{eq:bad_bank_kl_bound}, it suffices to choose $M_{\mathrm{act}}$ such that
$\exp(-M_{\mathrm{act}}D(1/2\|\alpha))\le\eta$,
\begin{equation}
    M_{\mathrm{act}}
    \ge
    \frac{
        \log(1/\eta)
    }{
        D\left(
            \frac{1}{2}\middle\|\alpha
        \right)
    }.
    \label{eq:mmin_exact_guideline}
\end{equation}
Using the lower bound \eqref{eq:kl_lower_bound}, a more conservative sufficient condition is
\begin{equation}
    M_{\mathrm{act}}
    \ge
    \frac{
        \log(1/\eta)
    }{
        2\left(
            \frac{1}{2}-\alpha
        \right)^2
    }.
    \label{eq:mmin_simple_guideline}
\end{equation}
When the total number of clients $K$ is fixed, the activation coverage can be expressed as a population fraction $M_{\mathrm{act}}=\rho K$, in which case \eqref{eq:bad_bank_kl_bound} becomes
\begin{equation}
    \Pr\left(
        M_t \ge \frac{N_t^{\mathrm{bank}}}{2}
        \,\middle|\,
        N_t^{\mathrm{bank}}\ge \rho K
    \right)
    \le
    \exp\left(
        -\rho K
        D\left(
            \frac{1}{2}\middle\|\alpha
        \right)
    \right).
    \label{eq:rho_fraction_bound}
\end{equation}

\subsection{Time to Reach the Activation Coverage}

We now estimate how quickly the bank reaches a chosen activation coverage $M_{\mathrm{act}}$ under uniform random participation. Let $q$ be the per-round participation rate, so that in each round approximately $qK$ clients are selected uniformly at random. For a fixed client, the probability of not being selected in one round is $1-q$, so the probability of not appearing after $r$ rounds is $(1-q)^r$, and the probability of having appeared at least once by round $r$ is $1-(1-q)^r$. The expected number of distinct clients in the bank after $r$ rounds is therefore
\begin{equation}
    \mathbb{E}[B_r]
    =
    K\left(
        1-(1-q)^r
    \right).
    \label{eq:expected_bank_size}
\end{equation}
For a target coverage fraction $\rho$, the expected bank size reaches $\rho K$ once $K(1-(1-q)^r) \ge \rho K$, i.e., $(1-q)^r \le 1-\rho$, which gives
\begin{equation}
    r
    \ge
    \frac{
        \log(1-\rho)
    }{
        \log(1-q)
    }.
    \label{eq:round_to_reach_fraction}
\end{equation}

\begin{table}[t]
\centering
\caption{Expected round at which the bank reaches each coverage threshold under random participation with $K=200$ and participation rate $q=0.1$.}
\label{tab:bank_reach_rounds}
\begin{tabular}{c|c|c}
\hline
Bank coverage $\rho$ & $M_{\mathrm{act}}=\rho K$ & Expected activation round \\
\hline
$25\%$ & $50$  & $3$ \\
$50\%$ & $100$ & $7$ \\
$75\%$ & $150$ & $14$ \\
\hline
\end{tabular}
\end{table}

In our experiments, where $q=0.1$ and $K=200$, Table~\ref{tab:bank_reach_rounds} presents the expected round at which the bank reaches each coverage threshold.
This expectation can be complemented with a tail bound on the probability that the bank has \emph{not} yet reached the threshold. Let $\mu_r := \mathbb{E}[B_r] = K(1-(1-q)^r)$. Because the coverage indicators are exchangeable and negatively associated under uniform random sampling, a Chernoff-style lower-tail bound gives
\begin{equation}
    \Pr(B_r < M_{\mathrm{act}})
    \le
    \exp\big(
        -\frac{\mu_r \delta_r^2}{2}
    \big),
    \;
    \delta_r
    :=
    1-\frac{M_{\mathrm{act}}}{\mu_r},
    \label{eq:bank_reach_tail_bound}
\end{equation}
whenever $\mu_r>M_{\mathrm{act}}$. For example, with $K=200$, $q=0.1$, and $M_{\mathrm{act}}=100$, after $r=12$ rounds $\mu_{12} = 200(1-0.9^{12}) \approx 143.5$, so $\delta_{12} = 1-100/143.5 \approx 0.303$ and
\begin{equation}
    \Pr(B_{12}<100)
    \le
    \exp\big(
        -\frac{143.5\cdot 0.303^2}{2}
    \big)
    \approx
    1.4\times 10^{-3}.
\end{equation}
Under the participation rate used in our experiments, the bank therefore reaches the half-population threshold early with high probability.

The activation-time and bank-composition analyses above assume uniform random client participation, matching the sampling procedure used in the experiments. In deployments with nonuniform or intermittent participation, some client identities may appear rarely or never within the training horizon, making full bank coverage impractically slow. The hypergeometric and activation-time bounds do not directly apply to such participation processes. In that setting, $M_{\mathrm{act}}$ should be interpreted as a practical early-activation coverage, while the bank should continue to expand whenever previously unseen clients participate.

\clearpage
\section{Proofs for Theoretical Analysis}
\label{app:theory}

This appendix provides the assumptions and proofs supporting Section~\ref{sec_theory}. Appendix~\ref{app:ln_margin_proof} proves the normalization-parameter margin decomposition, Appendix~\ref{app:filtered_convergence} proves the screened-aggregation and stationarity bounds, Appendix~\ref{app:filtering_error_decomposition} decomposes the screening error, and Appendix~\ref{app:local_drift_bound} bounds local-training drift.

\subsection{Proof of the Normalization-Parameter Margin Decomposition}
\label{app:ln_margin_proof}

This proof establishes the first-order contribution of a client's normalization-parameter update to output-token margins. We proceed in three steps.

First, we analyze the margin change for one fixed prompt $x$. 
Second, we split the client update into two parts: the normalization-parameter part and the remaining-parameter part. 
Third, we average over prompts $x\sim\mathcal{X}$, because the expected margin change in \eqref{eq:expected_margin_shift_theory} is defined over a prompt distribution.

We use two remainder terms. 
The term
\begin{equation}
\mathcal{R}_{t,i}(x,u,v)
\end{equation}
is the \emph{pointwise Taylor remainder}. 
It is the Taylor approximation error for one fixed prompt $x$ and token pair $(u,v)$. 
The term
\begin{equation}
R_{t,i}(u,v)
\end{equation}
is the \emph{prompt-averaged remainder}, defined by
\begin{equation}
R_{t,i}(u,v)
:=
\mathbb{E}_{x\sim\mathcal{X}}
\left[
\mathcal{R}_{t,i}(x,u,v)
\right].
\label{eq:expected_remainder_definition_app}
\end{equation}
Thus, $\mathcal{R}_{t,i}(x,u,v)$ depends on a specific prompt, while $R_{t,i}(u,v)$ averages this error over prompts.

\begin{assumption}[Smooth prompt-dependent margins]
\label{assump:smooth_margin}
Fix a token pair $(u,v)\in\mathcal{V}\times\mathcal{V}$. 
For every prompt $x$ in the support of $\mathcal{X}$, the mapping
\begin{equation}
\mathbf{w}
\mapsto
m_{u,v}(x;\mathbf{w})
\end{equation}
is twice continuously differentiable in a neighborhood of $\mathbf{w}_t$. 
Moreover, there exists $L_m>0$ such that
\begin{equation}
\left\|
\nabla_{\mathbf{w}}^{2}
m_{u,v}(x;\mathbf{w})
\right\|_{\mathrm{op}}
\le
L_m
\label{eq:hessian_margin_bound_app}
\end{equation}
for all $\mathbf{w}$ in that neighborhood and all relevant prompts $x$.
\end{assumption}

Conditioned on round $t$ and client $i$, the update
\begin{equation}
\Delta\mathbf{w}_{t,i}
=
(\Delta\mathbf{p}_{t,i},\Delta\mathbf{q}_{t,i})
\end{equation}
is treated as fixed with respect to the prompt draw $x\sim\mathcal{X}$. 
Thus, the expectation over $x$ averages prompt-dependent gradients and remainders, while the update vectors $\Delta\mathbf{p}_{t,i}$ and $\Delta\mathbf{q}_{t,i}$ do not depend on the sampled prompt.

\begin{proof}[Proof of Proposition~\ref{prop:ln_margin_main}]
Fix a prompt $x$ and token pair $(u,v)$. 
Define
\begin{equation}
\phi_x(s)
:=
m_{u,v}
\left(
x;
\mathbf{w}_t+s\Delta\mathbf{w}_{t,i}
\right),
\qquad s\in[0,1].
\end{equation}
This scalar function traces the token margin along the straight path from the current global model $\mathbf{w}_t$ to the locally updated model 
\begin{equation}
\mathbf{w}_t+\Delta\mathbf{w}_{t,i}.
\end{equation}

By the fundamental theorem of calculus,
\begin{equation}
m_{u,v}(x;\mathbf{w}_t+\Delta\mathbf{w}_{t,i})
-
m_{u,v}(x;\mathbf{w}_t)
=
\int_0^1
\phi_x'(s)\,ds.
\label{eq:ftc_margin_app}
\end{equation}
Using the chain rule,
\begin{equation}
\phi_x'(s)
=
\nabla_{\mathbf{w}}
m_{u,v}
\left(
x;
\mathbf{w}_t+s\Delta\mathbf{w}_{t,i}
\right)^{\top}
\Delta\mathbf{w}_{t,i}.
\end{equation}
Therefore,
\begin{equation}
m_{u,v}(x;\mathbf{w}_t+\Delta\mathbf{w}_{t,i})
-
m_{u,v}(x;\mathbf{w}_t)
\nonumber=
\int_0^1
\nabla_{\mathbf{w}}
m_{u,v}
\left(
x;
\mathbf{w}_t+s\Delta\mathbf{w}_{t,i}
\right)^{\top}
\Delta\mathbf{w}_{t,i}
\,ds.
\label{eq:integral_margin_app}
\end{equation}

We now isolate the first-order effect at the current global model. 
Add and subtract 
\begin{equation}
\nabla_{\mathbf{w}}m_{u,v}(x;\mathbf{w}_t)
\end{equation}
inside the integral:
\begin{equation}
m_{u,v}(x;\mathbf{w}_t+\Delta\mathbf{w}_{t,i})
-
m_{u,v}(x;\mathbf{w}_t)
=
\nabla_{\mathbf{w}}m_{u,v}(x;\mathbf{w}_t)^{\top}
\Delta\mathbf{w}_{t,i} 
+
\mathcal{R}_{t,i}(x,u,v),
\label{eq:first_order_margin_app}
\end{equation}
where the pointwise Taylor remainder is
\begin{equation}
\mathcal{R}_{t,i}(x,u,v)
:=
\int_0^1
\Big[
\nabla_{\mathbf{w}}
m_{u,v}
\left(
x;
\mathbf{w}_t+s\Delta\mathbf{w}_{t,i}
\right)
-
\nabla_{\mathbf{w}}m_{u,v}(x;\mathbf{w}_t)
\Big]^{\top}
\Delta\mathbf{w}_{t,i}
\,ds.
\label{eq:remainder_integral_app}
\end{equation}
This term contains the part of the margin change not captured by the linear approximation at $\mathbf{w}_t$.

Equivalently, Taylor's theorem with integral remainder gives
\begin{equation}
\mathcal{R}_{t,i}(x,u,v)
=
\int_0^1
(1-s)
\Delta\mathbf{w}_{t,i}^{\top}
\nabla_{\mathbf{w}}^{2}
m_{u,v}
\left(
x;
\mathbf{w}_t+s\Delta\mathbf{w}_{t,i}
\right)
\Delta\mathbf{w}_{t,i}
\,ds.
\label{eq:remainder_hessian_app}
\end{equation}
Using Assumption~\ref{assump:smooth_margin},
\begin{align}
|\mathcal{R}_{t,i}(x,u,v)|
&\le
\int_0^1
(1-s)
\left\|
\nabla_{\mathbf{w}}^{2}
m_{u,v}
\left(
x;
\mathbf{w}_t+s\Delta\mathbf{w}_{t,i}
\right)
\right\|_{\mathrm{op}}
\left\|
\Delta\mathbf{w}_{t,i}
\right\|_2^2
\,ds
\nonumber\\
&\le
\int_0^1
(1-s)L_m
\left\|
\Delta\mathbf{w}_{t,i}
\right\|_2^2
\,ds
\nonumber\\
&=
\frac{L_m}{2}
\left\|
\Delta\mathbf{w}_{t,i}
\right\|_2^2.
\label{eq:remainder_bound_app}
\end{align}

Next, split the first-order term into the normalization-parameter part and the remaining-parameter part. 
Since
\begin{equation}
\mathbf{w}
=
(\mathbf{p},\mathbf{q}),
\qquad
\Delta\mathbf{w}_{t,i}
=
(\Delta\mathbf{p}_{t,i},\Delta\mathbf{q}_{t,i}),
\end{equation}
we have
\begin{equation}
\nabla_{\mathbf{w}}m_{u,v}(x;\mathbf{w}_t)^{\top}
\Delta\mathbf{w}_{t,i}
=
\nabla_{\mathbf{p}}m_{u,v}(x;\mathbf{w}_t)^{\top}
\Delta\mathbf{p}_{t,i}
+\nabla_{\mathbf{q}}m_{u,v}(x;\mathbf{w}_t)^{\top}
\Delta\mathbf{q}_{t,i}.
\label{eq:split_gradient_app}
\end{equation}

Substituting \eqref{eq:split_gradient_app} into \eqref{eq:first_order_margin_app} gives the pointwise decomposition
\begin{equation}
m_{u,v}(x;\mathbf{w}_t+\Delta\mathbf{w}_{t,i})
-
m_{u,v}(x;\mathbf{w}_t)
=
\nabla_{\mathbf{p}}m_{u,v}(x;\mathbf{w}_t)^{\top}
\Delta\mathbf{p}_{t,i} 
+
\nabla_{\mathbf{q}}m_{u,v}(x;\mathbf{w}_t)^{\top}
\Delta\mathbf{q}_{t,i}
+
\mathcal{R}_{t,i}(x,u,v).
\label{eq:pointwise_ln_decomposition_app}
\end{equation}

We now take expectation over $x\sim\mathcal{X}$. 
This step is necessary because $\Delta\bar{m}_{t,i}(u,v)$ in \eqref{eq:expected_margin_shift_theory} is the average margin change over prompts. 
Taking expectation on both sides of \eqref{eq:pointwise_ln_decomposition_app} gives
\begin{equation}
\Delta\bar{m}_{t,i}(u,v)
=
\mathbb{E}_{x\sim\mathcal{X}}
\left[
\nabla_{\mathbf{p}}m_{u,v}(x;\mathbf{w}_t)^{\top}
\Delta\mathbf{p}_{t,i}
\right]
+
\mathbb{E}_{x\sim\mathcal{X}}
\left[
\nabla_{\mathbf{q}}m_{u,v}(x;\mathbf{w}_t)^{\top}
\Delta\mathbf{q}_{t,i}
\right]
+
\mathbb{E}_{x\sim\mathcal{X}}
\left[
\mathcal{R}_{t,i}(x,u,v)
\right].
\label{eq:expectation_before_pullout_app}
\end{equation}
Because $\Delta\mathbf{p}_{t,i}$ and $\Delta\mathbf{q}_{t,i}$ are fixed with respect to the prompt draw $x$, they can be moved outside the expectation. 
For the normalization-parameter part, if $\Delta\mathbf{p}_{t,i}\in\mathbb{R}^{d_p}$, then
\begin{align}
\mathbb{E}_{x\sim\mathcal{X}}
\left[
\nabla_{\mathbf{p}}m_{u,v}(x;\mathbf{w}_t)^{\top}
\Delta\mathbf{p}_{t,i}
\right]
\nonumber
&=
\mathbb{E}_{x\sim\mathcal{X}}
\left[
\sum_{r=1}^{d_p}
\frac{\partial m_{u,v}(x;\mathbf{w}_t)}
{\partial \mathbf{p}[r]}
\Delta\mathbf{p}_{t,i}[r]
\right]
\nonumber\\
&=
\sum_{r=1}^{d_p}
\mathbb{E}_{x\sim\mathcal{X}}
\left[
\frac{\partial m_{u,v}(x;\mathbf{w}_t)}
{\partial \mathbf{p}[r]}
\right]
\Delta\mathbf{p}_{t,i}[r]
\nonumber\\
&=
\mathbb{E}_{x\sim\mathcal{X}}
\left[
\nabla_{\mathbf{p}}m_{u,v}(x;\mathbf{w}_t)
\right]^{\top}
\Delta\mathbf{p}_{t,i}.
\label{eq:pullout_p_app}
\end{align}
The same argument gives
\begin{equation}
\mathbb{E}_{x\sim\mathcal{X}}
\left[
\nabla_{\mathbf{q}}m_{u,v}(x;\mathbf{w}_t)^{\top}
\Delta\mathbf{q}_{t,i}
\right]
=
\mathbb{E}_{x\sim\mathcal{X}}
\left[
\nabla_{\mathbf{q}}m_{u,v}(x;\mathbf{w}_t)
\right]^{\top}
\Delta\mathbf{q}_{t,i}.
\label{eq:pullout_q_app}
\end{equation}

Define
\begin{equation}
\mathbf{g}_{p,u,v}
:=
\mathbb{E}_{x\sim\mathcal{X}}
\left[
\nabla_{\mathbf{p}}m_{u,v}(x;\mathbf{w}_t)
\right],
\end{equation}
\begin{equation}
\mathbf{g}_{q,u,v}
:=
\mathbb{E}_{x\sim\mathcal{X}}
\left[
\nabla_{\mathbf{q}}m_{u,v}(x;\mathbf{w}_t)
\right],
\end{equation}
and
\begin{equation}
R_{t,i}(u,v)
:=
\mathbb{E}_{x\sim\mathcal{X}}
\left[
\mathcal{R}_{t,i}(x,u,v)
\right].
\end{equation}
Using these definitions in \eqref{eq:expectation_before_pullout_app} yields
\begin{equation}
\Delta\bar{m}_{t,i}(u,v)
=
\mathbf{g}_{p,u,v}^{\top}\Delta\mathbf{p}_{t,i}
+
\mathbf{g}_{q,u,v}^{\top}\Delta\mathbf{q}_{t,i}
+
R_{t,i}(u,v).
\end{equation}

Finally, using \eqref{eq:remainder_bound_app},
\begin{align}
|R_{t,i}(u,v)|
&=
\left|
\mathbb{E}_{x\sim\mathcal{X}}
\left[
\mathcal{R}_{t,i}(x,u,v)
\right]
\right|
\nonumber\\
&\le
\mathbb{E}_{x\sim\mathcal{X}}
\left[
\left|
\mathcal{R}_{t,i}(x,u,v)
\right|
\right]
\nonumber\\
&\le
\frac{L_m}{2}
\left\|
\Delta\mathbf{w}_{t,i}
\right\|_2^2.
\end{align}
This proves Proposition~\ref{prop:ln_margin_main}.
\end{proof}

The following corollary bounds the difference between two clients' first-order margin changes using their normalization-parameter and remaining-parameter update differences, together with the second-order remainders.

\begin{corollary}[Stability of margin changes]
\label{cor:margin_pair_stability}
Under Assumption~\ref{assump:smooth_margin}, for any two clients $i$ and $j$,
\begin{equation}
\left|
\Delta\bar{m}_{t,i}(u,v)
-
\Delta\bar{m}_{t,j}(u,v)
\right|
\le
\|\mathbf{g}_{p,u,v}\|_2
\left\|
\Delta\mathbf{p}_{t,i}
-
\Delta\mathbf{p}_{t,j}
\right\|_2
+
\|\mathbf{g}_{q,u,v}\|_2
\left\|
\Delta\mathbf{q}_{t,i}
-
\Delta\mathbf{q}_{t,j}
\right\|_2
+
\frac{L_m}{2}
\left(
\left\|
\Delta\mathbf{w}_{t,i}
\right\|_2^2
+
\left\|
\Delta\mathbf{w}_{t,j}
\right\|_2^2
\right).
\label{eq:margin_pair_stability_app}
\end{equation}
\end{corollary}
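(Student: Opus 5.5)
The plan is to apply Proposition~\ref{prop:ln_margin_main} separately to clients $i$ and $j$ and subtract the two decompositions. The proposition gives
\[
\Delta\bar{m}_{t,i}(u,v)=\mathbf{g}_{p,u,v}^{\top}\Delta\mathbf{p}_{t,i}+\mathbf{g}_{q,u,v}^{\top}\Delta\mathbf{q}_{t,i}+R_{t,i}(u,v),
\]
and the analogous identity holds for $j$. The key observation is that $\mathbf{g}_{p,u,v}$ and $\mathbf{g}_{q,u,v}$ are evaluated at the common broadcast model $\mathbf{w}_t$ and averaged over the same prompt distribution $\mathcal{X}$. They therefore do not depend on the client, so the first-order terms combine linearly:
\[
\Delta\bar{m}_{t,i}-\Delta\bar{m}_{t,j}=\mathbf{g}_{p,u,v}^{\top}(\Delta\mathbf{p}_{t,i}-\Delta\mathbf{p}_{t,j})+\mathbf{g}_{q,u,v}^{\top}(\Delta\mathbf{q}_{t,i}-\Delta\mathbf{q}_{t,j})+R_{t,i}(u,v)-R_{t,j}(u,v).
\]

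Next, I will take absolute values and apply the triangle inequality to split the difference into three pieces. Cauchy--Schwarz bounds the two linear pieces by $\|\mathbf{g}_{p,u,v}\|_2\|\Delta\mathbf{p}_{t,i}-\Delta\mathbf{p}_{t,j}\|_2$ and $\|\mathbf{g}_{q,u,v}\|_2\|\Delta\mathbf{q}_{t,i}-\Delta\mathbf{q}_{t,j}\|_2$. For the remainder piece, I will use $|R_{t,i}-R_{t,j}|\le|R_{t,i}|+|R_{t,j}|$ and then the bound \eqref{eq:ln_margin_remainder_main} for each client. This gives $\frac{L_m}{2}(\|\Delta\mathbf{w}_{t,i}\|_2^2+\|\Delta\mathbf{w}_{t,j}\|_2^2)$ and completes \eqref{eq:margin_pair_stability_app}.

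The argument is routine; the only point needing care is the hypothesis of Assumption~\ref{assump:smooth_margin}. The remainder bound requires the whole segment $\mathbf{w}_t+s\Delta\mathbf{w}_{t,k}$, $s\in[0,1]$, to lie in the neighborhood where the Hessian bound $L_m$ holds, for both $k=i$ and $k=j$. I will state explicitly that both clients' updates are assumed to stay within this neighborhood, exactly as was implicitly required in proving Proposition~\ref{prop:ln_margin_main}.

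I will also justify pulling the gradients out of the expectations, i.e., why $\mathbf{g}_{p,u,v}$ and $\mathbf{g}_{q,u,v}$ are client-independent. This follows because each update is fixed with respect to the prompt draw, as noted before the proof of Proposition~\ref{prop:ln_margin_main}.
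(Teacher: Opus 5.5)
Your proposal is correct and matches the paper's proof step for step. The paper also subtracts the two instances of the decomposition in Proposition~\ref{prop:ln_margin_main}, using that $\mathbf{g}_{p,u,v}$ and $\mathbf{g}_{q,u,v}$ are common to both clients, then applies the triangle inequality and Cauchy--Schwarz and bounds each remainder by \eqref{eq:ln_margin_remainder_main}; your caveat that both segments $\mathbf{w}_t+s\Delta\mathbf{w}_{t,k}$, $k\in\{i,j\}$, must lie in the neighborhood where the Hessian bound holds is valid and is left implicit in the paper.
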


\begin{proof}
Subtract \eqref{eq:ln_margin_decomposition_main} for clients $i$ and $j$:
\begin{equation}
\Delta\bar{m}_{t,i}(u,v)-\Delta\bar{m}_{t,j}(u,v)
=
\mathbf{g}_{p,u,v}^{\top}
\left(
\Delta\mathbf{p}_{t,i}
-
\Delta\mathbf{p}_{t,j}
\right)
+
\mathbf{g}_{q,u,v}^{\top}
\left(
\Delta\mathbf{q}_{t,i}
-
\Delta\mathbf{q}_{t,j}
\right)+
R_{t,i}(u,v)-R_{t,j}(u,v).
\end{equation}
Taking absolute values and applying Cauchy--Schwarz gives
\begin{equation}
\left|
\Delta\bar{m}_{t,i}(u,v)-\Delta\bar{m}_{t,j}(u,v)
\right|
\le
\|\mathbf{g}_{p,u,v}\|_2
\left\|
\Delta\mathbf{p}_{t,i}
-
\Delta\mathbf{p}_{t,j}
\right\|_2
+
\|\mathbf{g}_{q,u,v}\|_2
\left\|
\Delta\mathbf{q}_{t,i}
-
\Delta\mathbf{q}_{t,j}
\right\|_2
+
|R_{t,i}(u,v)|
+
|R_{t,j}(u,v)|.
\end{equation}
Using \eqref{eq:ln_margin_remainder_main} for both remainders proves the claim.
\end{proof}

\subsection{Proof of the Screened-Aggregation Bound}
\label{app:filtered_convergence}

We now prove Theorem~\ref{thm:filtered_convergence_main}. 
The proof shows how the screening error $\mathbf{e}_t$ enters the descent inequality. 
If the retained update direction is close to the benign gradient direction, then $\|\mathbf{e}_t\|_2$ is small, and the update behaves similarly to benign training.

\begin{assumption}[Benign objective]
\label{assump:benign_objective}
The target objective is
\begin{equation}
F(\mathbf{w})
=
\sum_{i\in\mathcal{C}_{\mathrm{ben}}}
\lambda_iF_i(\mathbf{w}),
\qquad
\lambda_i\ge 0,\quad
\sum_{i\in\mathcal{C}_{\mathrm{ben}}}\lambda_i=1.
\end{equation}
The function $F$ is differentiable, $L_F$-smooth, and lower bounded:
\begin{equation}
F(\mathbf{w})\ge F_{\inf}>-\infty.
\end{equation}
\end{assumption}

The $L_F$-smoothness condition means that, for all $\mathbf{w}$ and $\mathbf{w}'$,
\begin{equation}
F(\mathbf{w}')
\le
F(\mathbf{w})
+
\nabla F(\mathbf{w})^{\top}
(\mathbf{w}'-\mathbf{w})
+
\frac{L_F}{2}
\left\|
\mathbf{w}'-\mathbf{w}
\right\|_2^2.
\label{eq:smoothness_app}
\end{equation}

\begin{assumption}[Screened update model]
\label{assump:filtered_update_model}
At round $t$, the screened update can be written as
\begin{equation}
\mathbf{w}_{t+1}
=
\mathbf{w}_t-\eta_t\mathbf{g}_t,
\label{eq:filtered_update_model_app}
\end{equation}
where $\eta_t\ge 0$ is the effective step size and $\mathbf{g}_t$ is the retained update direction. 
Let $\mathcal{F}_t$ contain the information available before the update noise at round $t$ is realized. 
There exist $\beta_t\ge 0$ and $\nu_t\ge 0$ such that
\begin{equation}
\left\|
\mathbb{E}[\mathbf{g}_t\mid\mathcal{F}_t]
-
\nabla F(\mathbf{w}_t)
\right\|_2
\le
\beta_t,
\label{eq:error_assumption_app}
\end{equation}
and
\begin{equation}
\mathbb{E}
\left[
\left\|
\mathbf{g}_t-
\mathbb{E}[\mathbf{g}_t\mid\mathcal{F}_t]
\right\|_2^2
\mid
\mathcal{F}_t
\right]
\le
\nu_t^2.
\label{eq:variance_assumption_app}
\end{equation}
\end{assumption}

\begin{proof}[Proof of Theorem~\ref{thm:filtered_convergence_main}]
By smoothness \eqref{eq:smoothness_app} and the update rule \eqref{eq:filtered_update_model_app},
\begin{align}
F(\mathbf{w}_{t+1})
&\le
F(\mathbf{w}_t)
-
\eta_t
\nabla F(\mathbf{w}_t)^{\top}
\mathbf{g}_t
+
\frac{L_F\eta_t^2}{2}
\|\mathbf{g}_t\|_2^2.
\label{eq:one_step_smooth_app}
\end{align}
Take conditional expectation given $\mathcal{F}_t$. 
Define
\begin{equation}
\mathbf{a}_t
:=
\nabla F(\mathbf{w}_t),
\qquad
\bar{\mathbf{g}}_t
:=
\mathbb{E}[\mathbf{g}_t\mid\mathcal{F}_t],
\qquad
\mathbf{e}_t
:=
\bar{\mathbf{g}}_t-\mathbf{a}_t.
\end{equation}
Then
\begin{equation}
\bar{\mathbf{g}}_t
=
\mathbf{a}_t+\mathbf{e}_t,
\qquad
\|\mathbf{e}_t\|_2\le\beta_t.
\end{equation}

The first-order term satisfies
\begin{align}
-\eta_t\mathbf{a}_t^{\top}\bar{\mathbf{g}}_t
&=
-\eta_t\mathbf{a}_t^{\top}
(\mathbf{a}_t+\mathbf{e}_t)
\nonumber\\
&=
-\eta_t\|\mathbf{a}_t\|_2^2
-
\eta_t\mathbf{a}_t^{\top}\mathbf{e}_t.
\end{align}
By Young's inequality,
\begin{equation}
|\mathbf{a}_t^{\top}\mathbf{e}_t|
\le
\frac{1}{4}\|\mathbf{a}_t\|_2^2
+
\|\mathbf{e}_t\|_2^2.
\end{equation}
Therefore,
\begin{equation}
-\eta_t\mathbf{a}_t^{\top}\bar{\mathbf{g}}_t
\le
-\frac{3\eta_t}{4}\|\mathbf{a}_t\|_2^2
+
\eta_t\|\mathbf{e}_t\|_2^2.
\label{eq:first_order_bound_app}
\end{equation}

Next,
\begin{align}
\mathbb{E}
\left[
\|\mathbf{g}_t\|_2^2
\mid
\mathcal{F}_t
\right]
&=
\|\bar{\mathbf{g}}_t\|_2^2
+
\mathbb{E}
\left[
\|\mathbf{g}_t-\bar{\mathbf{g}}_t\|_2^2
\mid
\mathcal{F}_t
\right]
\nonumber\\
&\le
\|\mathbf{a}_t+\mathbf{e}_t\|_2^2+\nu_t^2
\nonumber\\
&\le
2\|\mathbf{a}_t\|_2^2
+
2\|\mathbf{e}_t\|_2^2
+
\nu_t^2.
\label{eq:second_moment_bound_app}
\end{align}
Substituting \eqref{eq:first_order_bound_app} and \eqref{eq:second_moment_bound_app} into \eqref{eq:one_step_smooth_app} gives
\begin{equation}
\mathbb{E}
\left[
F(\mathbf{w}_{t+1})
\mid
\mathcal{F}_t
\right]
\le
F(\mathbf{w}_t)
-
\left(
\frac{3\eta_t}{4}
-
L_F\eta_t^2
\right)
\|\mathbf{a}_t\|_2^2
+
(\eta_t+L_F\eta_t^2)
\|\mathbf{e}_t\|_2^2
+
\frac{L_F\eta_t^2}{2}\nu_t^2.
\label{eq:descent_before_stepsize_app}
\end{equation}
If $0\le\eta_t\le 1/(4L_F)$, then
\begin{equation}
\frac{3\eta_t}{4}-L_F\eta_t^2
\ge
\frac{\eta_t}{2}.
\end{equation}
Using $\|\mathbf{e}_t\|_2^2\le\beta_t^2$, we obtain
\begin{equation}
\mathbb{E}
\left[
F(\mathbf{w}_{t+1})
\mid
\mathcal{F}_t
\right]
\le
F(\mathbf{w}_t)
-
\frac{\eta_t}{2}
\|\nabla F(\mathbf{w}_t)\|_2^2
+
(\eta_t+L_F\eta_t^2)\beta_t^2
+
\frac{L_F\eta_t^2}{2}\nu_t^2.
\label{eq:one_step_descent_app}
\end{equation}
Taking total expectation and summing from $t=0$ to $T-1$ yields
\begin{equation}
\frac{1}{2}
\sum_{t=0}^{T-1}
\eta_t
\mathbb{E}
\left[
\|\nabla F(\mathbf{w}_t)\|_2^2
\right]
\le
F(\mathbf{w}_0)
-
\mathbb{E}[F(\mathbf{w}_T)]
+
\sum_{t=0}^{T-1}
(\eta_t+L_F\eta_t^2)
\mathbb{E}[\beta_t^2]
+
\frac{L_F}{2}
\sum_{t=0}^{T-1}
\eta_t^2
\mathbb{E}[\nu_t^2].
\end{equation}
Since $F(\mathbf{w}_T)\ge F_{\inf}$,
\begin{equation}
\frac{1}{2}
\sum_{t=0}^{T-1}
\eta_t
\mathbb{E}
\left[
\|\nabla F(\mathbf{w}_t)\|_2^2
\right]
\le
F(\mathbf{w}_0)-F_{\inf}
\quad+
\sum_{t=0}^{T-1}
(\eta_t+L_F\eta_t^2)
\mathbb{E}[\beta_t^2]
\quad+
\frac{L_F}{2}
\sum_{t=0}^{T-1}
\eta_t^2
\mathbb{E}[\nu_t^2].
\end{equation}
Dividing both sides by $\frac{1}{2}S_T$ proves \eqref{eq:filtered_convergence_bound_main}.
\end{proof}

\begin{proof}[Proof of Corollary~\ref{cor:stationarity_main}]
From Theorem~\ref{thm:filtered_convergence_main},
\begin{equation}
\frac{
\sum_{t=0}^{T-1}
\eta_t
\mathbb{E}
\left[
\|\nabla F(\mathbf{w}_t)\|_2^2
\right]
}{
S_T
}
\le
\frac{
2\left(F(\mathbf{w}_0)-F_{\inf}\right)
}{
S_T
}
+
\frac{
2\sum_{t=0}^{T-1}
(\eta_t+L_F\eta_t^2)
\mathbb{E}[\beta_t^2]
}{
S_T
}
+
\frac{
L_F\sum_{t=0}^{T-1}
\eta_t^2
\mathbb{E}[\nu_t^2]
}{
S_T
}.
\label{eq:stationarity_start_app}
\end{equation}
We show that each term on the right-hand side goes to zero.

First, since $S_T\to\infty$ and $F(\mathbf{w}_0)-F_{\inf}$ is finite,
\begin{equation}
\frac{
2\left(F(\mathbf{w}_0)-F_{\inf}\right)
}{
S_T
}
\to 0.
\label{eq:first_term_vanish_app}
\end{equation}

Second, because $\eta_t\le 1/(4L_F)$,
\begin{equation}
L_F\eta_t^2
\le
\frac{1}{4}\eta_t,
\end{equation}
and therefore
\begin{equation}
\eta_t+L_F\eta_t^2
\le
\frac{5}{4}\eta_t.
\end{equation}
Thus,
\begin{align}
\frac{
\sum_{t=0}^{T-1}
(\eta_t+L_F\eta_t^2)
\mathbb{E}[\beta_t^2]
}{
S_T
}
&\le
\frac{5}{4}
\frac{
\sum_{t=0}^{T-1}
\eta_t
\mathbb{E}[\beta_t^2]
}{
S_T
}.
\end{align}
By \eqref{eq:weighted_filtering_error_condition},
\begin{equation}
\frac{
\sum_{t=0}^{T-1}
(\eta_t+L_F\eta_t^2)
\mathbb{E}[\beta_t^2]
}{
S_T
}
\to 0.
\label{eq:second_term_vanish_app}
\end{equation}

Third, since $\mathbb{E}[\nu_t^2]\le\nu^2$,
\begin{equation}
\frac{
L_F\sum_{t=0}^{T-1}
\eta_t^2
\mathbb{E}[\nu_t^2]
}{
S_T
}
\le
L_F\nu^2
\frac{
\sum_{t=0}^{T-1}\eta_t^2
}{
S_T
}.
\end{equation}
By \eqref{eq:stepsize_stationarity_condition},
\begin{equation}
L_F\nu^2
\frac{
\sum_{t=0}^{T-1}\eta_t^2
}{
S_T
}
\to 0.
\label{eq:third_term_vanish_app}
\end{equation}

Combining \eqref{eq:first_term_vanish_app}, \eqref{eq:second_term_vanish_app}, and \eqref{eq:third_term_vanish_app} in \eqref{eq:stationarity_start_app} gives
\begin{equation}
\frac{
\sum_{t=0}^{T-1}
\eta_t
\mathbb{E}
\left[
\|\nabla F(\mathbf{w}_t)\|_2^2
\right]
}{
S_T
}
\to 0.
\end{equation}

Now define a random round $\tau$ by
\begin{equation}
\Pr(\tau=t)=\frac{\eta_t}{S_T},
\qquad t=0,\dots,T-1.
\end{equation}
Then, by the law of total expectation,
\begin{align}
\mathbb{E}
\left[
\|\nabla F(\mathbf{w}_{\tau})\|_2^2
\right]
&=
\sum_{t=0}^{T-1}
\Pr(\tau=t)
\mathbb{E}
\left[
\|\nabla F(\mathbf{w}_{t})\|_2^2
\right]
\nonumber\\
&=
\frac{
\sum_{t=0}^{T-1}
\eta_t
\mathbb{E}
\left[
\|\nabla F(\mathbf{w}_{t})\|_2^2
\right]
}{
S_T
}.
\end{align}
Therefore,
\begin{equation}
\mathbb{E}
\left[
\|\nabla F(\mathbf{w}_{\tau})\|_2^2
\right]
\to 0.
\end{equation}
This proves the corollary.
\end{proof}

\begin{proof}[Proof of Corollary~\ref{cor:constant_stationary_neighborhood_main}]
Set $\eta_t=\eta$ in Theorem~\ref{thm:filtered_convergence_main}. 
Then
\begin{equation}
S_T
=
\sum_{t=0}^{T-1}\eta
=
\eta T.
\end{equation}
The left-hand side of \eqref{eq:filtered_convergence_bound_main} becomes
\begin{equation}
\frac{
\sum_{t=0}^{T-1}
\eta
\mathbb{E}
\left[
\|\nabla F(\mathbf{w}_t)\|_2^2
\right]
}{
\eta T
}
=
\frac{1}{T}
\sum_{t=0}^{T-1}
\mathbb{E}
\left[
\|\nabla F(\mathbf{w}_t)\|_2^2
\right].
\end{equation}
The optimization term becomes
\begin{equation}
\frac{
2\left(F(\mathbf{w}_0)-F_{\inf}\right)
}{
\eta T
}.
\end{equation}
The screening-error term becomes
\begin{align}
\frac{
2\sum_{t=0}^{T-1}
(\eta+L_F\eta^2)
\mathbb{E}[\beta_t^2]
}{
\eta T
}
&=
2(1+L_F\eta)
\frac{1}{T}
\sum_{t=0}^{T-1}
\mathbb{E}[\beta_t^2]
\nonumber\\
&=
2(1+L_F\eta)\overline{\beta}_T^2.
\end{align}
The variance term becomes
\begin{align}
\frac{
L_F\sum_{t=0}^{T-1}
\eta^2
\mathbb{E}[\nu_t^2]
}{
\eta T
}
&=
L_F\eta
\frac{1}{T}
\sum_{t=0}^{T-1}
\mathbb{E}[\nu_t^2]
\nonumber\\
&=
L_F\eta\overline{\nu}_T^2.
\end{align}
Substituting these three expressions into \eqref{eq:filtered_convergence_bound_main} proves
\eqref{eq:constant_stationary_neighborhood_main}.
\end{proof}

\subsection{Screening-Error Decomposition}
\label{app:filtering_error_decomposition}

We now connect $\beta_t$ to the retained set returned by \FedLNS{}. 
This decomposition relates the screening error to retained malicious weight, benign-client representativeness, and local-training drift.

Let $\mathcal{C}_{\mathrm{mal}}$ denote the malicious client set and $\mathcal{C}_{\mathrm{ben}}$ denote the benign client set. 
At round $t$, \FedLNS{} retains $\mathcal{R}_t$. 
For each retained client $i\in\mathcal{R}_t$, define
\begin{equation}
\alpha_{t,i}
:=
\frac{n_{t,i}}{\sum_{j\in\mathcal{R}_t}n_{t,j}}.
\end{equation}
The total retained malicious weight is
\begin{equation}
\mu_t
:=
\sum_{a\in\mathcal{R}_t\cap\mathcal{C}_{\mathrm{mal}}}
\alpha_{t,a}.
\label{eq:retained_malicious_mass_app}
\end{equation}

For a retained benign client $i$, define its conditional expected local direction as
\begin{equation}
\bar{\mathbf{g}}_{t,i}^{B}
:=
\mathbb{E}
\left[
\mathbf{g}_{t,i}
\mid
\mathcal{F}_t
\right],
\qquad i\in\mathcal{C}_{\mathrm{ben}}.
\end{equation}
We decompose it as
\begin{equation}
\bar{\mathbf{g}}_{t,i}^{B}
=
\nabla F_i(\mathbf{w}_t)
+
\mathbf{r}_{t,i},
\label{eq:local_drift_decomp_app}
\end{equation}
where $\mathbf{r}_{t,i}$ is the local-training drift vector. 
For a retained malicious client $a$, define
\begin{equation}
\mathbf{h}_{t,a}
:=
\mathbb{E}
\left[
\mathbf{g}_{t,a}
\mid
\mathcal{F}_t
\right],
\qquad a\in\mathcal{C}_{\mathrm{mal}}.
\end{equation}
The vector $\mathbf{h}_{t,a}$ can be arbitrary.

The conditional mean retained direction is
\begin{align}
\mathbb{E}[\mathbf{g}_t\mid\mathcal{F}_t]
&=
\sum_{i\in\mathcal{R}_t\cap\mathcal{C}_{\mathrm{ben}}}
\alpha_{t,i}
\left(
\nabla F_i(\mathbf{w}_t)
+
\mathbf{r}_{t,i}
\right)
+
\sum_{a\in\mathcal{R}_t\cap\mathcal{C}_{\mathrm{mal}}}
\alpha_{t,a}\mathbf{h}_{t,a}.
\label{eq:conditional_retained_direction_app}
\end{align}

Assume $\mu_t<1$. 
Define the retained benign gradient after renormalizing the benign retained weights:
\begin{equation}
\bar{\mathbf{g}}_t^{B,\mathrm{ret}}
:=
\sum_{i\in\mathcal{R}_t\cap\mathcal{C}_{\mathrm{ben}}}
\frac{\alpha_{t,i}}{1-\mu_t}
\nabla F_i(\mathbf{w}_t).
\end{equation}
The retained benign representativeness error is
\begin{equation}
\chi_t
:=
\left\|
\bar{\mathbf{g}}_t^{B,\mathrm{ret}}
-
\nabla F(\mathbf{w}_t)
\right\|_2.
\label{eq:benign_representativeness_app}
\end{equation}
The retained benign local-drift magnitude is
\begin{equation}
\varrho_t
:=
\left\|
\sum_{i\in\mathcal{R}_t\cap\mathcal{C}_{\mathrm{ben}}}
\alpha_{t,i}\mathbf{r}_{t,i}
\right\|_2.
\label{eq:local_drift_magnitude_app}
\end{equation}

\begin{assumption}[Bounded retained directions]
\label{assump:bounded_retained_directions}
There exists $G>0$ such that
\begin{equation}
\|\mathbf{h}_{t,a}\|_2\le G,
\qquad
\|\nabla F(\mathbf{w}_t)\|_2\le G
\end{equation}
for all retained malicious clients $a$ and all rounds considered.
\end{assumption}

\begin{proposition}[Screening-error bound]
\label{prop:filtering_error_bound_app}
Under the definitions above and Assumption~\ref{assump:bounded_retained_directions},
\begin{equation}
\left\|
\mathbb{E}[\mathbf{g}_t\mid\mathcal{F}_t]
-
\nabla F(\mathbf{w}_t)
\right\|_2
\le
(1-\mu_t)\chi_t
+
2G\mu_t
+
\varrho_t.
\label{eq:filtering_error_bound_app}
\end{equation}
Therefore, one may choose
\begin{equation}
\beta_t
=
(1-\mu_t)\chi_t
+
2G\mu_t
+
\varrho_t.
\label{eq:beta_choice_app}
\end{equation}
\end{proposition}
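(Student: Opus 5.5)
The plan is a direct add-and-subtract argument on the conditional mean \eqref{eq:conditional_retained_direction_app}, followed by the triangle inequality. I take \eqref{eq:conditional_retained_direction_app} as given. This implicitly requires that $\mathbf{g}_t=\sum_{i\in\mathcal{R}_t}\alpha_{t,i}\mathbf{g}_{t,i}$ and that $\mathcal{R}_t$ and the weights $\alpha_{t,i}$ are $\mathcal{F}_t$-measurable, so they can be pulled out of the conditional expectation. I will state this as part of the setup rather than prove it.

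\textbf{Step 1: weight bookkeeping.} Since $\mathcal{R}_t$ is the disjoint union of $\mathcal{R}_t\cap\mathcal{C}_{\mathrm{ben}}$ and $\mathcal{R}_t\cap\mathcal{C}_{\mathrm{mal}}$, and $\sum_{i\in\mathcal{R}_t}\alpha_{t,i}=1$, we have $\sum_{i\in\mathcal{R}_t\cap\mathcal{C}_{\mathrm{ben}}}\alpha_{t,i}=1-\mu_t$. The definition of $\bar{\mathbf{g}}_t^{B,\mathrm{ret}}$ then gives
\[
\sum_{i\in\mathcal{R}_t\cap\mathcal{C}_{\mathrm{ben}}}\alpha_{t,i}\nabla F_i(\mathbf{w}_t)=(1-\mu_t)\,\bar{\mathbf{g}}_t^{B,\mathrm{ret}} .
\]
This is where the hypothesis $\mu_t<1$ is used.

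\textbf{Step 2: split the target gradient.} Write $\nabla F(\mathbf{w}_t)=(1-\mu_t)\nabla F(\mathbf{w}_t)+\sum_{a\in\mathcal{R}_t\cap\mathcal{C}_{\mathrm{mal}}}\alpha_{t,a}\nabla F(\mathbf{w}_t)$. Subtracting this from \eqref{eq:conditional_retained_direction_app} gives the exact identity
\[
\mathbb{E}[\mathbf{g}_t\mid\mathcal{F}_t]-\nabla F(\mathbf{w}_t)
=(1-\mu_t)\bigl(\bar{\mathbf{g}}_t^{B,\mathrm{ret}}-\nabla F(\mathbf{w}_t)\bigr)
+\sum_{a\in\mathcal{R}_t\cap\mathcal{C}_{\mathrm{mal}}}\alpha_{t,a}\bigl(\mathbf{h}_{t,a}-\nabla F(\mathbf{w}_t)\bigr)
+\sum_{i\in\mathcal{R}_t\cap\mathcal{C}_{\mathrm{ben}}}\alpha_{t,i}\mathbf{r}_{t,i}.
\]

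\textbf{Step 3: bound each term.} Apply the triangle inequality to the identity.
\begin{itemize}
\item The first term has norm exactly $(1-\mu_t)\chi_t$, by \eqref{eq:benign_representativeness_app}.
\item The third term has norm exactly $\varrho_t$, by \eqref{eq:local_drift_magnitude_app}.
\item For the middle term, Assumption~\ref{assump:bounded_retained_directions} gives $\|\mathbf{h}_{t,a}-\nabla F(\mathbf{w}_t)\|_2\le 2G$. Since the $\alpha_{t,a}$ are nonnegative, the middle term is therefore at most $2G\sum_a\alpha_{t,a}=2G\mu_t$.
\end{itemize}
Summing the three bounds yields \eqref{eq:filtering_error_bound_app}. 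This right-hand side is a valid $\beta_t$ in \eqref{eq:error_assumption_app}, which gives \eqref{eq:beta_choice_app}.

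The only delicate point is the measurability and representation assumption behind \eqref{eq:conditional_retained_direction_app}. In particular, $\eta_t$ must be absorbed so that $\mathbf{g}_t$ is exactly the $\alpha$-weighted combination of the per-client directions $\mathbf{g}_{t,i}$. Once that is granted, the argument is purely algebraic.
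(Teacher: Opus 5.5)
Your proposal is correct and follows essentially the paper's own proof. Both decompose $\mathbb{E}[\mathbf{g}_t\mid\mathcal{F}_t]-\nabla F(\mathbf{w}_t)$, via the conditional-mean identity, into a renormalized retained-benign term, a retained-malicious term and the drift term, then apply the triangle inequality with Assumption~\ref{assump:bounded_retained_directions}. The only cosmetic difference is where the leftover $-\mu_t\nabla F(\mathbf{w}_t)$ goes: you attach it to the malicious sum and use $\|\mathbf{h}_{t,a}-\nabla F(\mathbf{w}_t)\|_2\le 2G$, whereas the paper keeps it with the benign part and bounds $\|\mathbf{h}_{t,a}\|_2$ and $\|\nabla F(\mathbf{w}_t)\|_2$ by $G$ separately.
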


\begin{proof}
Using \eqref{eq:conditional_retained_direction_app},
\begin{equation}
\mathbb{E}[\mathbf{g}_t\mid\mathcal{F}_t]
-
\nabla F(\mathbf{w}_t)
=
\sum_{i\in\mathcal{R}_t\cap\mathcal{C}_{\mathrm{ben}}}
\alpha_{t,i}
\nabla F_i(\mathbf{w}_t)
-
\nabla F(\mathbf{w}_t)
+
\sum_{a\in\mathcal{R}_t\cap\mathcal{C}_{\mathrm{mal}}}
\alpha_{t,a}\mathbf{h}_{t,a}
+
\sum_{i\in\mathcal{R}_t\cap\mathcal{C}_{\mathrm{ben}}}
\alpha_{t,i}\mathbf{r}_{t,i}.
\label{eq:error_decompose_step_app}
\end{equation}
The benign-gradient part satisfies
\begin{align}
\sum_{i\in\mathcal{R}_t\cap\mathcal{C}_{\mathrm{ben}}}
\alpha_{t,i}
\nabla F_i(\mathbf{w}_t)
-
\nabla F(\mathbf{w}_t)
\nonumber
&=
(1-\mu_t)
\bar{\mathbf{g}}_t^{B,\mathrm{ret}}
-
\nabla F(\mathbf{w}_t)
\nonumber\\
&=
(1-\mu_t)
\left(
\bar{\mathbf{g}}_t^{B,\mathrm{ret}}
-
\nabla F(\mathbf{w}_t)
\right)
-
\mu_t\nabla F(\mathbf{w}_t).
\end{align}
Thus,
\begin{equation}
\left\|
\sum_{i\in\mathcal{R}_t\cap\mathcal{C}_{\mathrm{ben}}}
\alpha_{t,i}
\nabla F_i(\mathbf{w}_t)
-
\nabla F(\mathbf{w}_t)
\right\|_2
\le
(1-\mu_t)\chi_t
+
\mu_t\|\nabla F(\mathbf{w}_t)\|_2.
\label{eq:benign_part_bound_app}
\end{equation}
The retained malicious part satisfies
\begin{align}
\left\|
\sum_{a\in\mathcal{R}_t\cap\mathcal{C}_{\mathrm{mal}}}
\alpha_{t,a}\mathbf{h}_{t,a}
\right\|_2
&\le
\sum_{a\in\mathcal{R}_t\cap\mathcal{C}_{\mathrm{mal}}}
\alpha_{t,a}\|\mathbf{h}_{t,a}\|_2
\nonumber\\
&\le
G\mu_t.
\label{eq:malicious_part_bound_app}
\end{align}
The local-drift term is $\varrho_t$ by definition. 
Combining the three bounds and using 
$\|\nabla F(\mathbf{w}_t)\|_2\le G$ gives
\begin{equation}
\left\|
\mathbb{E}[\mathbf{g}_t\mid\mathcal{F}_t]
-
\nabla F(\mathbf{w}_t)
\right\|_2
\le
(1-\mu_t)\chi_t
+
2G\mu_t
+
\varrho_t.
\end{equation}
This proves the result.
\end{proof}

\begin{lemma}[Sufficient condition for vanishing screening error]
\label{lem:vanishing_filtering_error}
Suppose
\begin{equation}
\beta_t
\le
(1-\mu_t)\chi_t
+
2G\mu_t
+
\varrho_t.
\end{equation}
If
\begin{equation}
\frac{\sum_{t=0}^{T-1}\eta_t\mathbb{E}[\chi_t^2]}{S_T}\to 0,
\qquad
\frac{\sum_{t=0}^{T-1}\eta_t\mathbb{E}[\mu_t^2]}{S_T}\to 0,
\qquad
\frac{\sum_{t=0}^{T-1}\eta_t\mathbb{E}[\varrho_t^2]}{S_T}\to 0,
\end{equation}
then
\begin{equation}
\frac{
\sum_{t=0}^{T-1}
\eta_t
\mathbb{E}[\beta_t^2]
}{
S_T
}
\to 0.
\end{equation}
\end{lemma}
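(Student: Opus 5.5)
The plan is to bound $\beta_t^2$ pointwise by a constant multiple of $\chi_t^2+\mu_t^2+\varrho_t^2$, take expectations, and then pass to the weighted averages term by term.

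First, we handle the sign of $\beta_t$. The lemma only gives an upper bound on $\beta_t$, so we use that $\beta_t\ge 0$ in the setting of Theorem~\ref{thm:filtered_convergence_main}. Together with $0\le 1-\mu_t\le 1$ (recall $\mu_t\in[0,1)$ because the $\alpha_{t,i}$ are normalized weights), this gives
\[
0\le\beta_t\le\chi_t+2G\mu_t+\varrho_t .
\]
Squaring preserves this inequality. We then apply $(a+b+c)^2\le 3(a^2+b^2+c^2)$, which is Cauchy--Schwarz for three terms, and obtain
\begin{equation*}
\beta_t^2\le 3\chi_t^2+12G^2\mu_t^2+3\varrho_t^2 .
\end{equation*}

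Next, we take expectations and multiply by $\eta_t\ge 0$. Summing over $t=0,\dots,T-1$ and dividing by $S_T>0$ yields
\begin{equation*}
\frac{\sum_{t}\eta_t\mathbb{E}[\beta_t^2]}{S_T}
\le
3\frac{\sum_t\eta_t\mathbb{E}[\chi_t^2]}{S_T}
+12G^2\frac{\sum_t\eta_t\mathbb{E}[\mu_t^2]}{S_T}
+3\frac{\sum_t\eta_t\mathbb{E}[\varrho_t^2]}{S_T}.
\end{equation*}
Each term on the right tends to zero by the three hypotheses, and $G$ is a fixed constant by Assumption~\ref{assump:bounded_retained_directions}. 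The left-hand side is nonnegative, so it also tends to zero by the squeeze theorem.

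There is no substantive obstacle. The only points needing care are the nonnegativity of $\beta_t$ before squaring, and the convention when $\mu_t=1$ (all retained weight malicious), where $\chi_t$ is undefined. In that case we set $(1-\mu_t)\chi_t:=0$, so the bound $\beta_t\le 2G\mu_t+\varrho_t$ still holds and the same estimate goes through.
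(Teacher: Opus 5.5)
Your proposal is correct and follows the paper's proof essentially line by line. The paper also applies $(a+b+c)^2\le 3(a^2+b^2+c^2)$, drops the factor $(1-\mu_t)^2\le 1$ to obtain $\beta_t^2\le 3\chi_t^2+12G^2\mu_t^2+3\varrho_t^2$, and then passes to the $\eta_t$-weighted averages term by term. Your explicit handling of the nonnegativity of $\beta_t$ before squaring, and of the degenerate case $\mu_t=1$, are small points of care that the paper leaves implicit.
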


\begin{proof}
Using $(a+b+c)^2\le 3a^2+3b^2+3c^2$, we obtain
\begin{align}
\beta_t^2
&\le
3(1-\mu_t)^2\chi_t^2
+
12G^2\mu_t^2
+
3\varrho_t^2.
\end{align}
Since $(1-\mu_t)^2\le 1$, this gives
\begin{equation}
\beta_t^2
\le
3\chi_t^2
+
12G^2\mu_t^2
+
3\varrho_t^2.
\end{equation}
Multiplying by $\eta_t$, summing from $t=0$ to $T-1$, taking expectation, and dividing by $S_T$ yields
\begin{equation}
\frac{
\sum_{t=0}^{T-1}
\eta_t\mathbb{E}[\beta_t^2]
}{
S_T
}
\le
3
\frac{
\sum_{t=0}^{T-1}
\eta_t\mathbb{E}[\chi_t^2]
}{
S_T
}
+
12G^2
\frac{
\sum_{t=0}^{T-1}
\eta_t\mathbb{E}[\mu_t^2]
}{
S_T
}
+
3
\frac{
\sum_{t=0}^{T-1}
\eta_t\mathbb{E}[\varrho_t^2]
}{
S_T
}.
\end{equation}
Each term on the right-hand side converges to zero by assumption. 
Therefore,
\begin{equation}
\frac{
\sum_{t=0}^{T-1}
\eta_t\mathbb{E}[\beta_t^2]
}{
S_T
}
\to 0.
\end{equation}
\end{proof}

\subsection{Local-Training Drift Bound}
\label{app:local_drift_bound}

For completeness, we provide a bound on $\varrho_t$ under local gradient descent.

\begin{assumption}[Bounded benign gradients during local training]
\label{assump:bounded_local_gradients}
For every benign client $i$, local step $e$, and round $t$,
\begin{equation}
\left\|
\nabla F_i(\mathbf{w}_{t,i}^{(e)})
\right\|_2
\le
G_B.
\end{equation}
Also, each $F_i$ is $L_i$-smooth with $L_i\le L_{\max}$.
\end{assumption}

Suppose benign client $i$ performs $E$ local gradient steps with local learning rate $\eta_{\mathrm{loc}}$:
\begin{equation}
\mathbf{w}_{t,i}^{(0)}
=
\mathbf{w}_t,
\qquad
\mathbf{w}_{t,i}^{(e+1)}
=
\mathbf{w}_{t,i}^{(e)}
-
\eta_{\mathrm{loc}}
\nabla F_i(\mathbf{w}_{t,i}^{(e)}),
\qquad e=0,\dots,E-1.
\end{equation}
Define the average local gradient direction
\begin{equation}
\bar{\mathbf{g}}_{t,i}^{B}
=
\frac{1}{E}
\sum_{e=0}^{E-1}
\nabla F_i(\mathbf{w}_{t,i}^{(e)}).
\end{equation}
Then
\begin{equation}
\mathbf{r}_{t,i}
=
\bar{\mathbf{g}}_{t,i}^{B}
-
\nabla F_i(\mathbf{w}_t).
\end{equation}

\begin{lemma}[Local-training drift bound]
\label{lem:local_drift_bound}
Under Assumption~\ref{assump:bounded_local_gradients},
\begin{equation}
\|\mathbf{r}_{t,i}\|_2
\le
\frac{L_{\max}\eta_{\mathrm{loc}}G_B(E-1)}{2}.
\label{eq:local_drift_bound_app}
\end{equation}
Consequently,
\begin{equation}
\varrho_t
\le
(1-\mu_t)
\frac{L_{\max}\eta_{\mathrm{loc}}G_B(E-1)}{2}.
\label{eq:weighted_local_drift_bound_app}
\end{equation}
\end{lemma}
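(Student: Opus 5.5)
The plan is to write the local drift as a telescoping sum along the deterministic local gradient-descent trajectory. We then control each term by smoothness and by the bounded local gradients of Assumption~\ref{assump:bounded_local_gradients}. The second claim then follows by the triangle inequality over the retained benign clients.

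\textbf{Step 1: displacement of each iterate.} Since $\mathbf{w}_{t,i}^{(0)}=\mathbf{w}_t$ and each step moves by $\eta_{\mathrm{loc}}\nabla F_i(\mathbf{w}_{t,i}^{(e')})$, whose norm is at most $G_B$, the triangle inequality gives
\begin{equation*}
\|\mathbf{w}_{t,i}^{(e)}-\mathbf{w}_t\|_2\le \sum_{e'=0}^{e-1}\eta_{\mathrm{loc}}\|\nabla F_i(\mathbf{w}_{t,i}^{(e')})\|_2\le e\,\eta_{\mathrm{loc}}G_B .
\end{equation*}

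\textbf{Step 2: the gradient gap at each step.} By $L_i$-smoothness with $L_i\le L_{\max}$, the gradient gap satisfies
\begin{equation*}
\|\nabla F_i(\mathbf{w}_{t,i}^{(e)})-\nabla F_i(\mathbf{w}_t)\|_2\le L_{\max}\, e\,\eta_{\mathrm{loc}}G_B .
\end{equation*}

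\textbf{Step 3: averaging.} Write
\begin{equation*}
\mathbf{r}_{t,i}=\frac{1}{E}\sum_{e=0}^{E-1}\big(\nabla F_i(\mathbf{w}_{t,i}^{(e)})-\nabla F_i(\mathbf{w}_t)\big)
\end{equation*}
and apply the triangle inequality. Then
\begin{equation*}
\|\mathbf{r}_{t,i}\|_2\le \frac{L_{\max}\eta_{\mathrm{loc}}G_B}{E}\sum_{e=0}^{E-1}e=\frac{L_{\max}\eta_{\mathrm{loc}}G_B(E-1)}{2},
\end{equation*}
using $\sum_{e=0}^{E-1}e=E(E-1)/2$. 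This is \eqref{eq:local_drift_bound_app}.

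\textbf{Step 4: the weighted drift $\varrho_t$.} Starting from \eqref{eq:local_drift_magnitude_app}, the triangle inequality gives
\begin{equation*}
\varrho_t\le \sum_{i\in\mathcal{R}_t\cap\mathcal{C}_{\mathrm{ben}}}\alpha_{t,i}\|\mathbf{r}_{t,i}\|_2 .
\end{equation*}
The retained weights sum to one, and the malicious part of that mass is $\mu_t$ by \eqref{eq:retained_malicious_mass_app}. Hence $\sum_{i\in\mathcal{R}_t\cap\mathcal{C}_{\mathrm{ben}}}\alpha_{t,i}=1-\mu_t$. Inserting the uniform per-client bound from Step 3 yields \eqref{eq:weighted_local_drift_bound_app}.

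\textbf{Main obstacle.} The main point needing care is the identification of the direction. In this lemma $\bar{\mathbf{g}}_{t,i}^{B}$ is defined as the deterministic average of the local gradients, so the conditional expectation in \eqref{eq:local_drift_decomp_app} reduces to this average. The argument therefore needs no stochastic reasoning beyond noting that the full-gradient local steps are $\mathcal{F}_t$-measurable. Everything else is routine.
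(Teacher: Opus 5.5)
Your proposal is correct and follows essentially the same argument as the paper. You bound each iterate's displacement by $e\,\eta_{\mathrm{loc}}G_B$, turn it into a gradient gap via $L_{\max}$-smoothness, average using $\sum_{e=0}^{E-1}e=E(E-1)/2$, and then pass to $\varrho_t$ by the triangle inequality with $\sum_{i\in\mathcal{R}_t\cap\mathcal{C}_{\mathrm{ben}}}\alpha_{t,i}=1-\mu_t$.
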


\begin{proof}
By $L_i$-smoothness,
\begin{equation}
\left\|
\nabla F_i(\mathbf{w}_{t,i}^{(e)})
-
\nabla F_i(\mathbf{w}_t)
\right\|_2
\le
L_i
\left\|
\mathbf{w}_{t,i}^{(e)}
-
\mathbf{w}_t
\right\|_2.
\end{equation}
Because each local step has norm at most $\eta_{\mathrm{loc}}G_B$,
\begin{equation}
\left\|
\mathbf{w}_{t,i}^{(e)}
-
\mathbf{w}_t
\right\|_2
\le
e\eta_{\mathrm{loc}}G_B.
\end{equation}
Thus,
\begin{align}
\|\mathbf{r}_{t,i}\|_2
&=
\left\|
\frac{1}{E}
\sum_{e=0}^{E-1}
\left[
\nabla F_i(\mathbf{w}_{t,i}^{(e)})
-
\nabla F_i(\mathbf{w}_t)
\right]
\right\|_2
\nonumber\\
&\le
\frac{1}{E}
\sum_{e=0}^{E-1}
L_i
\left\|
\mathbf{w}_{t,i}^{(e)}
-
\mathbf{w}_t
\right\|_2
\nonumber\\
&\le
\frac{1}{E}
\sum_{e=0}^{E-1}
L_{\max}e\eta_{\mathrm{loc}}G_B
\nonumber\\
&=
\frac{L_{\max}\eta_{\mathrm{loc}}G_B(E-1)}{2}.
\end{align}
Finally,
\begin{align}
\varrho_t
&=
\left\|
\sum_{i\in\mathcal{R}_t\cap\mathcal{C}_{\mathrm{ben}}}
\alpha_{t,i}\mathbf{r}_{t,i}
\right\|_2
\nonumber\\
&\le
\sum_{i\in\mathcal{R}_t\cap\mathcal{C}_{\mathrm{ben}}}
\alpha_{t,i}
\|\mathbf{r}_{t,i}\|_2
\nonumber\\
&\le
(1-\mu_t)
\frac{L_{\max}\eta_{\mathrm{loc}}G_B(E-1)}{2}.
\end{align}
This proves the lemma.
\end{proof}

\clearpage
\section{Detailed Experimental Settings}
\label{app:experimental_details}

This appendix provides the complete experimental configuration used in Section~\ref{sec:experiments}, including the datasets, tokenization, client partitions, model architectures, optimization settings, attack configuration, baseline parameters, and \FedLNS{} implementation details.

\subsection{Federated Learning Protocol}
\label{app:fl_protocol}

Table~\ref{tab:app_fl_protocol} summarizes the simulation protocol shared by all three model families: a population of $K=200$ clients participates at rate $0.1$ ($20$ clients per round) over $200$ communication rounds, with local AdamW optimization (weight decay $0.01$), evaluation after every round, and results averaged over three random seeds under both IID and non-IID partitions.

\begin{table}[h!]
\centering
\caption{Federated simulation protocol used across model families.}
\label{tab:app_fl_protocol}
\begin{tabular}{p{0.47\linewidth}p{0.47\linewidth}}
\toprule
Item & Value \\
\midrule
Number of clients & $K=200$ \\
Participation rate & $0.1$ \\
Clients per round & $20$ \\
Communication rounds & $200$ \\
Random seeds & $100,200,300$ \\
Data partitions & IID and non-IID \\
Non-IID construction & Two shards per client \\
Local optimizer & AdamW \\
Weight decay & $0.01$ \\
Evaluation frequency & Every communication round \\
\bottomrule
\end{tabular}
\end{table}

\subsection{Datasets and Tokenization}
\label{app:datasets_tokenization}

Table~\ref{tab:app_dataset_tokenization} lists the dataset, tokenizer, and sequence-construction choices for each model family, and Table~\ref{tab:app_client_partitions} describes how the resulting training units are partitioned across clients.

\begin{table*}[h!]
\centering
\caption{Dataset and tokenization settings. All models are initialized from scratch; tokenizer names specify tokenization only.}
\label{tab:app_dataset_tokenization}
\begin{tabular}{p{0.14\textwidth}p{0.14\textwidth}p{0.24\textwidth}p{0.22\textwidth}p{0.13\textwidth}}
\toprule
Model family & Dataset & Tokenizer & Sequence construction & Evaluation split used for selection and reporting \\
\midrule
GPT-style LM
& WikiText-2 raw
& \texttt{gpt2}
& Causal LM blocks of length $128$
& Official validation split \\
BERT-style MLM
& Tiny Shakespeare
& \texttt{bert-base-uncased}
& MLM blocks of length $128$ with [CLS]/[SEP]
& Final $10\%$ of the text \\
LLaMA-style LM
& StackOverflow posts
& \texttt{TinyLlama/TinyLlama-1.1B-Chat-v1.0}
& Causal LM blocks of length $128$
& Evaluation split defined during dataset construction \\
\bottomrule
\end{tabular}
\end{table*}

\begin{table*}[h!]
\centering
\caption{Client data partition construction for IID and non-IID settings.}
\label{tab:app_client_partitions}
\begin{tabular}{p{0.19\textwidth}p{0.19\textwidth}p{0.26\textwidth}p{0.26\textwidth}}
\toprule
Model family & Training units before partitioning & IID partition & Non-IID partition \\
\midrule
GPT-style LM / WikiText-2
& Causal LM token blocks of length $128$
& Shuffle all block indices and assign them round-robin to $200$ clients
& Split ordered block indices into $400$ contiguous shards; assign two shuffled shards per client \\
BERT-style MLM / Tiny Shakespeare
& BERT MLM blocks of length $128$ from the first $90\%$ of the text
& Shuffle all block indices and assign them round-robin to $200$ clients
& Split ordered block indices into $400$ contiguous shards; assign two shuffled shards per client \\
LLaMA-style LM / StackOverflow
& Causal LM token blocks of length $128$ from up to $50{,}000$ training posts
& Shuffle all block indices and assign them round-robin to $200$ clients
& Split ordered block indices into $400$ contiguous shards; assign two shuffled shards per client \\
\bottomrule
\end{tabular}
\end{table*}

For each model family, the training corpus is first tokenized and converted into fixed-length training blocks, after which the client partition is applied over the training-block indices. The evaluation split is never partitioned across clients. It is used for global checkpoint selection and for reporting test loss, perplexity, and token-level semantic entropy.

In the IID setting, all training block indices are shuffled using the experiment seed and distributed to the $K=200$ clients in round-robin order, so each client receives an approximately equal number of blocks sampled from across the entire training corpus.

In the non-IID setting, the ordered list of block indices is first divided into $K \times 2 = 400$ contiguous shards, the shard list is shuffled, and each client is assigned two shards. Clients therefore still have approximately balanced data sizes, but each one observes only a small number of localized regions of the corpus -- inducing client-level heterogeneity through topic, style, speaker, or source locality rather than through class-label skew.

\subsection{Model Architectures and Optimization}
\label{app:model_architectures}

Table~\ref{tab:app_model_architectures} lists the scratch transformer architecture used for each model family, and Table~\ref{tab:app_optimization_eval} lists the corresponding local optimization and evaluation settings.

\begin{table*}[h!]
\centering
\caption{Scratch transformer architectures.}
\label{tab:app_model_architectures}
\begin{tabular}{p{0.29\textwidth}p{0.19\textwidth}p{0.19\textwidth}p{0.19\textwidth}}
\toprule
Hyperparameter & GPT-style LM & BERT-style MLM & LLaMA-style LM \\
\midrule
Hidden / embedding dimension & $256$ & $256$ & $256$ \\
Number of layers & $12$ & $12$ & $8$ \\
Attention heads & $16$ & $16$ & $16$ \\
Key-value heads & -- & -- & $8$ \\
Intermediate dimension & -- & $1024$ & $1536$ \\
Context / block size & $128$ & $128$ & $128$ \\
Max position embeddings & $128$ & $130$ & $256$ \\
Dropout & $0.1$ & Default config & Default config \\
RMSNorm epsilon & -- & -- & $10^{-6}$ \\
Initializer range & Default config & Default config & $0.02$ \\
\bottomrule
\end{tabular}
\end{table*}

\label{app:optimization_hyperparameters}
\begin{table*}[h!]
\centering
\caption{Optimization and evaluation settings.}
\label{tab:app_optimization_eval}
\begin{tabular}{p{0.29\textwidth}p{0.19\textwidth}p{0.19\textwidth}p{0.19\textwidth}}
\toprule
Setting & GPT-style LM & BERT-style MLM & LLaMA-style LM \\
\midrule
Local epochs & $3$ & $3$ & $2$ \\
Batch size & $8$ & $8$ & $8$ \\
Learning rate & $3\times 10^{-4}$ & $3\times 10^{-4}$ & $10^{-5}$ \\
Weight decay & $0.01$ & $0.01$ & $0.01$ \\
Train-evaluation batches & $200$ & $200$ & $50$ \\
Evaluation batches & $200$ & $200$ & $50$ \\
MLM masking probability & -- & $0.15$ & -- \\
Maximum training examples & Full split & Full split & $50{,}000$ \\
Maximum evaluation examples & Full split & Full split & $5{,}000$ \\
\bottomrule
\end{tabular}
\end{table*}

\newpage
\subsection{Attack and Aggregation-Baseline Configuration}
\label{app:attack_configuration}

Table~\ref{tab:app_attack_config} details the target-corruption attack used throughout the experiments: input tokens are left unchanged, only training targets are corrupted, and each malicious client corrupts its labels with probability $p_{\mathrm{corr}}=1.0$. The main text reports results at the malicious-client fraction $\alpha=0.4$ ($80$ of $200$ clients); this appendix evaluates the full sweep $\alpha\in\{0,0.1,0.2,0.3,0.4\}$.

\begin{table}[h!]
\centering
\caption{Target-corruption attack settings.}
\label{tab:app_attack_config}
\begin{tabular}{p{0.40\linewidth}p{0.50\linewidth}}
\toprule
Item & Value \\
\midrule
Attack type & Target corruption \\
Input tokens & Kept unchanged \\
Corrupted quantity & Training targets / labels \\
Corruption probability & $p_{\mathrm{corr}}=1.0$ \\
Malicious fractions in sweep & $0,0.1,0.2,0.3,0.4$ \\
Strongest main-text attack fraction & $0.4$ \\
Malicious clients in the population at $0.4$ & $80$ out of $200$ \\
Special-token handling & Padding labels avoided; EOS/padding avoided when sampling corrupt labels \\
\bottomrule
\end{tabular}
\end{table}

\label{app:baseline_settings}
Table~\ref{tab:app_baselines} summarizes the aggregation methods compared against \FedLNS{}, and Table~\ref{tab:app_robust_agg_params} lists the attack-dependent parameters used for each population-level malicious-client fraction: the trim fraction for Trimmed Mean and the parameters $f$ and $m$ for Multi-Krum, with $20$ clients selected per round. These two baselines are therefore configured using the population-level corruption fraction.

\begin{table*}[h!]
\centering
\caption{Aggregation methods compared in the experiments.}
\label{tab:app_baselines}
\begin{tabular}{p{0.24\textwidth}p{0.68\textwidth}}
\toprule
Method & Implementation summary \\
\midrule
FedAvg & Sample-weighted average of client updates \\
Norm-Bounded FedAvg & Each update is clipped by an $\ell_2$ norm bound before FedAvg \\
Coordinate Median & Coordinate-wise median of flattened updates \\
Trimmed Mean & Coordinate-wise trimming followed by averaging \\
Multi-Krum & Distance-based update selection followed by averaging selected updates \\
FLAME & Two-cluster filtering, majority-cluster retention, norm clipping, and Gaussian noise \\
\FedLNS{} & Normalization-signature bank, median/MAD standardization, and BIC-guided GMM screening \\
\bottomrule
\end{tabular}
\end{table*}

\begin{table}[h!]
\centering
\caption{Robust-aggregation parameters used for each malicious-client fraction. The number of selected clients per round is $20$.}
\label{tab:app_robust_agg_params}
\begin{tabular}{p{0.25\linewidth}p{0.21\linewidth}p{0.20\linewidth}p{0.20\linewidth}}
\toprule
Malicious fraction & Trim fraction & Multi-Krum $f$ & Multi-Krum $m$ \\
\midrule
$0.0$ & $0.0$ & $0$ & $20$ \\
$0.1$ & $0.1$ & $2$ & $18$ \\
$0.2$ & $0.2$ & $4$ & $16$ \\
$0.3$ & $0.3$ & $6$ & $14$ \\
$0.4$ & $0.4$ & $8$ & $12$ \\
\bottomrule
\end{tabular}
\end{table}

\newpage
\subsection{FedLNS Implementation Details}
\label{app:fedlns_implementation_details}

Table~\ref{tab:app_fedlns_details} lists the \FedLNS{} implementation settings used in all experiments, covering normalization-parameter extraction, bank update timing, robust standardization, and BIC-guided one-vs-two-component GMM screening.

\begin{table*}[h!]
\centering
\caption{FedLNS implementation settings used in the experiments.}
\label{tab:app_fedlns_details}
\begin{tabular}{p{0.28\textwidth}p{0.64\textwidth}}
\toprule
Item & Setting \\
\midrule
Signature source
& Trainable normalization-layer parameters whose names contain \texttt{ln} or \texttt{layernorm} and end with \texttt{weight} or \texttt{bias} \\
Bank entry
& One latest per-layer signature dictionary per client \\
Bank update timing
& Before screening in the current round \\
Bank update rule
& Latest replacement; EMA coefficient $\beta=0$ \\
Robust center
& Coordinate-wise bank median \\
Robust scale
& Coordinate-wise MAD \\
MAD floor
& $10^{-6}$ \\
Client deviation score
& Median absolute standardized deviation \\
GMM candidates
& One diagonal Gaussian vs. two diagonal Gaussians \\
GMM model selection
& BIC for default \FedLNS{} \\
GMM covariance
& Diagonal \\
GMM regularization
& $10^{-4}$ \\
GMM initialization
& K-means initialization, $5$ restarts \\
Maximum GMM iterations
& $200$ \\
Two-component retain rule
& Keep component with smaller median standardized deviation \\
Safety fallback
& If no client is retained, retain
$\arg\min_{i\in\mathcal{S}_t}\delta_{t,i}$ \\
\bottomrule
\end{tabular}
\end{table*}

\clearpage
\section{Complete Results Across Malicious-Client Fractions}
\label{app:remaining_tables}

This appendix reports the complete numerical results for the malicious-client fraction sweep, evaluated over
\begin{equation}
    \alpha \in \{0,0.1,0.2,0.3,0.4\},
\end{equation}
which for $K=200$ clients correspond to $0$, $20$, $40$, $60$, and $80$ malicious clients, respectively.

All tables follow the reporting rule used in the main text: for each seed, the checkpoint with the lowest test loss is selected, and test loss, perplexity, and token-level semantic entropy are reported from that same model state. For each model, partition, aggregation method, and malicious-client fraction, each entry reports the mean and sample standard deviation over seeds $\{100,200,300\}$, with lower values indicating better performance for all three metrics.

The tables are grouped by model family and data partition. For each model and partition, they report test loss, perplexity, and token-level semantic entropy across the full attack-fraction sweep from the clean setting to the strongest evaluated setting of $40\%$.

\subsection{GPT-Style Causal Language Modeling on WikiText}
Tables~\ref{tab:gpt_iid_test_loss_attack_fraction}--\ref{tab:gpt_iid_test_sem_ent_attack_fraction} report the IID results for GPT/WikiText, and Tables~\ref{tab:gpt_noniid_test_loss_attack_fraction}--\ref{tab:gpt_noniid_test_sem_ent_attack_fraction} report the non-IID results.
\begin{table}[ht]\centering
\caption{Test Loss under different malicious-client fractions for GPT / WikiText with IID partition. Values are mean $\pm$ standard deviation over three seeds. For each seed, all metrics are taken from the checkpoint with the lowest test loss. The best case is highlighted in bold.}
\label{tab:gpt_iid_test_loss_attack_fraction}
\begin{tabular}{lccccc}
\toprule
Method & 0\% & 10\% & 20\% & 30\% & 40\% \\
\midrule
FedAvg & 7.13 $\pm$ 0.01 & 7.17 $\pm$ 0.01 & 7.23 $\pm$ 0.00 & 7.29 $\pm$ 0.01 & 7.37 $\pm$ 0.02 \\
Norm-Bound & 7.50 $\pm$ 0.00 & 7.51 $\pm$ 0.01 & 7.59 $\pm$ 0.01 & 7.69 $\pm$ 0.02 & 7.89 $\pm$ 0.01 \\
Median & 7.30 $\pm$ 0.01 & 7.27 $\pm$ 0.01 & 7.28 $\pm$ 0.01 & 7.30 $\pm$ 0.01 & 7.47 $\pm$ 0.03 \\
Trimmed & \textbf{7.12 $\pm$ 0.01} & 7.19 $\pm$ 0.01 & 7.25 $\pm$ 0.01 & 7.32 $\pm$ 0.03 & 7.43 $\pm$ 0.01 \\
Multi-Krum & 7.12 $\pm$ 0.01 & 7.15 $\pm$ 0.01 & 7.19 $\pm$ 0.00 & 7.21 $\pm$ 0.00 & 7.28 $\pm$ 0.02 \\
FLAME & 7.19 $\pm$ 0.00 & 7.19 $\pm$ 0.01 & 7.20 $\pm$ 0.00 & 7.21 $\pm$ 0.00 & 7.33 $\pm$ 0.03 \\
FedLNS & 7.13 $\pm$ 0.00 & \textbf{7.13 $\pm$ 0.00} & \textbf{7.13 $\pm$ 0.00} & \textbf{7.14 $\pm$ 0.00} & \textbf{7.16 $\pm$ 0.00} \\
\bottomrule
\end{tabular}
\end{table}

\begin{table}[ht]\centering
\caption{Test PPL under different malicious-client fractions for GPT / WikiText with IID partition. Values are mean $\pm$ standard deviation over three seeds. For each seed, all metrics are taken from the checkpoint with the lowest test loss. The best case is highlighted in bold.}
\label{tab:gpt_iid_test_ppl_attack_fraction}
\begin{tabular}{lccccc}
\toprule
Method & 0\% & 10\% & 20\% & 30\% & 40\% \\
\midrule
FedAvg & 1242.91 $\pm$ 6.58 & 1303.02 $\pm$ 9.59 & 1373.61 $\pm$ 1.63 & 1472.41 $\pm$ 15.31 & 1590.56 $\pm$ 35.51 \\
Norm-Bound & 1809.27 $\pm$ 8.31 & 1828.13 $\pm$ 10.69 & 1985.85 $\pm$ 25.34 & 2196.84 $\pm$ 33.99 & 2683.53 $\pm$ 34.31 \\
Median & 1484.69 $\pm$ 11.20 & 1441.44 $\pm$ 9.33 & 1452.99 $\pm$ 11.40 & 1477.26 $\pm$ 17.65 & 1751.07 $\pm$ 52.62 \\
Trimmed & \textbf{1241.55 $\pm$ 10.08} & 1326.15 $\pm$ 17.45 & 1411.15 $\pm$ 15.57 & 1516.84 $\pm$ 43.71 & 1689.16 $\pm$ 18.00 \\
Multi-Krum & 1242.08 $\pm$ 8.40 & 1275.34 $\pm$ 7.09 & 1320.41 $\pm$ 5.63 & 1356.21 $\pm$ 6.19 & 1449.72 $\pm$ 22.36 \\
FLAME & 1323.55 $\pm$ 4.87 & 1328.88 $\pm$ 7.21 & 1345.05 $\pm$ 2.82 & 1355.34 $\pm$ 3.27 & 1524.02 $\pm$ 40.71 \\
FedLNS & 1246.57 $\pm$ 2.52 & \textbf{1247.28 $\pm$ 5.97} & \textbf{1249.11 $\pm$ 1.78} & \textbf{1258.21 $\pm$ 4.70} & \textbf{1281.63 $\pm$ 1.74} \\
\bottomrule
\end{tabular}
\end{table}

\begin{table}[ht]\centering
\caption{Semantic Entropy under different malicious-client fractions for GPT / WikiText with IID partition. Values are mean $\pm$ standard deviation over three seeds. For each seed, all metrics are taken from the checkpoint with the lowest test loss. The best case is highlighted in bold.}
\label{tab:gpt_iid_test_sem_ent_attack_fraction}
\begin{tabular}{lccccc}
\toprule
Method & 0\% & 10\% & 20\% & 30\% & 40\% \\
\midrule
FedAvg & 6.80 $\pm$ 0.06 & 7.08 $\pm$ 0.08 & 7.37 $\pm$ 0.18 & 7.56 $\pm$ 0.16 & 7.96 $\pm$ 0.07 \\
Norm-Bound & \textbf{6.55 $\pm$ 0.02} & 8.04 $\pm$ 0.12 & 8.31 $\pm$ 0.10 & 8.92 $\pm$ 0.03 & 9.69 $\pm$ 0.06 \\
Median & 6.89 $\pm$ 0.04 & 7.30 $\pm$ 0.15 & 7.27 $\pm$ 0.14 & 7.61 $\pm$ 0.13 & 7.55 $\pm$ 0.19 \\
Trimmed & 6.77 $\pm$ 0.03 & 7.30 $\pm$ 0.18 & 7.29 $\pm$ 0.19 & 7.47 $\pm$ 0.13 & 7.55 $\pm$ 0.39 \\
Multi-Krum & 6.80 $\pm$ 0.07 & 7.14 $\pm$ 0.24 & 7.07 $\pm$ 0.09 & 7.03 $\pm$ 0.11 & 7.13 $\pm$ 0.37 \\
FLAME & 7.11 $\pm$ 0.05 & 7.13 $\pm$ 0.10 & 7.20 $\pm$ 0.05 & 7.11 $\pm$ 0.09 & 6.96 $\pm$ 0.10 \\
FedLNS & 6.73 $\pm$ 0.03 & \textbf{6.81 $\pm$ 0.01} & \textbf{6.75 $\pm$ 0.02} & \textbf{6.74 $\pm$ 0.04} & \textbf{6.80 $\pm$ 0.06} \\
\bottomrule
\end{tabular}
\end{table}

\begin{table}[ht]\centering
\caption{Test Loss under different malicious-client fractions for GPT / WikiText with Non-IID partition. Values are mean $\pm$ standard deviation over three seeds. For each seed, all metrics are taken from the checkpoint with the lowest test loss. The best case is highlighted in bold.}
\label{tab:gpt_noniid_test_loss_attack_fraction}
\begin{tabular}{lccccc}
\toprule
Method & 0\% & 10\% & 20\% & 30\% & 40\% \\
\midrule
FedAvg & \textbf{7.10 $\pm$ 0.01} & 7.16 $\pm$ 0.01 & 7.23 $\pm$ 0.04 & 7.29 $\pm$ 0.06 & 7.37 $\pm$ 0.08 \\
Norm-Bound & 7.56 $\pm$ 0.01 & 7.54 $\pm$ 0.00 & 7.63 $\pm$ 0.01 & 7.75 $\pm$ 0.02 & 7.94 $\pm$ 0.09 \\
Median & 7.21 $\pm$ 0.00 & 7.18 $\pm$ 0.00 & 7.26 $\pm$ 0.02 & 7.33 $\pm$ 0.02 & 7.51 $\pm$ 0.03 \\
Trimmed & 7.11 $\pm$ 0.01 & 7.17 $\pm$ 0.01 & 7.25 $\pm$ 0.01 & 7.38 $\pm$ 0.03 & 7.51 $\pm$ 0.05 \\
Multi-Krum & 7.11 $\pm$ 0.01 & 7.18 $\pm$ 0.00 & 7.25 $\pm$ 0.01 & 7.30 $\pm$ 0.00 & 7.36 $\pm$ 0.03 \\
FLAME & 7.21 $\pm$ 0.00 & 7.25 $\pm$ 0.01 & 7.28 $\pm$ 0.01 & 7.29 $\pm$ 0.01 & 7.40 $\pm$ 0.04 \\
FedLNS & 7.11 $\pm$ 0.01 & \textbf{7.12 $\pm$ 0.00} & \textbf{7.13 $\pm$ 0.01} & \textbf{7.15 $\pm$ 0.02} & \textbf{7.16 $\pm$ 0.01} \\
\bottomrule
\end{tabular}
\end{table}

\begin{table}[ht]\centering
\caption{Test PPL under different malicious-client fractions for GPT / WikiText with Non-IID partition. Values are mean $\pm$ standard deviation over three seeds. For each seed, all metrics are taken from the checkpoint with the lowest test loss. The best case is highlighted in bold.}
\label{tab:gpt_noniid_test_ppl_attack_fraction}
\begin{tabular}{lccccc}
\toprule
Method & 0\% & 10\% & 20\% & 30\% & 40\% \\
\midrule
FedAvg & \textbf{1211.23 $\pm$ 6.70} & 1282.08 $\pm$ 13.22 & 1378.40 $\pm$ 55.98 & 1471.79 $\pm$ 90.09 & 1595.68 $\pm$ 132.68 \\
Norm-Bound & 1916.47 $\pm$ 13.00 & 1886.00 $\pm$ 3.63 & 2060.01 $\pm$ 13.63 & 2314.33 $\pm$ 44.03 & 2811.07 $\pm$ 257.29 \\
Median & 1350.32 $\pm$ 1.34 & 1309.94 $\pm$ 6.40 & 1419.46 $\pm$ 32.27 & 1529.64 $\pm$ 31.03 & 1820.62 $\pm$ 48.31 \\
Trimmed & 1225.89 $\pm$ 8.21 & 1295.29 $\pm$ 16.41 & 1406.67 $\pm$ 11.55 & 1599.60 $\pm$ 52.85 & 1819.34 $\pm$ 98.93 \\
Multi-Krum & 1225.07 $\pm$ 8.06 & 1311.67 $\pm$ 5.60 & 1408.79 $\pm$ 20.45 & 1474.93 $\pm$ 3.80 & 1579.81 $\pm$ 43.94 \\
FLAME & 1359.58 $\pm$ 3.82 & 1415.19 $\pm$ 13.32 & 1449.95 $\pm$ 14.72 & 1467.85 $\pm$ 14.75 & 1637.88 $\pm$ 63.90 \\
FedLNS & 1225.90 $\pm$ 12.06 & \textbf{1238.52 $\pm$ 4.97} & \textbf{1251.65 $\pm$ 11.37} & \textbf{1277.28 $\pm$ 23.52} & \textbf{1286.50 $\pm$ 15.34} \\
\bottomrule
\end{tabular}
\end{table}

\begin{table}[ht]\centering
\caption{Semantic Entropy under different malicious-client fractions for GPT / WikiText with Non-IID partition. Values are mean $\pm$ standard deviation over three seeds. For each seed, all metrics are taken from the checkpoint with the lowest test loss. The best case is highlighted in bold.}
\label{tab:gpt_noniid_test_sem_ent_attack_fraction}
\begin{tabular}{lccccc}
\toprule
Method & 0\% & 10\% & 20\% & 30\% & 40\% \\
\midrule
FedAvg & 6.79 $\pm$ 0.07 & 7.31 $\pm$ 0.17 & 7.12 $\pm$ 0.37 & 7.61 $\pm$ 0.30 & 7.66 $\pm$ 0.26 \\
Norm-Bound & \textbf{6.37 $\pm$ 0.02} & 7.85 $\pm$ 0.19 & 8.44 $\pm$ 0.20 & 9.02 $\pm$ 0.09 & 9.60 $\pm$ 0.36 \\
Median & 6.74 $\pm$ 0.15 & 7.31 $\pm$ 0.34 & 7.14 $\pm$ 0.32 & 7.52 $\pm$ 0.10 & 7.45 $\pm$ 0.25 \\
Trimmed & 6.75 $\pm$ 0.15 & 7.36 $\pm$ 0.07 & 7.27 $\pm$ 0.07 & 7.39 $\pm$ 0.11 & 7.72 $\pm$ 0.28 \\
Multi-Krum & 6.79 $\pm$ 0.17 & 7.08 $\pm$ 0.12 & 7.07 $\pm$ 0.45 & 7.25 $\pm$ 0.47 & 7.18 $\pm$ 0.46 \\
FLAME & 7.04 $\pm$ 0.03 & 7.08 $\pm$ 0.15 & 7.32 $\pm$ 0.20 & 6.94 $\pm$ 0.07 & 7.13 $\pm$ 0.58 \\
FedLNS & 6.80 $\pm$ 0.09 & \textbf{6.79 $\pm$ 0.05} & \textbf{7.04 $\pm$ 0.40} & \textbf{6.90 $\pm$ 0.02} & \textbf{6.92 $\pm$ 0.24} \\
\bottomrule
\end{tabular}
\end{table}

\clearpage
\subsection{BERT-Style Masked Language Modeling on Tiny Shakespeare}
Tables~\ref{tab:bert_iid_test_loss_attack_fraction}--\ref{tab:bert_iid_test_sem_ent_attack_fraction} report the IID results for BERT/Tiny Shakespeare, and Tables~\ref{tab:bert_noniid_test_loss_attack_fraction}--\ref{tab:bert_noniid_test_sem_ent_attack_fraction} report the non-IID results. The reported perplexity is masked-token perplexity and should be interpreted within the masked-language-modeling setting.
\begin{table}[ht]
\centering
\caption{Test Loss under different malicious-client fractions for BERT / Shakespeare with IID partition. Values are mean $\pm$ standard deviation over three seeds. For each seed, all metrics are taken from the checkpoint with the lowest test loss. The best case is highlighted in bold.}
\label{tab:bert_iid_test_loss_attack_fraction}
\begin{tabular}{lccccc}
\toprule
Method & 0\% & 10\% & 20\% & 30\% & 40\% \\
\midrule
FedAvg & 6.49 $\pm$ 0.03 & 6.54 $\pm$ 0.01 & 6.57 $\pm$ 0.01 & 6.62 $\pm$ 0.02 & 6.68 $\pm$ 0.02 \\
Norm-Bound & 6.80 $\pm$ 0.03 & 6.80 $\pm$ 0.01 & 6.92 $\pm$ 0.01 & 7.11 $\pm$ 0.01 & 7.30 $\pm$ 0.05 \\
Median & 6.80 $\pm$ 0.03 & 6.82 $\pm$ 0.01 & 6.82 $\pm$ 0.02 & 6.80 $\pm$ 0.03 & 6.97 $\pm$ 0.08 \\
Trimmed & 6.49 $\pm$ 0.03 & 6.66 $\pm$ 0.01 & 6.72 $\pm$ 0.02 & 6.80 $\pm$ 0.03 & 6.97 $\pm$ 0.07 \\
Multi-Krum & 6.49 $\pm$ 0.03 & 6.51 $\pm$ 0.01 & 6.52 $\pm$ 0.01 & 6.54 $\pm$ 0.02 & 6.59 $\pm$ 0.02 \\
FLAME & \textbf{6.49 $\pm$ 0.03} & \textbf{6.47 $\pm$ 0.02} & \textbf{6.49 $\pm$ 0.02} & 6.52 $\pm$ 0.00 & 6.70 $\pm$ 0.18 \\
FedLNS & 6.50 $\pm$ 0.05 & 6.49 $\pm$ 0.02 & \textbf{6.49 $\pm$ 0.02} & \textbf{6.51 $\pm$ 0.01} & \textbf{6.55 $\pm$ 0.02} \\
\bottomrule
\end{tabular}
\end{table}

\begin{table}[ht]
\centering
\caption{Test PPL under different malicious-client fractions for BERT / Shakespeare with IID partition. Values are mean $\pm$ standard deviation over three seeds. For each seed, all metrics are taken from the checkpoint with the lowest test loss. The best case is highlighted in bold.}
\label{tab:bert_iid_test_ppl_attack_fraction}
\begin{tabular}{lccccc}
\toprule
Method & 0\% & 10\% & 20\% & 30\% & 40\% \\
\midrule
FedAvg & 659.93 $\pm$ 21.92 & 693.01 $\pm$ 4.36 & 710.15 $\pm$ 6.19 & 750.09 $\pm$ 12.08 & 794.54 $\pm$ 15.01 \\
Norm-Bound & 901.04 $\pm$ 29.47 & 897.17 $\pm$ 10.13 & 1008.61 $\pm$ 7.00 & 1227.23 $\pm$ 15.36 & 1477.67 $\pm$ 70.84 \\
Median & 894.85 $\pm$ 27.58 & 912.29 $\pm$ 9.07 & 913.38 $\pm$ 19.06 & 898.46 $\pm$ 22.55 & 1066.13 $\pm$ 91.56 \\
Trimmed & 660.99 $\pm$ 20.10 & 779.94 $\pm$ 9.18 & 831.19 $\pm$ 20.60 & 900.65 $\pm$ 26.88 & 1062.80 $\pm$ 79.49 \\
Multi-Krum & 661.00 $\pm$ 20.10 & 672.89 $\pm$ 7.97 & 676.44 $\pm$ 4.57 & 695.35 $\pm$ 12.21 & 725.76 $\pm$ 17.84 \\
FLAME & \textbf{658.52 $\pm$ 18.95} & \textbf{645.26 $\pm$ 15.47} & \textbf{657.21 $\pm$ 10.53} & 679.98 $\pm$ 3.26 & 818.57 $\pm$ 157.53 \\
FedLNS & 663.68 $\pm$ 29.76 & 655.97 $\pm$ 11.66 & 661.38 $\pm$ 11.94 & \textbf{669.37 $\pm$ 5.86} & \textbf{697.62 $\pm$ 13.62} \\
\bottomrule
\end{tabular}
\end{table}

\begin{table}[ht]\centering
\caption{Semantic Entropy under different malicious-client fractions for BERT / Shakespeare with IID partition. Values are mean $\pm$ standard deviation over three seeds. For each seed, all metrics are taken from the checkpoint with the lowest test loss. The best case is highlighted in bold.}
\label{tab:bert_iid_test_sem_ent_attack_fraction}
\begin{tabular}{lccccc}
\toprule
Method & 0\% & 10\% & 20\% & 30\% & 40\% \\
\midrule
FedAvg & \textbf{6.26 $\pm$ 0.18} & 6.79 $\pm$ 0.19 & 6.95 $\pm$ 0.07 & 7.22 $\pm$ 0.36 & 7.37 $\pm$ 0.33 \\
Norm-Bound & 6.31 $\pm$ 0.03 & 7.27 $\pm$ 0.21 & 7.65 $\pm$ 0.07 & 8.07 $\pm$ 0.15 & 8.54 $\pm$ 0.16 \\
Median & 6.60 $\pm$ 0.06 & 6.92 $\pm$ 0.22 & 7.05 $\pm$ 0.34 & 7.41 $\pm$ 0.22 & 7.47 $\pm$ 0.78 \\
Trimmed & 6.28 $\pm$ 0.16 & 6.63 $\pm$ 0.17 & 6.97 $\pm$ 0.18 & 7.35 $\pm$ 0.54 & 6.76 $\pm$ 0.47 \\
Multi-Krum & 6.28 $\pm$ 0.16 & 6.40 $\pm$ 0.07 & 6.69 $\pm$ 0.38 & 6.53 $\pm$ 0.36 & 6.61 $\pm$ 0.28 \\
FLAME & 6.26 $\pm$ 0.03 & \textbf{6.26 $\pm$ 0.11} & \textbf{6.11 $\pm$ 0.05} & \textbf{6.15 $\pm$ 0.10} & 6.46 $\pm$ 0.16 \\
FedLNS & 6.28 $\pm$ 0.09 & 6.38 $\pm$ 0.16 & 6.19 $\pm$ 0.09 & 6.24 $\pm$ 0.10 & \textbf{6.24 $\pm$ 0.06} \\
\bottomrule
\end{tabular}
\end{table}

\begin{table}[ht]\centering
\caption{Test Loss under different malicious-client fractions for BERT / Shakespeare with Non-IID partition. Values are mean $\pm$ standard deviation over three seeds. For each seed, all metrics are taken from the checkpoint with the lowest test loss. The best case is highlighted in bold.}
\label{tab:bert_noniid_test_loss_attack_fraction}
\begin{tabular}{lccccc}
\toprule
Method & 0\% & 10\% & 20\% & 30\% & 40\% \\
\midrule
FedAvg & \textbf{6.36 $\pm$ 0.03} & 6.38 $\pm$ 0.04 & 6.51 $\pm$ 0.10 & 6.56 $\pm$ 0.12 & 6.59 $\pm$ 0.11 \\
Norm-Bound & 6.83 $\pm$ 0.01 & 6.85 $\pm$ 0.01 & 6.95 $\pm$ 0.01 & 7.14 $\pm$ 0.02 & 7.34 $\pm$ 0.02 \\
Median & 6.86 $\pm$ 0.01 & 6.89 $\pm$ 0.01 & 6.86 $\pm$ 0.03 & 6.86 $\pm$ 0.01 & 6.95 $\pm$ 0.05 \\
Trimmed & 6.54 $\pm$ 0.03 & 6.72 $\pm$ 0.01 & 6.80 $\pm$ 0.02 & 6.84 $\pm$ 0.02 & 6.94 $\pm$ 0.02 \\
Multi-Krum & 6.54 $\pm$ 0.03 & 6.59 $\pm$ 0.02 & 6.60 $\pm$ 0.01 & 6.61 $\pm$ 0.02 & 6.69 $\pm$ 0.06 \\
FLAME & 6.57 $\pm$ 0.02 & 6.56 $\pm$ 0.03 & 6.56 $\pm$ 0.01 & 6.57 $\pm$ 0.01 & 6.86 $\pm$ 0.17 \\
FedLNS & 6.47 $\pm$ 0.05 & \textbf{6.36 $\pm$ 0.06} & \textbf{6.42 $\pm$ 0.14} & \textbf{6.38 $\pm$ 0.18} & \textbf{6.48 $\pm$ 0.13} \\
\bottomrule
\end{tabular}
\end{table}

\begin{table}[ht]\centering
\caption{Test PPL under different malicious-client fractions for BERT / Shakespeare with Non-IID partition. Values are mean $\pm$ standard deviation over three seeds. For each seed, all metrics are taken from the checkpoint with the lowest test loss. The best case is highlighted in bold.}
\label{tab:bert_noniid_test_ppl_attack_fraction}
\begin{tabular}{lccccc}
\toprule
Method & 0\% & 10\% & 20\% & 30\% & 40\% \\
\midrule
FedAvg & \textbf{576.49 $\pm$ 19.76} & 588.64 $\pm$ 24.25 & 672.55 $\pm$ 66.84 & 708.50 $\pm$ 88.05 & 731.73 $\pm$ 83.15 \\
Norm-Bound & 922.98 $\pm$ 7.18 & 941.70 $\pm$ 9.95 & 1039.58 $\pm$ 9.18 & 1262.16 $\pm$ 26.29 & 1533.59 $\pm$ 36.83 \\
Median & 950.49 $\pm$ 10.79 & 985.58 $\pm$ 6.47 & 949.59 $\pm$ 26.06 & 953.32 $\pm$ 10.76 & 1047.36 $\pm$ 53.66 \\
Trimmed & 691.27 $\pm$ 19.12 & 829.90 $\pm$ 7.29 & 893.88 $\pm$ 19.37 & 938.21 $\pm$ 17.35 & 1033.35 $\pm$ 24.28 \\
Multi-Krum & 691.27 $\pm$ 19.11 & 727.43 $\pm$ 13.60 & 732.73 $\pm$ 10.31 & 739.83 $\pm$ 12.52 & 804.39 $\pm$ 53.06 \\
FLAME & 710.27 $\pm$ 16.45 & 705.47 $\pm$ 23.82 & 704.63 $\pm$ 4.08 & 712.01 $\pm$ 8.24 & 963.65 $\pm$ 157.76 \\
FedLNS & 643.56 $\pm$ 33.46 & \textbf{576.63 $\pm$ 36.58} & \textbf{618.30 $\pm$ 91.12} & \textbf{594.61 $\pm$ 108.75} & \textbf{658.21 $\pm$ 91.66} \\
\bottomrule
\end{tabular}
\end{table}

\begin{table}[ht]\centering
\caption{Semantic Entropy under different malicious-client fractions for BERT / Shakespeare with Non-IID partition. Values are mean $\pm$ standard deviation over three seeds. For each seed, all metrics are taken from the checkpoint with the lowest test loss. The best case is highlighted in bold.}
\label{tab:bert_noniid_test_sem_ent_attack_fraction}
\begin{tabular}{lccccc}
\toprule
Method & 0\% & 10\% & 20\% & 30\% & 40\% \\
\midrule
FedAvg & \textbf{5.87 $\pm$ 0.14} & 6.34 $\pm$ 0.41 & 6.96 $\pm$ 0.20 & 7.19 $\pm$ 0.38 & 6.94 $\pm$ 0.14 \\
Norm-Bound & 6.27 $\pm$ 0.01 & 6.99 $\pm$ 0.14 & 7.63 $\pm$ 0.15 & 7.98 $\pm$ 0.19 & 8.48 $\pm$ 0.02 \\
Median & 6.59 $\pm$ 0.06 & 7.28 $\pm$ 0.10 & 7.20 $\pm$ 0.36 & 7.41 $\pm$ 0.26 & 7.20 $\pm$ 0.59 \\
Trimmed & 6.24 $\pm$ 0.10 & 6.61 $\pm$ 0.15 & 7.00 $\pm$ 0.13 & 7.23 $\pm$ 0.17 & 6.75 $\pm$ 0.27 \\
Multi-Krum & 6.24 $\pm$ 0.10 & 6.67 $\pm$ 0.24 & 6.43 $\pm$ 0.06 & 6.69 $\pm$ 0.13 & 6.82 $\pm$ 0.31 \\
FLAME & 6.19 $\pm$ 0.06 & 6.27 $\pm$ 0.16 & \textbf{6.30 $\pm$ 0.12} & 6.41 $\pm$ 0.12 & 6.62 $\pm$ 0.14 \\
FedLNS & 6.12 $\pm$ 0.05 & \textbf{6.06 $\pm$ 0.18} & 6.33 $\pm$ 0.09 & \textbf{6.00 $\pm$ 0.38} & \textbf{6.19 $\pm$ 0.13} \\
\bottomrule
\end{tabular}
\end{table}

\clearpage
\subsection{LLaMA-Style Causal Language Modeling on StackOverflow}
Tables~\ref{tab:llama_iid_test_loss_attack_fraction}--\ref{tab:llama_iid_test_sem_ent_attack_fraction} report the IID results for LLaMA/StackOverflow, and Tables~\ref{tab:llama_noniid_test_loss_attack_fraction}--\ref{tab:llama_noniid_test_sem_ent_attack_fraction} report the non-IID results.
\begin{table}[ht]\centering
\caption{Test Loss under different malicious-client fractions for LLaMA / StackOverflow with IID partition. Values are mean $\pm$ standard deviation over three seeds. For each seed, all metrics are taken from the checkpoint with the lowest test loss. The best case is highlighted in bold.}
\label{tab:llama_iid_test_loss_attack_fraction}
\begin{tabular}{lccccc}
\toprule
Method & 0\% & 10\% & 20\% & 30\% & 40\% \\
\midrule
FedAvg & \textbf{5.36 $\pm$ 0.03} & 5.40 $\pm$ 0.03 & 5.46 $\pm$ 0.04 & 5.53 $\pm$ 0.02 & 5.66 $\pm$ 0.04 \\
Norm-Bound & 5.49 $\pm$ 0.03 & 5.56 $\pm$ 0.03 & 5.63 $\pm$ 0.04 & 5.77 $\pm$ 0.02 & 5.86 $\pm$ 0.08 \\
Median & 5.39 $\pm$ 0.03 & 5.40 $\pm$ 0.03 & 5.43 $\pm$ 0.03 & 5.46 $\pm$ 0.03 & 5.54 $\pm$ 0.03 \\
Trimmed & 5.36 $\pm$ 0.03 & 5.39 $\pm$ 0.03 & 5.43 $\pm$ 0.03 & 5.47 $\pm$ 0.03 & 5.54 $\pm$ 0.04 \\
Multi-Krum & 5.36 $\pm$ 0.03 & 5.37 $\pm$ 0.03 & 5.38 $\pm$ 0.03 & 5.39 $\pm$ 0.03 & 5.43 $\pm$ 0.04 \\
FLAME & 5.48 $\pm$ 0.03 & 5.48 $\pm$ 0.03 & 5.48 $\pm$ 0.03 & 5.49 $\pm$ 0.03 & 5.58 $\pm$ 0.05 \\
FedLNS & \textbf{5.36 $\pm$ 0.03} & \textbf{5.37 $\pm$ 0.03} & \textbf{5.37 $\pm$ 0.03} & \textbf{5.37 $\pm$ 0.03} & \textbf{5.38 $\pm$ 0.03} \\
\bottomrule
\end{tabular}
\end{table}

\begin{table}[ht]\centering
\caption{Test PPL under different malicious-client fractions for LLaMA / StackOverflow with IID partition. Values are mean $\pm$ standard deviation over three seeds. For each seed, all metrics are taken from the checkpoint with the lowest test loss. The best case is highlighted in bold.}
\label{tab:llama_iid_test_ppl_attack_fraction}
\begin{tabular}{lccccc}
\toprule
Method & 0\% & 10\% & 20\% & 30\% & 40\% \\
\midrule
FedAvg & \textbf{213.15 $\pm$ 6.77} & 221.94 $\pm$ 6.63 & 234.03 $\pm$ 8.53 & 251.01 $\pm$ 5.88 & 286.73 $\pm$ 10.99 \\
Norm-Bound & 243.23 $\pm$ 6.54 & 258.63 $\pm$ 6.54 & 279.81 $\pm$ 11.33 & 319.10 $\pm$ 6.04 & 349.83 $\pm$ 27.13 \\
Median & 218.98 $\pm$ 7.04 & 222.39 $\pm$ 7.10 & 227.13 $\pm$ 7.36 & 235.44 $\pm$ 6.60 & 254.94 $\pm$ 8.06 \\
Trimmed & 213.15 $\pm$ 6.77 & 219.60 $\pm$ 7.16 & 227.49 $\pm$ 7.80 & 237.43 $\pm$ 7.51 & 254.52 $\pm$ 9.07 \\
Multi-Krum & 213.15 $\pm$ 6.77 & 215.53 $\pm$ 7.04 & 217.38 $\pm$ 6.99 & 219.76 $\pm$ 6.92 & 228.52 $\pm$ 8.72 \\
FLAME & 240.02 $\pm$ 6.40 & 239.85 $\pm$ 6.77 & 240.36 $\pm$ 6.81 & 242.54 $\pm$ 7.33 & 264.80 $\pm$ 13.70 \\
FedLNS & 213.42 $\pm$ 6.57 & \textbf{213.93 $\pm$ 6.56} & \textbf{214.57 $\pm$ 6.75} & \textbf{215.61 $\pm$ 6.70} & \textbf{217.14 $\pm$ 6.85} \\
\bottomrule
\end{tabular}
\end{table}

\begin{table}[ht]\centering
\caption{Semantic Entropy under different malicious-client fractions for LLaMA / StackOverflow with IID partition. Values are mean $\pm$ standard deviation over three seeds. For each seed, all metrics are taken from the checkpoint with the lowest test loss. The best case is highlighted in bold.}
\label{tab:llama_iid_test_sem_ent_attack_fraction}
\begin{tabular}{lccccc}
\toprule
Method & 0\% & 10\% & 20\% & 30\% & 40\% \\
\midrule
FedAvg & 5.52 $\pm$ 0.02 & 5.80 $\pm$ 0.02 & 5.97 $\pm$ 0.12 & 6.12 $\pm$ 0.06 & 6.58 $\pm$ 0.34 \\
Norm-Bound & 5.70 $\pm$ 0.01 & 5.96 $\pm$ 0.01 & 6.43 $\pm$ 0.22 & 6.83 $\pm$ 0.07 & 6.95 $\pm$ 0.43 \\
Median & 5.55 $\pm$ 0.02 & 5.73 $\pm$ 0.03 & 5.81 $\pm$ 0.08 & 5.91 $\pm$ 0.02 & 5.97 $\pm$ 0.09 \\
Trimmed & \textbf{5.52 $\pm$ 0.02} & 5.73 $\pm$ 0.04 & 5.81 $\pm$ 0.09 & 5.89 $\pm$ 0.04 & 5.95 $\pm$ 0.08 \\
Multi-Krum & \textbf{5.52 $\pm$ 0.02} & 5.54 $\pm$ 0.04 & 5.55 $\pm$ 0.03 & 5.56 $\pm$ 0.03 & 5.61 $\pm$ 0.03 \\
FLAME & 5.64 $\pm$ 0.05 & 5.61 $\pm$ 0.04 & 5.64 $\pm$ 0.05 & 5.65 $\pm$ 0.04 & 5.74 $\pm$ 0.05 \\
FedLNS & \textbf{5.52 $\pm$ 0.02} & \textbf{5.52 $\pm$ 0.04} & \textbf{5.54 $\pm$ 0.03} & \textbf{5.52 $\pm$ 0.04} & \textbf{5.52 $\pm$ 0.02} \\
\bottomrule
\end{tabular}
\end{table}

\begin{table}[ht]\centering
\caption{Test Loss under different malicious-client fractions for LLaMA / StackOverflow with Non-IID partition. Values are mean $\pm$ standard deviation over three seeds. For each seed, all metrics are taken from the checkpoint with the lowest test loss. The best case is highlighted in bold.}
\label{tab:llama_noniid_test_loss_attack_fraction}
\begin{tabular}{lccccc}
\toprule
Method & 0\% & 10\% & 20\% & 30\% & 40\% \\
\midrule
FedAvg & \textbf{5.37 $\pm$ 0.03} & 5.41 $\pm$ 0.03 & 5.47 $\pm$ 0.04 & 5.54 $\pm$ 0.03 & 5.67 $\pm$ 0.03 \\
Norm-Bound & 5.51 $\pm$ 0.03 & 5.57 $\pm$ 0.03 & 5.65 $\pm$ 0.04 & 5.79 $\pm$ 0.02 & 5.83 $\pm$ 0.05 \\
Median & 5.40 $\pm$ 0.03 & 5.42 $\pm$ 0.03 & 5.44 $\pm$ 0.03 & 5.48 $\pm$ 0.03 & 5.56 $\pm$ 0.03 \\
Trimmed & 5.37 $\pm$ 0.03 & 5.40 $\pm$ 0.03 & 5.44 $\pm$ 0.04 & 5.49 $\pm$ 0.03 & 5.56 $\pm$ 0.04 \\
Multi-Krum & 5.37 $\pm$ 0.03 & 5.38 $\pm$ 0.03 & 5.39 $\pm$ 0.03 & 5.41 $\pm$ 0.03 & 5.45 $\pm$ 0.04 \\
FLAME & 5.49 $\pm$ 0.03 & 5.49 $\pm$ 0.03 & 5.49 $\pm$ 0.03 & 5.50 $\pm$ 0.03 & 5.59 $\pm$ 0.04 \\
FedLNS & \textbf{5.37 $\pm$ 0.03} & \textbf{5.38 $\pm$ 0.03} & \textbf{5.38 $\pm$ 0.03} & \textbf{5.38 $\pm$ 0.03} & \textbf{5.39 $\pm$ 0.03} \\
\bottomrule
\end{tabular}
\end{table}

\begin{table}[ht]\centering
\caption{Test PPL under different malicious-client fractions for LLaMA / StackOverflow with Non-IID partition. Values are mean $\pm$ standard deviation over three seeds. For each seed, all metrics are taken from the checkpoint with the lowest test loss. The best case is highlighted in bold.}
\label{tab:llama_noniid_test_ppl_attack_fraction}
\begin{tabular}{lccccc}
\toprule
Method & 0\% & 10\% & 20\% & 30\% & 40\% \\
\midrule
FedAvg & \textbf{215.38 $\pm$ 6.80} & 224.27 $\pm$ 6.68 & 237.35 $\pm$ 9.12 & 255.84 $\pm$ 7.17 & 288.74 $\pm$ 8.31 \\
Norm-Bound & 246.30 $\pm$ 6.76 & 261.73 $\pm$ 6.66 & 283.49 $\pm$ 10.99 & 326.42 $\pm$ 6.26 & 340.88 $\pm$ 18.09 \\
Median & 221.84 $\pm$ 7.19 & 225.51 $\pm$ 7.38 & 230.73 $\pm$ 7.84 & 239.51 $\pm$ 6.99 & 260.33 $\pm$ 7.99 \\
Trimmed & 215.55 $\pm$ 6.92 & 222.49 $\pm$ 7.43 & 230.96 $\pm$ 8.21 & 241.46 $\pm$ 8.09 & 259.56 $\pm$ 9.62 \\
Multi-Krum & 215.55 $\pm$ 6.92 & 218.10 $\pm$ 7.52 & 219.94 $\pm$ 7.17 & 222.74 $\pm$ 7.63 & 233.42 $\pm$ 10.15 \\
FLAME & 243.12 $\pm$ 6.82 & 242.99 $\pm$ 6.85 & 243.25 $\pm$ 6.41 & 245.58 $\pm$ 7.14 & 269.08 $\pm$ 11.90 \\
FedLNS & 215.55 $\pm$ 6.85 & \textbf{216.13 $\pm$ 6.70} & \textbf{216.97 $\pm$ 6.91} & \textbf{218.02 $\pm$ 7.07} & \textbf{220.18 $\pm$ 6.45} \\
\bottomrule
\end{tabular}
\end{table}

\begin{table}[ht]\centering
\caption{Semantic Entropy under different malicious-client fractions for LLaMA / StackOverflow with Non-IID partition. Values are mean $\pm$ standard deviation over three seeds. For each seed, all metrics are taken from the checkpoint with the lowest test loss. The best case is highlighted in bold.}
\label{tab:llama_noniid_test_sem_ent_attack_fraction}
\begin{tabular}{lccccc}
\toprule
Method & 0\% & 10\% & 20\% & 30\% & 40\% \\
\midrule
FedAvg & \textbf{5.53 $\pm$ 0.03} & 5.80 $\pm$ 0.01 & 5.94 $\pm$ 0.11 & 6.22 $\pm$ 0.11 & 6.61 $\pm$ 0.27 \\
Norm-Bound & 5.70 $\pm$ 0.03 & 5.98 $\pm$ 0.01 & 6.44 $\pm$ 0.21 & 6.90 $\pm$ 0.09 & 6.56 $\pm$ 0.42 \\
Median & 5.56 $\pm$ 0.03 & 5.76 $\pm$ 0.02 & 5.84 $\pm$ 0.09 & 5.91 $\pm$ 0.04 & 5.99 $\pm$ 0.12 \\
Trimmed & 5.53 $\pm$ 0.03 & 5.75 $\pm$ 0.03 & 5.84 $\pm$ 0.10 & 5.92 $\pm$ 0.06 & 5.96 $\pm$ 0.10 \\
Multi-Krum & 5.53 $\pm$ 0.03 & 5.55 $\pm$ 0.04 & 5.56 $\pm$ 0.03 & 5.56 $\pm$ 0.02 & 5.63 $\pm$ 0.06 \\
FLAME & 5.66 $\pm$ 0.06 & 5.63 $\pm$ 0.06 & 5.62 $\pm$ 0.01 & 5.66 $\pm$ 0.07 & 5.74 $\pm$ 0.06 \\
FedLNS & 5.54 $\pm$ 0.03 & \textbf{5.54 $\pm$ 0.03} & \textbf{5.53 $\pm$ 0.03} & \textbf{5.53 $\pm$ 0.02} & \textbf{5.52 $\pm$ 0.03} \\
\bottomrule
\end{tabular}
\end{table}

\clearpage
\section{Bank Coverage at Activation}
\label{app:bank_size_ablation}

This appendix studies the amount of bank coverage available when \FedLNS{} first activates screening and global aggregation. 
Appendix~\ref{app:min_bank_probability} analyzes the benign-majority probability of a partially populated bank; here, we evaluate how broader activation coverage affects the early active training trajectory.

\subsection{Reporting Protocol}

The bank activation threshold determines when the server begins screening and global aggregation. We therefore focus on the early active period immediately after each threshold is reached, where differences in bank coverage and activation timing are most visible.

For each seed and activation coverage, the activation round is the first communication round in which the bank reaches $M_{\mathrm{act}}$. We then consider the first ten active aggregation rounds. Within this window, the checkpoint with the lowest test loss is selected, and test loss, perplexity, and token-level semantic entropy are reported from the same checkpoint.

\subsection{Activation Rounds}

Tables~\ref{tab:bank_ablation_activation_rounds_iid_mal0}--\ref{tab:bank_ablation_activation_rounds_noniid_mal40} report the activation rounds for all population-level malicious-client fractions and partitions. The observed timing follows the pattern analyzed in Appendix~\ref{app:min_bank_probability}: $M_{\mathrm{act}}=0$ activates immediately, while thresholds covering $25\%$, $50\%$, and $75\%$ of the client population activate after approximately $3$, $7$, and $13$--$14$ rounds, respectively.

\begin{table}[ht] \centering
\caption{Estimated activation rounds for the bank-size ablation under IID partition with $0\%$ malicious clients. The activation round is the first communication round where the bank reaches $M_{\mathrm{act}}$.}
\label{tab:bank_ablation_activation_rounds_iid_mal0}
\begin{tabular}{lccc}
\toprule
$M_{\mathrm{act}}/K$ & GPT / WikiText & BERT / Shakespeare & LLaMA / StackOverflow \\
\midrule
0\% & 1.0 $\pm$ 0.0 & 1.0 $\pm$ 0.0 & 1.0 $\pm$ 0.0 \\
25\% & 3.0 $\pm$ 0.0 & 3.0 $\pm$ 0.0 & 3.0 $\pm$ 0.0 \\
50\% & 6.7 $\pm$ 0.6 & 6.7 $\pm$ 0.6 & 6.7 $\pm$ 0.6 \\
75\% & 13.3 $\pm$ 0.6 & 13.3 $\pm$ 0.6 & 13.3 $\pm$ 0.6 \\
\bottomrule
\end{tabular}
\end{table}

\begin{table}[ht] \centering
\caption{Estimated activation rounds for the bank-size ablation under IID partition with $10\%$ malicious clients. The activation round is the first communication round where the bank reaches $M_{\mathrm{act}}$.}
\label{tab:bank_ablation_activation_rounds_iid_mal10}
\begin{tabular}{lccc}
\toprule
$M_{\mathrm{act}}/K$ & GPT / WikiText & BERT / Shakespeare & LLaMA / StackOverflow \\
\midrule
0\% & 1.0 $\pm$ 0.0 & 1.0 $\pm$ 0.0 & 1.0 $\pm$ 0.0 \\
25\% & 3.0 $\pm$ 0.0 & 3.0 $\pm$ 0.0 & 3.0 $\pm$ 0.0 \\
50\% & 7.0 $\pm$ 0.0 & 7.0 $\pm$ 0.0 & 7.0 $\pm$ 0.0 \\
75\% & 13.0 $\pm$ 0.0 & 13.0 $\pm$ 0.0 & 13.0 $\pm$ 0.0 \\
\bottomrule
\end{tabular}
\end{table}

\begin{table}[ht] \centering
\caption{Estimated activation rounds for the bank-size ablation under IID partition with $20\%$ malicious clients. The activation round is the first communication round where the bank reaches $M_{\mathrm{act}}$.}
\label{tab:bank_ablation_activation_rounds_iid_mal20}
\begin{tabular}{lccc}
\toprule
$M_{\mathrm{act}}/K$ & GPT / WikiText & BERT / Shakespeare & LLaMA / StackOverflow \\
\midrule
0\% & 1.0 $\pm$ 0.0 & 1.0 $\pm$ 0.0 & 1.0 $\pm$ 0.0 \\
25\% & 3.0 $\pm$ 0.0 & 3.0 $\pm$ 0.0 & 3.0 $\pm$ 0.0 \\
50\% & 7.0 $\pm$ 0.0 & 7.0 $\pm$ 0.0 & 7.0 $\pm$ 0.0 \\
75\% & 13.3 $\pm$ 0.6 & 13.3 $\pm$ 0.6 & 13.3 $\pm$ 0.6 \\
\bottomrule
\end{tabular}
\end{table}

\begin{table}[ht] \centering
\caption{Estimated activation rounds for the bank-size ablation under IID partition with $30\%$ malicious clients. The activation round is the first communication round where the bank reaches $M_{\mathrm{act}}$.}
\label{tab:bank_ablation_activation_rounds_iid_mal30}
\begin{tabular}{lccc}
\toprule
$M_{\mathrm{act}}/K$ & GPT / WikiText & BERT / Shakespeare & LLaMA / StackOverflow \\
\midrule
0\% & 1.0 $\pm$ 0.0 & 1.0 $\pm$ 0.0 & 1.0 $\pm$ 0.0 \\
25\% & 3.0 $\pm$ 0.0 & 3.0 $\pm$ 0.0 & 3.0 $\pm$ 0.0 \\
50\% & 7.3 $\pm$ 0.6 & 7.3 $\pm$ 0.6 & 7.3 $\pm$ 0.6 \\
75\% & 14.0 $\pm$ 1.0 & 14.0 $\pm$ 1.0 & 14.0 $\pm$ 1.0 \\
\bottomrule
\end{tabular}
\end{table}

\begin{table}[ht] \centering
\caption{Estimated activation rounds for the bank-size ablation under IID partition with $40\%$ malicious clients. The activation round is the first communication round where the bank reaches $M_{\mathrm{act}}$.}
\label{tab:bank_ablation_activation_rounds_iid_mal40}
\begin{tabular}{lccc}
\toprule
$M_{\mathrm{act}}/K$ & GPT / WikiText & BERT / Shakespeare & LLaMA / StackOverflow \\
\midrule
0\% & 1.0 $\pm$ 0.0 & 1.0 $\pm$ 0.0 & 1.0 $\pm$ 0.0 \\
25\% & 3.0 $\pm$ 0.0 & 3.0 $\pm$ 0.0 & 3.0 $\pm$ 0.0 \\
50\% & 7.0 $\pm$ 0.0 & 7.0 $\pm$ 0.0 & 7.0 $\pm$ 0.0 \\
75\% & 14.0 $\pm$ 1.0 & 14.0 $\pm$ 1.0 & 14.0 $\pm$ 1.0 \\
\bottomrule
\end{tabular}
\end{table}

\begin{table}[ht] \centering
\caption{Estimated activation rounds for the bank-size ablation under Non-IID partition with $0\%$ malicious clients. The activation round is the first communication round where the bank reaches $M_{\mathrm{act}}$.}
\label{tab:bank_ablation_activation_rounds_noniid_mal0}
\begin{tabular}{lccc}
\toprule
$M_{\mathrm{act}}/K$ & GPT / WikiText & BERT / Shakespeare & LLaMA / StackOverflow \\
\midrule
0\% & 1.0 $\pm$ 0.0 & 1.0 $\pm$ 0.0 & 1.0 $\pm$ 0.0 \\
25\% & 3.0 $\pm$ 0.0 & 3.0 $\pm$ 0.0 & 3.0 $\pm$ 0.0 \\
50\% & 6.7 $\pm$ 0.6 & 6.7 $\pm$ 0.6 & 6.7 $\pm$ 0.6 \\
75\% & 13.3 $\pm$ 0.6 & 13.3 $\pm$ 0.6 & 13.3 $\pm$ 0.6 \\
\bottomrule
\end{tabular}
\end{table}

\begin{table}[ht] \centering
\caption{Estimated activation rounds for the bank-size ablation under Non-IID partition with $10\%$ malicious clients. The activation round is the first communication round where the bank reaches $M_{\mathrm{act}}$.}
\label{tab:bank_ablation_activation_rounds_noniid_mal10}
\begin{tabular}{lccc}
\toprule
$M_{\mathrm{act}}/K$ & GPT / WikiText & BERT / Shakespeare & LLaMA / StackOverflow \\
\midrule
0\% & 1.0 $\pm$ 0.0 & 1.0 $\pm$ 0.0 & 1.0 $\pm$ 0.0 \\
25\% & 3.0 $\pm$ 0.0 & 3.0 $\pm$ 0.0 & 3.0 $\pm$ 0.0 \\
50\% & 7.0 $\pm$ 0.0 & 7.0 $\pm$ 0.0 & 7.0 $\pm$ 0.0 \\
75\% & 13.0 $\pm$ 0.0 & 13.0 $\pm$ 0.0 & 13.0 $\pm$ 0.0 \\
\bottomrule
\end{tabular}
\end{table}

\begin{table}[ht] \centering
\caption{Estimated activation rounds for the bank-size ablation under Non-IID partition with $20\%$ malicious clients. The activation round is the first communication round where the bank reaches $M_{\mathrm{act}}$.}
\label{tab:bank_ablation_activation_rounds_noniid_mal20}
\begin{tabular}{lccc}
\toprule
$M_{\mathrm{act}}/K$ & GPT / WikiText & BERT / Shakespeare & LLaMA / StackOverflow \\
\midrule
0\% & 1.0 $\pm$ 0.0 & 1.0 $\pm$ 0.0 & 1.0 $\pm$ 0.0 \\
25\% & 3.0 $\pm$ 0.0 & 3.0 $\pm$ 0.0 & 3.0 $\pm$ 0.0 \\
50\% & 7.0 $\pm$ 0.0 & 7.0 $\pm$ 0.0 & 7.0 $\pm$ 0.0 \\
75\% & 13.3 $\pm$ 0.6 & 13.3 $\pm$ 0.6 & 13.3 $\pm$ 0.6 \\
\bottomrule
\end{tabular}
\end{table}

\begin{table}[ht] \centering
\caption{Estimated activation rounds for the bank-size ablation under Non-IID partition with $30\%$ malicious clients. The activation round is the first communication round where the bank reaches $M_{\mathrm{act}}$.}
\label{tab:bank_ablation_activation_rounds_noniid_mal30}
\begin{tabular}{lccc}
\toprule
$M_{\mathrm{act}}/K$ & GPT / WikiText & BERT / Shakespeare & LLaMA / StackOverflow \\
\midrule
0\% & 1.0 $\pm$ 0.0 & 1.0 $\pm$ 0.0 & 1.0 $\pm$ 0.0 \\
25\% & 3.0 $\pm$ 0.0 & 3.0 $\pm$ 0.0 & 3.0 $\pm$ 0.0 \\
50\% & 7.3 $\pm$ 0.6 & 7.3 $\pm$ 0.6 & 7.3 $\pm$ 0.6 \\
75\% & 14.0 $\pm$ 1.0 & 14.0 $\pm$ 1.0 & 14.0 $\pm$ 1.0 \\
\bottomrule
\end{tabular}
\end{table}

\begin{table}[ht] \centering
\caption{Estimated activation rounds for the bank-size ablation under Non-IID partition with $40\%$ malicious clients. The activation round is the first communication round where the bank reaches $M_{\mathrm{act}}$.}
\label{tab:bank_ablation_activation_rounds_noniid_mal40}
\begin{tabular}{lccc}
\toprule
$M_{\mathrm{act}}/K$ & GPT / WikiText & BERT / Shakespeare & LLaMA / StackOverflow \\
\midrule
0\% & 1.0 $\pm$ 0.0 & 1.0 $\pm$ 0.0 & 1.0 $\pm$ 0.0 \\
25\% & 3.0 $\pm$ 0.0 & 3.0 $\pm$ 0.0 & 3.0 $\pm$ 0.0 \\
50\% & 7.0 $\pm$ 0.0 & 7.0 $\pm$ 0.0 & 7.0 $\pm$ 0.0 \\
75\% & 14.0 $\pm$ 1.0 & 14.0 $\pm$ 1.0 & 14.0 $\pm$ 1.0 \\
\bottomrule
\end{tabular}
\end{table}

\clearpage
\subsection{Results Under IID Partitions}

Tables~\ref{tab:bank_ablation_all_models_iid_mal0_test_loss_active10}--\ref{tab:bank_ablation_all_models_iid_mal40_test_sem_ent_active10} report the IID bank-threshold results. Across model families and attack fractions, larger thresholds generally improve the early active-window values; these comparisons reflect both broader client coverage in the bank and later activation of global aggregation.

At the strongest $40\%$ malicious-client setting, the $75\%$ activation coverage gives the lowest test loss and perplexity for all three model families under IID partitioning:
perplexity falls from $1820.21\pm4.44$ to $1626.87\pm9.50$ for GPT/WikiText, from $1532.76\pm114.78$ to $1043.96\pm22.20$ for BERT/Tiny Shakespeare, and from $2408.15\pm162.25$ to $609.87\pm24.44$ for LLaMA/StackOverflow. Token-level semantic entropy also decreases substantially for BERT and LLaMA, indicating that broader bank coverage can stabilize the early active trajectory.

\begin{table}[ht] \centering
\caption{Bank-size threshold ablation under IID partition with $0\%$ malicious clients. Metric: Test Loss. For each seed and threshold, we select the checkpoint with the lowest test loss within the first 10 communication rounds after the bank reaches $M_{\mathrm{act}}$. The best case is highlighted in bold.}
\label{tab:bank_ablation_all_models_iid_mal0_test_loss_active10}
\begin{tabular}{lccc}
\toprule
$M_{\mathrm{act}}/K$ & GPT / WikiText & BERT / Shakespeare & LLaMA / StackOverflow \\
\midrule
0\% & 7.50 $\pm$ 0.00 & 7.33 $\pm$ 0.01 & 7.72 $\pm$ 0.04 \\
25\% & 7.48 $\pm$ 0.01 & 7.20 $\pm$ 0.03 & 7.37 $\pm$ 0.04 \\
50\% & 7.44 $\pm$ 0.01 & 7.04 $\pm$ 0.04 & 6.87 $\pm$ 0.09 \\
75\% & \textbf{7.38 $\pm$ 0.01} & \textbf{6.94 $\pm$ 0.02} & \textbf{6.42 $\pm$ 0.03} \\
\bottomrule
\end{tabular}
\end{table}

\begin{table}[ht] \centering \centering
\caption{Bank-size threshold ablation under IID partition with $0\%$ malicious clients. Metric: Test PPL. For each seed and threshold, we select the checkpoint with the lowest test loss within the first 10 communication rounds after the bank reaches $M_{\mathrm{act}}$. The best case is highlighted in bold.}
\label{tab:bank_ablation_all_models_iid_mal0_test_ppl_active10}
\begin{tabular}{lccc}
\toprule
$M_{\mathrm{act}}/K$ & GPT / WikiText & BERT / Shakespeare & LLaMA / StackOverflow \\
\midrule
0\% & 1802.40 $\pm$ 2.97 & 1519.44 $\pm$ 14.38 & 2258.01 $\pm$ 88.68 \\
25\% & 1764.16 $\pm$ 11.60 & 1341.88 $\pm$ 42.96 & 1584.71 $\pm$ 60.01 \\
50\% & 1695.11 $\pm$ 18.37 & 1143.14 $\pm$ 48.58 & 969.72 $\pm$ 83.50 \\
75\% & \textbf{1601.83 $\pm$ 17.25} & \textbf{1028.47 $\pm$ 24.40} & \textbf{616.50 $\pm$ 19.83} \\
\bottomrule
\end{tabular}
\end{table}

\begin{table}[ht] \centering
\caption{Bank-size threshold ablation under IID partition with $0\%$ malicious clients. Metric: Semantic Entropy. For each seed and threshold, we select the checkpoint with the lowest test loss within the first 10 communication rounds after the bank reaches $M_{\mathrm{act}}$. The best case is highlighted in bold.}
\label{tab:bank_ablation_all_models_iid_mal0_test_sem_ent_active10}
\begin{tabular}{lccc}
\toprule
$M_{\mathrm{act}}/K$ & GPT / WikiText & BERT / Shakespeare & LLaMA / StackOverflow \\
\midrule
0\% & 7.01 $\pm$ 0.03 & 7.46 $\pm$ 0.02 & 9.95 $\pm$ 0.02 \\
25\% & 6.96 $\pm$ 0.09 & 7.26 $\pm$ 0.38 & 9.67 $\pm$ 0.03 \\
50\% & 6.90 $\pm$ 0.00 & 6.39 $\pm$ 0.16 & 8.88 $\pm$ 0.18 \\
75\% & \textbf{6.90 $\pm$ 0.06} & \textbf{6.37 $\pm$ 0.12} & \textbf{7.49 $\pm$ 0.04} \\
\bottomrule
\end{tabular}
\end{table}

\begin{table}[ht] \centering
\caption{Bank-size threshold ablation under IID partition with $10\%$ malicious clients. Metric: Test Loss. For each seed and threshold, we select the checkpoint with the lowest test loss within the first 10 communication rounds after the bank reaches $M_{\mathrm{act}}$. The best case is highlighted in bold.}
\label{tab:bank_ablation_all_models_iid_mal10_test_loss_active10}
\begin{tabular}{lccc}
\toprule
$M_{\mathrm{act}}/K$ & GPT / WikiText & BERT / Shakespeare & LLaMA / StackOverflow \\
\midrule
0\% & 7.50 $\pm$ 0.00 & 7.32 $\pm$ 0.01 & 7.72 $\pm$ 0.04 \\
25\% & 7.48 $\pm$ 0.01 & 7.18 $\pm$ 0.05 & 7.37 $\pm$ 0.04 \\
50\% & 7.44 $\pm$ 0.01 & 7.06 $\pm$ 0.04 & 6.84 $\pm$ 0.03 \\
75\% & \textbf{7.39 $\pm$ 0.00} & \textbf{6.92 $\pm$ 0.02} & \textbf{6.44 $\pm$ 0.03} \\
\bottomrule
\end{tabular}
\end{table}

\begin{table}[ht] \centering
\caption{Bank-size threshold ablation under IID partition with $10\%$ malicious clients. Metric: Test PPL. For each seed and threshold, we select the checkpoint with the lowest test loss within the first 10 communication rounds after the bank reaches $M_{\mathrm{act}}$. The best case is highlighted in bold.}
\label{tab:bank_ablation_all_models_iid_mal10_test_ppl_active10}
\begin{tabular}{lccc}
\toprule
$M_{\mathrm{act}}/K$ & GPT / WikiText & BERT / Shakespeare & LLaMA / StackOverflow \\
\midrule
0\% & 1808.57 $\pm$ 3.95 & 1512.51 $\pm$ 12.15 & 2256.49 $\pm$ 90.48 \\
25\% & 1772.48 $\pm$ 13.23 & 1310.31 $\pm$ 71.29 & 1585.07 $\pm$ 62.27 \\
50\% & 1701.03 $\pm$ 11.07 & 1163.34 $\pm$ 42.36 & 933.03 $\pm$ 30.89 \\
75\% & \textbf{1615.82 $\pm$ 5.52} & \textbf{1011.33 $\pm$ 20.24} & \textbf{625.74 $\pm$ 17.90} \\
\bottomrule
\end{tabular}
\end{table}

\begin{table}[ht] \centering
\caption{Bank-size threshold ablation under IID partition with $10\%$ malicious clients. Metric: Semantic Entropy. For each seed and threshold, we select the checkpoint with the lowest test loss within the first 10 communication rounds after the bank reaches $M_{\mathrm{act}}$. The best case is highlighted in bold.}
\label{tab:bank_ablation_all_models_iid_mal10_test_sem_ent_active10}
\begin{tabular}{lccc}
\toprule
$M_{\mathrm{act}}/K$ & GPT / WikiText & BERT / Shakespeare & LLaMA / StackOverflow \\
\midrule
0\% & 7.01 $\pm$ 0.10 & 7.51 $\pm$ 0.04 & 9.95 $\pm$ 0.02 \\
25\% & 6.98 $\pm$ 0.06 & 6.88 $\pm$ 0.12 & 9.67 $\pm$ 0.03 \\
50\% & 6.92 $\pm$ 0.05 & 6.45 $\pm$ 0.01 & 8.79 $\pm$ 0.06 \\
75\% & \textbf{6.91 $\pm$ 0.06} & \textbf{6.33 $\pm$ 0.03} & \textbf{7.54 $\pm$ 0.04} \\
\bottomrule
\end{tabular}
\end{table}

\begin{table}[ht] \centering
\caption{Bank-size threshold ablation under IID partition with $20\%$ malicious clients. Metric: Test Loss. For each seed and threshold, we select the checkpoint with the lowest test loss within the first 10 communication rounds after the bank reaches $M_{\mathrm{act}}$. The best case is highlighted in bold.}
\label{tab:bank_ablation_all_models_iid_mal20_test_loss_active10}
\begin{tabular}{lccc}
\toprule
$M_{\mathrm{act}}/K$ & GPT / WikiText & BERT / Shakespeare & LLaMA / StackOverflow \\
\midrule
0\% & 7.50 $\pm$ 0.01 & 7.32 $\pm$ 0.01 & 7.72 $\pm$ 0.04 \\
25\% & 7.48 $\pm$ 0.01 & 7.19 $\pm$ 0.05 & 7.37 $\pm$ 0.04 \\
50\% & 7.43 $\pm$ 0.01 & 7.05 $\pm$ 0.01 & 6.84 $\pm$ 0.03 \\
75\% & \textbf{7.39 $\pm$ 0.01} & \textbf{6.93 $\pm$ 0.04} & \textbf{6.43 $\pm$ 0.02} \\
\bottomrule
\end{tabular}
\end{table}

\begin{table}[ht] \centering
\caption{Bank-size threshold ablation under IID partition with $20\%$ malicious clients. Metric: Test PPL. For each seed and threshold, we select the checkpoint with the lowest test loss within the first 10 communication rounds after the bank reaches $M_{\mathrm{act}}$. The best case is highlighted in bold.}
\label{tab:bank_ablation_all_models_iid_mal20_test_ppl_active10}
\begin{tabular}{lccc}
\toprule
$M_{\mathrm{act}}/K$ & GPT / WikiText & BERT / Shakespeare & LLaMA / StackOverflow \\
\midrule
0\% & 1810.07 $\pm$ 10.64 & 1505.92 $\pm$ 11.61 & 2256.98 $\pm$ 91.12 \\
25\% & 1774.75 $\pm$ 21.70 & 1328.38 $\pm$ 66.28 & 1585.63 $\pm$ 62.41 \\
50\% & 1693.93 $\pm$ 23.24 & 1157.90 $\pm$ 17.29 & 933.10 $\pm$ 31.26 \\
75\% & \textbf{1612.54 $\pm$ 12.60} & \textbf{1027.72 $\pm$ 37.62} & \textbf{617.43 $\pm$ 15.49} \\
\bottomrule
\end{tabular}
\end{table}

\begin{table}[ht] \centering
\caption{Bank-size threshold ablation under IID partition with $20\%$ malicious clients. Metric: Semantic Entropy. For each seed and threshold, we select the checkpoint with the lowest test loss within the first 10 communication rounds after the bank reaches $M_{\mathrm{act}}$. The best case is highlighted in bold.}
\label{tab:bank_ablation_all_models_iid_mal20_test_sem_ent_active10}
\begin{tabular}{lccc}
\toprule
$M_{\mathrm{act}}/K$ & GPT / WikiText & BERT / Shakespeare & LLaMA / StackOverflow \\
\midrule
0\% & 7.02 $\pm$ 0.07 & 7.52 $\pm$ 0.06 & 9.95 $\pm$ 0.02 \\
25\% & 6.92 $\pm$ 0.03 & 6.91 $\pm$ 0.13 & 9.67 $\pm$ 0.03 \\
50\% & 6.95 $\pm$ 0.04 & 6.48 $\pm$ 0.11 & 8.79 $\pm$ 0.06 \\
75\% & \textbf{6.88 $\pm$ 0.07} & \textbf{6.29 $\pm$ 0.05} & \textbf{7.50 $\pm$ 0.07} \\
\bottomrule
\end{tabular}
\end{table}

\begin{table}[ht] \centering
\caption{Bank-size threshold ablation under IID partition with $30\%$ malicious clients. Metric: Test Loss. For each seed and threshold, we select the checkpoint with the lowest test loss within the first 10 communication rounds after the bank reaches $M_{\mathrm{act}}$. The best case is highlighted in bold.}
\label{tab:bank_ablation_all_models_iid_mal30_test_loss_active10}
\begin{tabular}{lccc}
\toprule
$M_{\mathrm{act}}/K$ & GPT / WikiText & BERT / Shakespeare & LLaMA / StackOverflow \\
\midrule
0\% & 7.50 $\pm$ 0.01 & 7.32 $\pm$ 0.07 & 7.72 $\pm$ 0.04 \\
25\% & 7.48 $\pm$ 0.01 & 7.18 $\pm$ 0.04 & 7.37 $\pm$ 0.04 \\
50\% & 7.44 $\pm$ 0.01 & 7.06 $\pm$ 0.05 & 6.81 $\pm$ 0.05 \\
75\% & \textbf{7.39 $\pm$ 0.01} & \textbf{6.92 $\pm$ 0.03} & \textbf{6.40 $\pm$ 0.03} \\
\bottomrule
\end{tabular}
\end{table}

\begin{table}[ht] \centering
\caption{Bank-size threshold ablation under IID partition with $30\%$ malicious clients. Metric: Test PPL. For each seed and threshold, we select the checkpoint with the lowest test loss within the first 10 communication rounds after the bank reaches $M_{\mathrm{act}}$. The best case is highlighted in bold.}
\label{tab:bank_ablation_all_models_iid_mal30_test_ppl_active10}
\begin{tabular}{lccc}
\toprule
$M_{\mathrm{act}}/K$ & GPT / WikiText & BERT / Shakespeare & LLaMA / StackOverflow \\
\midrule
0\% & 1813.92 $\pm$ 12.17 & 1508.04 $\pm$ 110.34 & 2257.69 $\pm$ 91.15 \\
25\% & 1778.81 $\pm$ 10.18 & 1312.65 $\pm$ 46.41 & 1585.24 $\pm$ 62.65 \\
50\% & 1694.88 $\pm$ 16.38 & 1169.29 $\pm$ 61.52 & 903.36 $\pm$ 43.16 \\
75\% & \textbf{1619.16 $\pm$ 11.42} & \textbf{1011.10 $\pm$ 35.22} & \textbf{600.06 $\pm$ 15.82} \\
\bottomrule
\end{tabular}
\end{table}

\begin{table}[ht] \centering
\caption{Bank-size threshold ablation under IID partition with $30\%$ malicious clients. Metric: Semantic Entropy. For each seed and threshold, we select the checkpoint with the lowest test loss within the first 10 communication rounds after the bank reaches $M_{\mathrm{act}}$. The best case is highlighted in bold.}
\label{tab:bank_ablation_all_models_iid_mal30_test_sem_ent_active10}
\begin{tabular}{lccc}
\toprule
$M_{\mathrm{act}}/K$ & GPT / WikiText & BERT / Shakespeare & LLaMA / StackOverflow \\
\midrule
0\% & 7.03 $\pm$ 0.08 & 7.51 $\pm$ 0.07 & 9.95 $\pm$ 0.02 \\
25\% & 6.96 $\pm$ 0.02 & 6.94 $\pm$ 0.18 & 9.67 $\pm$ 0.03 \\
50\% & 6.96 $\pm$ 0.04 & 6.43 $\pm$ 0.04 & 8.71 $\pm$ 0.12 \\
75\% & \textbf{6.88 $\pm$ 0.07} & \textbf{6.40 $\pm$ 0.07} & \textbf{7.42 $\pm$ 0.08} \\
\bottomrule
\end{tabular}
\end{table}

\begin{table}[ht] \centering
\caption{Bank-size threshold ablation under IID partition with $40\%$ malicious clients. Metric: Test Loss. For each seed and threshold, we select the checkpoint with the lowest test loss within the first 10 communication rounds after the bank reaches $M_{\mathrm{act}}$. The best case is highlighted in bold.}
\label{tab:bank_ablation_all_models_iid_mal40_test_loss_active10}
\begin{tabular}{lccc}
\toprule
$M_{\mathrm{act}}/K$ & GPT / WikiText & BERT / Shakespeare & LLaMA / StackOverflow \\
\midrule
0\% & 7.51 $\pm$ 0.00 & 7.33 $\pm$ 0.07 & 7.79 $\pm$ 0.07 \\
25\% & 7.49 $\pm$ 0.01 & 7.24 $\pm$ 0.04 & 7.42 $\pm$ 0.05 \\
50\% & 7.45 $\pm$ 0.00 & 7.07 $\pm$ 0.05 & 6.87 $\pm$ 0.02 \\
75\% & \textbf{7.39 $\pm$ 0.01} & \textbf{6.95 $\pm$ 0.02} & \textbf{6.41 $\pm$ 0.04} \\
\bottomrule
\end{tabular}
\end{table}

\begin{table}[ht] \centering
\caption{Bank-size threshold ablation under IID partition with $40\%$ malicious clients. Metric: Test PPL. For each seed and threshold, we select the checkpoint with the lowest test loss within the first 10 communication rounds after the bank reaches $M_{\mathrm{act}}$. The best case is highlighted in bold.}
\label{tab:bank_ablation_all_models_iid_mal40_test_ppl_active10}
\begin{tabular}{lccc}
\toprule
$M_{\mathrm{act}}/K$ & GPT / WikiText & BERT / Shakespeare & LLaMA / StackOverflow \\
\midrule
0\% & 1820.21 $\pm$ 4.44 & 1532.76 $\pm$ 114.78 & 2408.15 $\pm$ 162.25 \\
25\% & 1782.72 $\pm$ 10.82 & 1388.21 $\pm$ 62.55 & 1674.97 $\pm$ 86.91 \\
50\% & 1718.82 $\pm$ 4.65 & 1171.92 $\pm$ 53.02 & 964.01 $\pm$ 22.51 \\
75\% & \textbf{1626.87 $\pm$ 9.50} & \textbf{1043.96 $\pm$ 22.20} & \textbf{609.87 $\pm$ 24.44} \\
\bottomrule
\end{tabular}
\end{table}

\begin{table}[ht] \centering
\caption{Bank-size threshold ablation under IID partition with $40\%$ malicious clients. Metric: Semantic Entropy. For each seed and threshold, we select the checkpoint with the lowest test loss within the first 10 communication rounds after the bank reaches $M_{\mathrm{act}}$. The best case is highlighted in bold.}
\label{tab:bank_ablation_all_models_iid_mal40_test_sem_ent_active10}
\begin{tabular}{lccc}
\toprule
$M_{\mathrm{act}}/K$ & GPT / WikiText & BERT / Shakespeare & LLaMA / StackOverflow \\
\midrule
0\% & 7.11 $\pm$ 0.06 & 7.92 $\pm$ 0.48 & 9.99 $\pm$ 0.04 \\
25\% & 7.02 $\pm$ 0.04 & 7.13 $\pm$ 0.22 & 9.73 $\pm$ 0.06 \\
50\% & 6.97 $\pm$ 0.07 & 6.53 $\pm$ 0.06 & 8.88 $\pm$ 0.09 \\
75\% & \textbf{6.93 $\pm$ 0.03} & \textbf{6.32 $\pm$ 0.10} & \textbf{7.46 $\pm$ 0.14} \\
\bottomrule
\end{tabular}
\end{table}

\clearpage
\subsection{Results Under Non-IID Partitions}

Tables~\ref{tab:bank_ablation_all_models_noniid_mal0_test_loss_active10}--\ref{tab:bank_ablation_all_models_noniid_mal40_test_sem_ent_active10} report the non-IID results. Larger thresholds again generally improve the early active-window values, while also postponing the first global aggregation rounds.

At the strongest $40\%$ malicious-client setting, the $75\%$ activation coverage gives the lowest test loss and perplexity for all three model families under Non-IID partitioning:
perplexity falls from $1920.35\pm56.71$ to $1698.50\pm44.56$ for GPT/WikiText and from $1514.80\pm139.13$ to $1078.90\pm33.23$ for BERT/Tiny Shakespeare, while for LLaMA/StackOverflow the $0\%$ threshold is highly unstable ($5266.58\pm4615.45$) and the $75\%$ threshold reduces this to $759.15\pm249.94$, which shows that delaying screening until the bank has broader coverage can substantially stabilize early active training under severe attack and non-IID heterogeneity.

For token-level semantic entropy, the $75\%$ threshold is again best for BERT and LLaMA at the strongest attack setting, while for GPT/non-IID the $50\%$ threshold gives the lowest value with $75\%$ close behind, indicating that the best threshold can vary slightly by metric, though the overall pattern still favors larger bank coverage for stable early training.

\begin{table}[ht] \centering
\caption{Bank-size threshold ablation under Non-IID partition with $0\%$ malicious clients. Metric: Test Loss. For each seed and threshold, we select the checkpoint with the lowest test loss within the first 10 communication rounds after the bank reaches $M_{\mathrm{act}}$. The best case is highlighted in bold.}
\label{tab:bank_ablation_all_models_noniid_mal0_test_loss_active10}
\begin{tabular}{lccc}
\toprule
$M_{\mathrm{act}}/K$ & GPT / WikiText & BERT / Shakespeare & LLaMA / StackOverflow \\
\midrule
0\% & 7.58 $\pm$ 0.01 & 7.35 $\pm$ 0.03 & 7.73 $\pm$ 0.05 \\
25\% & 7.55 $\pm$ 0.02 & 7.25 $\pm$ 0.01 & 7.37 $\pm$ 0.05 \\
50\% & 7.51 $\pm$ 0.03 & 7.11 $\pm$ 0.06 & 6.88 $\pm$ 0.09 \\
75\% & \textbf{7.44 $\pm$ 0.02} & \textbf{6.96 $\pm$ 0.04} & \textbf{6.43 $\pm$ 0.04} \\
\bottomrule
\end{tabular}
\end{table}

\begin{table}[ht] \centering
\caption{Bank-size threshold ablation under Non-IID partition with $0\%$ malicious clients. Metric: Test PPL. For each seed and threshold, we select the checkpoint with the lowest test loss within the first 10 communication rounds after the bank reaches $M_{\mathrm{act}}$. The best case is highlighted in bold.}
\label{tab:bank_ablation_all_models_noniid_mal0_test_ppl_active10}
\begin{tabular}{lccc}
\toprule
$M_{\mathrm{act}}/K$ & GPT / WikiText & BERT / Shakespeare & LLaMA / StackOverflow \\
\midrule
0\% & 1959.50 $\pm$ 22.90 & 1550.67 $\pm$ 47.69 & 2273.70 $\pm$ 102.86 \\
25\% & 1896.90 $\pm$ 42.06 & 1403.02 $\pm$ 11.73 & 1596.15 $\pm$ 71.84 \\
50\% & 1823.97 $\pm$ 46.01 & 1221.73 $\pm$ 76.35 & 976.65 $\pm$ 92.33 \\
75\% & \textbf{1710.88 $\pm$ 32.52} & \textbf{1050.53 $\pm$ 37.04} & \textbf{622.54 $\pm$ 22.03} \\
\bottomrule
\end{tabular}
\end{table}

\begin{table}[ht] \centering
\caption{Bank-size threshold ablation under Non-IID partition with $0\%$ malicious clients. Metric: Semantic Entropy. For each seed and threshold, we select the checkpoint with the lowest test loss within the first 10 communication rounds after the bank reaches $M_{\mathrm{act}}$. The best case is highlighted in bold.}
\label{tab:bank_ablation_all_models_noniid_mal0_test_sem_ent_active10}
\begin{tabular}{lccc}
\toprule
$M_{\mathrm{act}}/K$ & GPT / WikiText & BERT / Shakespeare & LLaMA / StackOverflow \\
\midrule
0\% & 6.86 $\pm$ 0.10 & 7.82 $\pm$ 0.38 & 9.95 $\pm$ 0.02 \\
25\% & 6.82 $\pm$ 0.03 & 6.95 $\pm$ 0.18 & 9.68 $\pm$ 0.04 \\
50\% & 6.72 $\pm$ 0.03 & 6.60 $\pm$ 0.15 & 8.88 $\pm$ 0.20 \\
75\% & \textbf{6.71 $\pm$ 0.03} & \textbf{6.33 $\pm$ 0.07} & \textbf{7.48 $\pm$ 0.06} \\
\bottomrule
\end{tabular}
\end{table}

\begin{table}[ht] \centering
\caption{Bank-size threshold ablation under Non-IID partition with $10\%$ malicious clients. Metric: Test Loss. For each seed and threshold, we select the checkpoint with the lowest test loss within the first 10 communication rounds after the bank reaches $M_{\mathrm{act}}$. The best case is highlighted in bold.}
\label{tab:bank_ablation_all_models_noniid_mal10_test_loss_active10}
\begin{tabular}{lccc}
\toprule
$M_{\mathrm{act}}/K$ & GPT / WikiText & BERT / Shakespeare & LLaMA / StackOverflow \\
\midrule
0\% & 7.55 $\pm$ 0.02 & 7.33 $\pm$ 0.01 & 7.73 $\pm$ 0.05 \\
25\% & 7.52 $\pm$ 0.02 & 7.25 $\pm$ 0.01 & 7.37 $\pm$ 0.05 \\
50\% & 7.48 $\pm$ 0.03 & 7.10 $\pm$ 0.07 & 6.84 $\pm$ 0.04 \\
75\% & \textbf{7.42 $\pm$ 0.02} & \textbf{6.95 $\pm$ 0.05} & \textbf{6.44 $\pm$ 0.03} \\
\bottomrule
\end{tabular}
\end{table}

\begin{table}[ht] \centering
\caption{Bank-size threshold ablation under Non-IID partition with $10\%$ malicious clients. Metric: Test PPL. For each seed and threshold, we select the checkpoint with the lowest test loss within the first 10 communication rounds after the bank reaches $M_{\mathrm{act}}$. The best case is highlighted in bold.}
\label{tab:bank_ablation_all_models_noniid_mal10_test_ppl_active10}
\begin{tabular}{lccc}
\toprule
$M_{\mathrm{act}}/K$ & GPT / WikiText & BERT / Shakespeare & LLaMA / StackOverflow \\
\midrule
0\% & 1899.89 $\pm$ 44.69 & 1531.88 $\pm$ 13.60 & 2265.83 $\pm$ 100.94 \\
25\% & 1853.08 $\pm$ 43.58 & 1402.99 $\pm$ 7.86 & 1592.05 $\pm$ 71.15 \\
50\% & 1768.44 $\pm$ 51.11 & 1213.39 $\pm$ 83.14 & 937.25 $\pm$ 35.56 \\
75\% & \textbf{1672.14 $\pm$ 37.85} & \textbf{1047.93 $\pm$ 54.20} & \textbf{628.96 $\pm$ 18.82} \\
\bottomrule
\end{tabular}
\end{table}

\begin{table}[ht] \centering
\caption{Bank-size threshold ablation under Non-IID partition with $10\%$ malicious clients. Metric: Semantic Entropy. For each seed and threshold, we select the checkpoint with the lowest test loss within the first 10 communication rounds after the bank reaches $M_{\mathrm{act}}$. The best case is highlighted in bold.}
\label{tab:bank_ablation_all_models_noniid_mal10_test_sem_ent_active10}
\begin{tabular}{lccc}
\toprule
$M_{\mathrm{act}}/K$ & GPT / WikiText & BERT / Shakespeare & LLaMA / StackOverflow \\
\midrule
0\% & 6.76 $\pm$ 0.01 & 7.63 $\pm$ 0.05 & 9.96 $\pm$ 0.02 \\
25\% & 6.71 $\pm$ 0.01 & 6.97 $\pm$ 0.15 & 9.68 $\pm$ 0.04 \\
50\% & \textbf{6.68 $\pm$ 0.05} & 6.61 $\pm$ 0.19 & 8.80 $\pm$ 0.08 \\
75\% & 6.71 $\pm$ 0.02 & \textbf{6.42 $\pm$ 0.02} & \textbf{7.53 $\pm$ 0.05} \\
\bottomrule
\end{tabular}
\end{table}

\begin{table}[ht] \centering
\caption{Bank-size threshold ablation under Non-IID partition with $20\%$ malicious clients. Metric: Test Loss. For each seed and threshold, we select the checkpoint with the lowest test loss within the first 10 communication rounds after the bank reaches $M_{\mathrm{act}}$. The best case is highlighted in bold.}
\label{tab:bank_ablation_all_models_noniid_mal20_test_loss_active10}
\begin{tabular}{lccc}
\toprule
$M_{\mathrm{act}}/K$ & GPT / WikiText & BERT / Shakespeare & LLaMA / StackOverflow \\
\midrule
0\% & 7.55 $\pm$ 0.02 & 7.27 $\pm$ 0.12 & 7.73 $\pm$ 0.05 \\
25\% & 7.52 $\pm$ 0.02 & 7.14 $\pm$ 0.17 & 7.38 $\pm$ 0.05 \\
50\% & 7.47 $\pm$ 0.04 & 7.08 $\pm$ 0.16 & 6.85 $\pm$ 0.04 \\
75\% & \textbf{7.42 $\pm$ 0.03} & \textbf{6.95 $\pm$ 0.11} & \textbf{6.43 $\pm$ 0.02} \\
\bottomrule
\end{tabular}
\end{table}

\begin{table}[ht] \centering
\caption{Bank-size threshold ablation under Non-IID partition with $20\%$ malicious clients. Metric: Test PPL. For each seed and threshold, we select the checkpoint with the lowest test loss within the first 10 communication rounds after the bank reaches $M_{\mathrm{act}}$. The best case is highlighted in bold.}
\label{tab:bank_ablation_all_models_noniid_mal20_test_ppl_active10}
\begin{tabular}{lccc}
\toprule
$M_{\mathrm{act}}/K$ & GPT / WikiText & BERT / Shakespeare & LLaMA / StackOverflow \\
\midrule
0\% & 1892.83 $\pm$ 45.80 & 1439.58 $\pm$ 163.65 & 2284.63 $\pm$ 122.32 \\
25\% & 1843.09 $\pm$ 43.59 & 1270.95 $\pm$ 202.01 & 1603.79 $\pm$ 84.10 \\
50\% & 1759.53 $\pm$ 65.05 & 1199.79 $\pm$ 182.48 & 941.83 $\pm$ 41.03 \\
75\% & \textbf{1673.23 $\pm$ 42.10} & \textbf{1048.68 $\pm$ 111.65} & \textbf{622.50 $\pm$ 15.12} \\
\bottomrule
\end{tabular}
\end{table}

\begin{table}[ht] \centering
\caption{Bank-size threshold ablation under Non-IID partition with $20\%$ malicious clients. Metric: Semantic Entropy. For each seed and threshold, we select the checkpoint with the lowest test loss within the first 10 communication rounds after the bank reaches $M_{\mathrm{act}}$. The best case is highlighted in bold.}
\label{tab:bank_ablation_all_models_noniid_mal20_test_sem_ent_active10}
\begin{tabular}{lccc}
\toprule
$M_{\mathrm{act}}/K$ & GPT / WikiText & BERT / Shakespeare & LLaMA / StackOverflow \\
\midrule
0\% & 6.78 $\pm$ 0.08 & 7.47 $\pm$ 0.31 & 9.96 $\pm$ 0.03 \\
25\% & 6.84 $\pm$ 0.10 & 6.84 $\pm$ 0.45 & 9.69 $\pm$ 0.05 \\
50\% & 6.75 $\pm$ 0.06 & 6.59 $\pm$ 0.16 & 8.81 $\pm$ 0.08 \\
75\% & \textbf{6.74 $\pm$ 0.02} & \textbf{6.48 $\pm$ 0.10} & \textbf{7.50 $\pm$ 0.06} \\
\bottomrule
\end{tabular}
\end{table}

\begin{table}[ht] \centering
\caption{Bank-size threshold ablation under Non-IID partition with $30\%$ malicious clients. Metric: Test Loss. For each seed and threshold, we select the checkpoint with the lowest test loss within the first 10 communication rounds after the bank reaches $M_{\mathrm{act}}$. The best case is highlighted in bold.}
\label{tab:bank_ablation_all_models_noniid_mal30_test_loss_active10}
\begin{tabular}{lccc}
\toprule
$M_{\mathrm{act}}/K$ & GPT / WikiText & BERT / Shakespeare & LLaMA / StackOverflow \\
\midrule
0\% & 7.56 $\pm$ 0.02 & 7.30 $\pm$ 0.06 & 7.73 $\pm$ 0.05 \\
25\% & 7.52 $\pm$ 0.04 & 7.20 $\pm$ 0.09 & 7.38 $\pm$ 0.05 \\
50\% & 7.47 $\pm$ 0.03 & 7.09 $\pm$ 0.07 & 6.81 $\pm$ 0.06 \\
75\% & \textbf{7.43 $\pm$ 0.02} & \textbf{6.97 $\pm$ 0.03} & \textbf{6.41 $\pm$ 0.03} \\
\bottomrule
\end{tabular}
\end{table}

\begin{table}[ht] \centering
\caption{Bank-size threshold ablation under Non-IID partition with $30\%$ malicious clients. Metric: Test PPL. For each seed and threshold, we select the checkpoint with the lowest test loss within the first 10 communication rounds after the bank reaches $M_{\mathrm{act}}$. The best case is highlighted in bold.}
\label{tab:bank_ablation_all_models_noniid_mal30_test_ppl_active10}
\begin{tabular}{lccc}
\toprule
$M_{\mathrm{act}}/K$ & GPT / WikiText & BERT / Shakespeare & LLaMA / StackOverflow \\
\midrule
0\% & 1914.41 $\pm$ 35.81 & 1484.60 $\pm$ 89.71 & 2279.76 $\pm$ 116.94 \\
25\% & 1849.76 $\pm$ 65.54 & 1339.75 $\pm$ 121.30 & 1600.19 $\pm$ 79.79 \\
50\% & 1760.25 $\pm$ 53.04 & 1206.50 $\pm$ 80.07 & 911.93 $\pm$ 52.64 \\
75\% & \textbf{1677.76 $\pm$ 37.72} & \textbf{1063.79 $\pm$ 26.84} & \textbf{605.22 $\pm$ 18.26} \\
\bottomrule
\end{tabular}
\end{table}

\begin{table}[ht] \centering
\caption{Bank-size threshold ablation under Non-IID partition with $30\%$ malicious clients. Metric: Semantic Entropy. For each seed and threshold, we select the checkpoint with the lowest test loss within the first 10 communication rounds after the bank reaches $M_{\mathrm{act}}$. The best case is highlighted in bold.}
\label{tab:bank_ablation_all_models_noniid_mal30_test_sem_ent_active10}
\begin{tabular}{lccc}
\toprule
$M_{\mathrm{act}}/K$ & GPT / WikiText & BERT / Shakespeare & LLaMA / StackOverflow \\
\midrule
0\% & \textbf{6.74 $\pm$ 0.07} & 7.62 $\pm$ 0.11 & 9.96 $\pm$ 0.03 \\
25\% & 6.79 $\pm$ 0.01 & 6.92 $\pm$ 0.14 & 9.68 $\pm$ 0.05 \\
50\% & 6.75 $\pm$ 0.05 & 6.53 $\pm$ 0.10 & 8.72 $\pm$ 0.15 \\
75\% & 6.78 $\pm$ 0.05 & \textbf{6.40 $\pm$ 0.04} & \textbf{7.43 $\pm$ 0.08} \\
\bottomrule
\end{tabular}
\end{table}

\begin{table}[ht] \centering
\caption{Bank-size threshold ablation under Non-IID partition with $40\%$ malicious clients. Metric: Test Loss. For each seed and threshold, we select the checkpoint with the lowest test loss within the first 10 communication rounds after the bank reaches $M_{\mathrm{act}}$. The best case is highlighted in bold.}
\label{tab:bank_ablation_all_models_noniid_mal40_test_loss_active10}
\begin{tabular}{lccc}
\toprule
$M_{\mathrm{act}}/K$ & GPT / WikiText & BERT / Shakespeare & LLaMA / StackOverflow \\
\midrule
0\% & 7.56 $\pm$ 0.03 & 7.32 $\pm$ 0.09 & 8.33 $\pm$ 0.82 \\
25\% & 7.53 $\pm$ 0.03 & 7.21 $\pm$ 0.11 & 7.96 $\pm$ 0.82 \\
50\% & 7.49 $\pm$ 0.04 & 7.13 $\pm$ 0.08 & 7.28 $\pm$ 0.61 \\
75\% & \textbf{7.44 $\pm$ 0.03} & \textbf{6.98 $\pm$ 0.03} & \textbf{6.60 $\pm$ 0.31} \\
\bottomrule
\end{tabular}
\end{table}

\begin{table}[ht] \centering
\caption{Bank-size threshold ablation under Non-IID partition with $40\%$ malicious clients. Metric: Test PPL. For each seed and threshold, we select the checkpoint with the lowest test loss within the first 10 communication rounds after the bank reaches $M_{\mathrm{act}}$. The best case is highlighted in bold.}
\label{tab:bank_ablation_all_models_noniid_mal40_test_ppl_active10}
\begin{tabular}{lccc}
\toprule
$M_{\mathrm{act}}/K$ & GPT / WikiText & BERT / Shakespeare & LLaMA / StackOverflow \\
\midrule
0\% & 1920.35 $\pm$ 56.71 & 1514.80 $\pm$ 139.13 & 5266.58 $\pm$ 4615.45 \\
25\% & 1860.94 $\pm$ 60.73 & 1355.84 $\pm$ 138.76 & 3651.86 $\pm$ 3205.90 \\
50\% & 1783.26 $\pm$ 65.14 & 1248.37 $\pm$ 101.92 & 1649.52 $\pm$ 1092.41 \\
75\% & \textbf{1698.50 $\pm$ 44.56} & \textbf{1078.90 $\pm$ 33.23} & \textbf{759.15 $\pm$ 249.94} \\
\bottomrule
\end{tabular}
\end{table}

\begin{table}[ht] \centering
\caption{Bank-size threshold ablation under Non-IID partition with $40\%$ malicious clients. Metric: Semantic Entropy. For each seed and threshold, we select the checkpoint with the lowest test loss within the first 10 communication rounds after the bank reaches $M_{\mathrm{act}}$. The best case is highlighted in bold.}
\label{tab:bank_ablation_all_models_noniid_mal40_test_sem_ent_active10}
\begin{tabular}{lccc}
\toprule
$M_{\mathrm{act}}/K$ & GPT / WikiText & BERT / Shakespeare & LLaMA / StackOverflow \\
\midrule
0\% & 6.80 $\pm$ 0.12 & 7.85 $\pm$ 0.34 & 10.12 $\pm$ 0.17 \\
25\% & 6.81 $\pm$ 0.10 & 6.90 $\pm$ 0.19 & 9.96 $\pm$ 0.31 \\
50\% & \textbf{6.73 $\pm$ 0.04} & 6.62 $\pm$ 0.15 & 9.36 $\pm$ 0.66 \\
75\% & 6.76 $\pm$ 0.06 & \textbf{6.29 $\pm$ 0.18} & \textbf{8.00 $\pm$ 0.97} \\
\bottomrule
\end{tabular}
\end{table}

\subsection{Interpretation}

The study characterizes $M_{\mathrm{act}}$ as the bank coverage at which screening and global aggregation begin, rather than as a desired final bank size. The bank continues to grow after activation. Across the evaluated settings, the dominant trend favors broader activation coverage: perplexity improves consistently as coverage increases, and token-level semantic entropy generally follows the same pattern. This supports maximizing the bank coverage available at activation whenever the resulting delay is acceptable.

We use $50\%$ population coverage as the default practical activation point, not because it is statistically preferable to broader coverage, but because it begins aggregation earlier than the largest evaluated threshold of $75\%$. In deployments with intermittent or nonuniform participation, waiting for all client identities may be impractically slow. A smaller activation bank can therefore provide an operational starting point, while the probability analysis in Appendix~\ref{app:min_bank_probability} quantifies its benign-majority reliability under uniform random participation. The ablation does not evaluate $100\%$ activation coverage and therefore does not establish full coverage as an empirical optimum.


\clearpage
\section{GMM-Selection Ablation}
\label{app:gmm_ablation}

This appendix reports the ablation study for the GMM selection rule in \FedLNS{}. 
The default \FedLNS{} rule fits both a one-component and a two-component diagonal Gaussian mixture model to the current bank-standardized normalization-signature features, computes the Bayesian Information Criterion (BIC) for each model, and selects the lower-BIC model. 
This adaptive rule is compared with a forced two-component variant.

\paragraph{BIC-selected 1-vs-2 GMM.}
This is the default \FedLNS{} rule. 
It selects
\begin{equation}
    \widehat{c}_t
    =
    \arg\min_{c\in\{1,2\}}
    \mathrm{BIC}(c).
\end{equation}
When $\widehat{c}_t=1$, all clients are retained. 
When $\widehat{c}_t=2$, \FedLNS{} retains the component with the smaller median standardized deviation from the bank reference.

\paragraph{Forced 2-GMM.}
This variant always fits a two-component GMM after bank activation and retains the component with the smaller median absolute standardized deviation from the bank reference. 
It is more aggressive than the default method because it always tries to split the current round into two groups, even when the current client signatures may be better explained by a single population.

All GMM ablation runs use the same model, dataset, partition, attack, optimizer, seeds, and bank threshold as the corresponding main experiment. 
The only changed variable is the GMM decision rule. 
For each seed, we select the checkpoint with the lowest test loss and report test loss, perplexity, and token-level semantic entropy from that same checkpoint. Thus, token-level semantic entropy is not independently optimized; it is measured at the checkpoint selected by test loss.
All entries are reported as mean $\pm$ sample standard deviation over seeds $\{100,200,300\}$.

\subsection{Results Under IID Partitions}

Tables~\ref{tab:gmm_ablation_all_models_iid_mal0_test_loss}--\ref{tab:gmm_ablation_all_models_iid_mal40_test_sem_ent} report the IID GMM-selection ablation results across all malicious-client fractions. 
The BIC-selected and forced two-component rules produce nearly identical selected evaluation-checkpoint values across the three model families. 
At the strongest $40\%$ malicious-client setting, both rules obtain the same values up to the reported precision for BERT/Tiny Shakespeare and LLaMA/StackOverflow. 
For GPT/WikiText, the two rules are also essentially identical: the BIC-selected rule gives test loss $7.16\pm0.00$, perplexity $1281.63\pm1.74$, and token-level semantic entropy $6.80\pm0.06$, while forced 2-GMM gives test loss $7.16\pm0.00$, perplexity $1281.62\pm1.73$, and token-level semantic entropy $6.80\pm0.06$.

In the IID results, the default BIC-selected rule and the forced two-component rule have nearly identical endpoint values. The ablation does not establish that the current-round feature distribution is statistically identifiable as two components in every attacked setting.

\begin{table}[ht] \centering
\caption{GMM-selection ablation under IID partition with $0\%$ malicious clients. Metric: Test Loss. Values are mean $\pm$ standard deviation over three seeds. For each seed, all metrics are taken from the checkpoint with the lowest test loss. The best case is highlighted in bold.}
\label{tab:gmm_ablation_all_models_iid_mal0_test_loss}
\begin{tabular}{lccc}
\toprule
GMM rule & GPT / WikiText & BERT / Shakespeare & LLaMA / StackOverflow \\
\midrule
BIC 1/2-GMM & \textbf{7.13 $\pm$ 0.00} & \textbf{6.50 $\pm$ 0.05} & \textbf{5.36 $\pm$ 0.03} \\
Forced 2-GMM & \textbf{7.13 $\pm$ 0.00} & \textbf{6.50 $\pm$ 0.05} & \textbf{5.36 $\pm$ 0.03} \\
\bottomrule
\end{tabular}
\end{table}

\begin{table}[ht]\centering
\caption{GMM-selection ablation under IID partition with $0\%$ malicious clients. Metric: Test PPL. Values are mean $\pm$ standard deviation over three seeds. For each seed, all metrics are taken from the checkpoint with the lowest test loss. The best case is highlighted in bold.}
\label{tab:gmm_ablation_all_models_iid_mal0_test_ppl}
\begin{tabular}{lccc}
\toprule
GMM rule & GPT / WikiText & BERT / Shakespeare & LLaMA / StackOverflow \\
\midrule
BIC 1/2-GMM & \textbf{1246.57 $\pm$ 2.52} & \textbf{663.68 $\pm$ 29.76} & \textbf{213.42 $\pm$ 6.57} \\
Forced 2-GMM & \textbf{1246.57 $\pm$ 2.52} & \textbf{663.68 $\pm$ 29.76} & \textbf{213.42 $\pm$ 6.57} \\
\bottomrule
\end{tabular}
\end{table}

\begin{table}[ht]\centering
\caption{GMM-selection ablation under IID partition with $0\%$ malicious clients. Metric: Semantic Entropy. Values are mean $\pm$ standard deviation over three seeds. For each seed, all metrics are taken from the checkpoint with the lowest test loss. The best case is highlighted in bold.}
\label{tab:gmm_ablation_all_models_iid_mal0_test_sem_ent}
\begin{tabular}{lccc}
\toprule
GMM rule & GPT / WikiText & BERT / Shakespeare & LLaMA / StackOverflow \\
\midrule
BIC 1/2-GMM & \textbf{6.73 $\pm$ 0.03} & \textbf{6.28 $\pm$ 0.09} & \textbf{5.52 $\pm$ 0.02} \\
Forced 2-GMM & \textbf{6.73 $\pm$ 0.03} & \textbf{6.28 $\pm$ 0.09} & \textbf{5.52 $\pm$ 0.02} \\
\bottomrule
\end{tabular}
\end{table}

\begin{table}[ht]\centering
\caption{GMM-selection ablation under IID partition with $10\%$ malicious clients. Metric: Test Loss. Values are mean $\pm$ standard deviation over three seeds. For each seed, all metrics are taken from the checkpoint with the lowest test loss. The best case is highlighted in bold.}
\label{tab:gmm_ablation_all_models_iid_mal10_test_loss}
\begin{tabular}{lccc}
\toprule
GMM rule & GPT / WikiText & BERT / Shakespeare & LLaMA / StackOverflow \\
\midrule
BIC 1/2-GMM & \textbf{7.13 $\pm$ 0.00} & \textbf{6.49 $\pm$ 0.02} & \textbf{5.37 $\pm$ 0.03} \\
Forced 2-GMM & \textbf{7.13 $\pm$ 0.00} & \textbf{6.49 $\pm$ 0.02} & \textbf{5.37 $\pm$ 0.03} \\
\bottomrule
\end{tabular}
\end{table}

\begin{table}[ht]\centering
\caption{GMM-selection ablation under IID partition with $10\%$ malicious clients. Metric: Test PPL. Values are mean $\pm$ standard deviation over three seeds. For each seed, all metrics are taken from the checkpoint with the lowest test loss. The best case is highlighted in bold.}
\label{tab:gmm_ablation_all_models_iid_mal10_test_ppl}
\begin{tabular}{lccc}
\toprule
GMM rule & GPT / WikiText & BERT / Shakespeare & LLaMA / StackOverflow \\
\midrule
BIC 1/2-GMM & \textbf{1247.28 $\pm$ 5.97} & \textbf{655.97 $\pm$ 11.66} & \textbf{213.93 $\pm$ 6.56} \\
Forced 2-GMM & \textbf{1247.28 $\pm$ 5.97} & \textbf{655.97 $\pm$ 11.66} & \textbf{213.93 $\pm$ 6.56} \\
\bottomrule
\end{tabular}
\end{table}

\begin{table}[ht]\centering
\caption{GMM-selection ablation under IID partition with $10\%$ malicious clients. Metric: Semantic Entropy. Values are mean $\pm$ standard deviation over three seeds. For each seed, all metrics are taken from the checkpoint with the lowest test loss. The best case is highlighted in bold.}
\label{tab:gmm_ablation_all_models_iid_mal10_test_sem_ent}
\begin{tabular}{lccc}
\toprule
GMM rule & GPT / WikiText & BERT / Shakespeare & LLaMA / StackOverflow \\
\midrule
BIC 1/2-GMM & \textbf{6.81 $\pm$ 0.01} & \textbf{6.38 $\pm$ 0.16} & \textbf{5.52 $\pm$ 0.04} \\
Forced 2-GMM & \textbf{6.81 $\pm$ 0.01} & \textbf{6.38 $\pm$ 0.16} & \textbf{5.52 $\pm$ 0.04} \\
\bottomrule
\end{tabular}
\end{table}

\begin{table}[ht]\centering
\caption{GMM-selection ablation under IID partition with $20\%$ malicious clients. Metric: Test Loss. Values are mean $\pm$ standard deviation over three seeds. For each seed, all metrics are taken from the checkpoint with the lowest test loss. The best case is highlighted in bold.}
\label{tab:gmm_ablation_all_models_iid_mal20_test_loss}
\begin{tabular}{lccc}
\toprule
GMM rule & GPT / WikiText & BERT / Shakespeare & LLaMA / StackOverflow \\
\midrule
BIC 1/2-GMM & \textbf{7.13 $\pm$ 0.00} & \textbf{6.49 $\pm$ 0.02} & \textbf{5.37 $\pm$ 0.03} \\
Forced 2-GMM & \textbf{7.13 $\pm$ 0.00} & \textbf{6.49 $\pm$ 0.02} & \textbf{5.37 $\pm$ 0.03} \\
\bottomrule
\end{tabular}
\end{table}

\begin{table}[ht]\centering
\caption{GMM-selection ablation under IID partition with $20\%$ malicious clients. Metric: Test PPL. Values are mean $\pm$ standard deviation over three seeds. For each seed, all metrics are taken from the checkpoint with the lowest test loss. The best case is highlighted in bold.}
\label{tab:gmm_ablation_all_models_iid_mal20_test_ppl}
\begin{tabular}{lccc}
\toprule
GMM rule & GPT / WikiText & BERT / Shakespeare & LLaMA / StackOverflow \\
\midrule
BIC 1/2-GMM & \textbf{1249.11 $\pm$ 1.78} & \textbf{661.38 $\pm$ 11.94} & \textbf{214.57 $\pm$ 6.75} \\
Forced 2-GMM & \textbf{1249.11 $\pm$ 1.78} & \textbf{661.38 $\pm$ 11.94} & \textbf{214.57 $\pm$ 6.75} \\
\bottomrule
\end{tabular}
\end{table}

\begin{table}[ht]\centering
\caption{GMM-selection ablation under IID partition with $20\%$ malicious clients. Metric: Semantic Entropy. Values are mean $\pm$ standard deviation over three seeds. For each seed, all metrics are taken from the checkpoint with the lowest test loss. The best case is highlighted in bold.}
\label{tab:gmm_ablation_all_models_iid_mal20_test_sem_ent}
\begin{tabular}{lccc}
\toprule
GMM rule & GPT / WikiText & BERT / Shakespeare & LLaMA / StackOverflow \\
\midrule
BIC 1/2-GMM & \textbf{6.75 $\pm$ 0.02} & \textbf{6.19 $\pm$ 0.09} & \textbf{5.54 $\pm$ 0.03} \\
Forced 2-GMM & \textbf{6.75 $\pm$ 0.02} & \textbf{6.19 $\pm$ 0.09} & \textbf{5.54 $\pm$ 0.03} \\
\bottomrule
\end{tabular}
\end{table}

\begin{table}[ht]\centering
\caption{GMM-selection ablation under IID partition with $30\%$ malicious clients. Metric: Test Loss. Values are mean $\pm$ standard deviation over three seeds. For each seed, all metrics are taken from the checkpoint with the lowest test loss. The best case is highlighted in bold.}
\label{tab:gmm_ablation_all_models_iid_mal30_test_loss}
\begin{tabular}{lccc}
\toprule
GMM rule & GPT / WikiText & BERT / Shakespeare & LLaMA / StackOverflow \\
\midrule
BIC 1/2-GMM & \textbf{7.14 $\pm$ 0.00} & \textbf{6.51 $\pm$ 0.01} & \textbf{5.37 $\pm$ 0.03} \\
Forced 2-GMM & \textbf{7.14 $\pm$ 0.00} & \textbf{6.51 $\pm$ 0.01} & \textbf{5.37 $\pm$ 0.03} \\
\bottomrule
\end{tabular}
\end{table}

\begin{table}[ht]\centering
\caption{GMM-selection ablation under IID partition with $30\%$ malicious clients. Metric: Test PPL. Values are mean $\pm$ standard deviation over three seeds. For each seed, all metrics are taken from the checkpoint with the lowest test loss. The best case is highlighted in bold.}
\label{tab:gmm_ablation_all_models_iid_mal30_test_ppl}
\begin{tabular}{lccc}
\toprule
GMM rule & GPT / WikiText & BERT / Shakespeare & LLaMA / StackOverflow \\
\midrule
BIC 1/2-GMM & \textbf{1258.21 $\pm$ 4.70} & \textbf{669.37 $\pm$ 5.86} & 215.61 $\pm$ 6.70 \\
Forced 2-GMM & \textbf{1258.21 $\pm$ 4.70} & \textbf{669.37 $\pm$ 5.86} & \textbf{215.61 $\pm$ 6.70} \\
\bottomrule
\end{tabular}
\end{table}

\begin{table}[ht]\centering
\caption{GMM-selection ablation under IID partition with $30\%$ malicious clients. Metric: Semantic Entropy. Values are mean $\pm$ standard deviation over three seeds. For each seed, all metrics are taken from the checkpoint with the lowest test loss. The best case is highlighted in bold.}
\label{tab:gmm_ablation_all_models_iid_mal30_test_sem_ent}
\begin{tabular}{lccc}
\toprule
GMM rule & GPT / WikiText & BERT / Shakespeare & LLaMA / StackOverflow \\
\midrule
BIC 1/2-GMM & \textbf{6.74 $\pm$ 0.04} & \textbf{6.24 $\pm$ 0.10} & \textbf{5.52 $\pm$ 0.04} \\
Forced 2-GMM & \textbf{6.74 $\pm$ 0.04} & \textbf{6.24 $\pm$ 0.10} & \textbf{5.52 $\pm$ 0.04} \\
\bottomrule
\end{tabular}
\end{table}

\begin{table}[ht]\centering
\caption{GMM-selection ablation under IID partition with $40\%$ malicious clients. Metric: Test Loss. Values are mean $\pm$ standard deviation over three seeds. For each seed, all metrics are taken from the checkpoint with the lowest test loss. The best case is highlighted in bold.}
\label{tab:gmm_ablation_all_models_iid_mal40_test_loss}
\begin{tabular}{lccc}
\toprule
GMM rule & GPT / WikiText & BERT / Shakespeare & LLaMA / StackOverflow \\
\midrule
BIC 1/2-GMM & 7.16 $\pm$ 0.00 & \textbf{6.55 $\pm$ 0.02} & \textbf{5.38 $\pm$ 0.03} \\
Forced 2-GMM & \textbf{7.16 $\pm$ 0.00} & \textbf{6.55 $\pm$ 0.02} & \textbf{5.38 $\pm$ 0.03} \\
\bottomrule
\end{tabular}
\end{table}

\begin{table}[ht]\centering
\caption{GMM-selection ablation under IID partition with $40\%$ malicious clients. Metric: Test PPL. Values are mean $\pm$ standard deviation over three seeds. For each seed, all metrics are taken from the checkpoint with the lowest test loss. The best case is highlighted in bold.}
\label{tab:gmm_ablation_all_models_iid_mal40_test_ppl}
\begin{tabular}{lccc}
\toprule
GMM rule & GPT / WikiText & BERT / Shakespeare & LLaMA / StackOverflow \\
\midrule
BIC 1/2-GMM & 1281.63 $\pm$ 1.74 & \textbf{697.62 $\pm$ 13.62} & \textbf{217.14 $\pm$ 6.85} \\
Forced 2-GMM & \textbf{1281.62 $\pm$ 1.73} & \textbf{697.62 $\pm$ 13.62} & \textbf{217.14 $\pm$ 6.85} \\
\bottomrule
\end{tabular}
\end{table}

\begin{table}[ht]\centering
\caption{GMM-selection ablation under IID partition with $40\%$ malicious clients. Metric: Semantic Entropy. Values are mean $\pm$ standard deviation over three seeds. For each seed, all metrics are taken from the checkpoint with the lowest test loss. The best case is highlighted in bold.}
\label{tab:gmm_ablation_all_models_iid_mal40_test_sem_ent}
\begin{tabular}{lccc}
\toprule
GMM rule & GPT / WikiText & BERT / Shakespeare & LLaMA / StackOverflow \\
\midrule
BIC 1/2-GMM & \textbf{6.80 $\pm$ 0.06} & \textbf{6.24 $\pm$ 0.06} & \textbf{5.52 $\pm$ 0.02} \\
Forced 2-GMM & 6.80 $\pm$ 0.06 & \textbf{6.24 $\pm$ 0.06} & \textbf{5.52 $\pm$ 0.02} \\
\bottomrule
\end{tabular}
\end{table}

\clearpage
\subsection{Results Under Non-IID Partitions}

Tables~\ref{tab:gmm_ablation_all_models_noniid_mal0_test_loss}--\ref{tab:gmm_ablation_all_models_noniid_mal40_test_sem_ent} report the corresponding non-IID results. 
The non-IID results follow the same pattern: BIC-selected one-vs-two GMM and forced two-component GMM have nearly indistinguishable endpoint values in most settings. 
At the strongest $40\%$ malicious-client setting, the two methods produce identical reported values for all three model families. 
For GPT/WikiText, both rules obtain test loss $7.16\pm0.01$, perplexity $1286.50\pm15.34$, and token-level semantic entropy $6.92\pm0.24$. 
For BERT/Tiny Shakespeare, both obtain test loss $6.48\pm0.13$, perplexity $658.21\pm91.66$, and token-level semantic entropy $6.19\pm0.13$. 
For LLaMA/StackOverflow, both obtain test loss $5.39\pm0.03$, perplexity $220.18\pm6.45$, and token-level semantic entropy $5.52\pm0.03$.

The main visible difference appears only in a small number of intermediate settings. 
For example, under GPT/WikiText with non-IID partitioning and $30\%$ malicious clients, BIC-selected GMM gives perplexity $1277.28\pm23.52$, while forced 2-GMM gives $1277.81\pm22.93$. 
For token-level semantic entropy in the same setting, forced 2-GMM gives $6.87\pm0.06$, while BIC-selected GMM gives $6.90\pm0.02$. 
These differences are small and do not indicate a systematic advantage of forced splitting.

\begin{table}[ht]\centering
\caption{GMM-selection ablation under Non-IID partition with $0\%$ malicious clients. Metric: Test Loss. Values are mean $\pm$ standard deviation over three seeds. For each seed, all metrics are taken from the checkpoint with the lowest test loss. The best case is highlighted in bold.}
\label{tab:gmm_ablation_all_models_noniid_mal0_test_loss}
\begin{tabular}{lccc}
\toprule
GMM rule & GPT / WikiText & BERT / Shakespeare & LLaMA / StackOverflow \\
\midrule
BIC 1/2-GMM & 7.11 $\pm$ 0.01 & \textbf{6.47 $\pm$ 0.05} & \textbf{5.37 $\pm$ 0.03} \\
Forced 2-GMM & \textbf{7.11 $\pm$ 0.01} & \textbf{6.47 $\pm$ 0.05} & \textbf{5.37 $\pm$ 0.03} \\
\bottomrule
\end{tabular}
\end{table}

\begin{table}[ht]\centering
\caption{GMM-selection ablation under Non-IID partition with $0\%$ malicious clients. Metric: Test PPL. Values are mean $\pm$ standard deviation over three seeds. For each seed, all metrics are taken from the checkpoint with the lowest test loss. The best case is highlighted in bold.}
\label{tab:gmm_ablation_all_models_noniid_mal0_test_ppl}
\begin{tabular}{lccc}
\toprule
GMM rule & GPT / WikiText & BERT / Shakespeare & LLaMA / StackOverflow \\
\midrule
BIC 1/2-GMM & 1225.90 $\pm$ 12.06 & \textbf{643.56 $\pm$ 33.46} & \textbf{215.55 $\pm$ 6.85} \\
Forced 2-GMM & \textbf{1225.89 $\pm$ 12.06} & \textbf{643.56 $\pm$ 33.46} & \textbf{215.55 $\pm$ 6.85} \\
\bottomrule
\end{tabular}
\end{table}

\begin{table}[ht]\centering
\caption{GMM-selection ablation under Non-IID partition with $0\%$ malicious clients. Metric: Semantic Entropy. Values are mean $\pm$ standard deviation over three seeds. For each seed, all metrics are taken from the checkpoint with the lowest test loss. The best case is highlighted in bold.}
\label{tab:gmm_ablation_all_models_noniid_mal0_test_sem_ent}
\begin{tabular}{lccc}
\toprule
GMM rule & GPT / WikiText & BERT / Shakespeare & LLaMA / StackOverflow \\
\midrule
BIC 1/2-GMM & \textbf{6.80 $\pm$ 0.09} & \textbf{6.12 $\pm$ 0.05} & \textbf{5.54 $\pm$ 0.03} \\
Forced 2-GMM & 6.80 $\pm$ 0.09 & \textbf{6.12 $\pm$ 0.05} & \textbf{5.54 $\pm$ 0.03} \\
\bottomrule
\end{tabular}
\end{table}

\begin{table}[ht]\centering
\caption{GMM-selection ablation under Non-IID partition with $10\%$ malicious clients. Metric: Test Loss. Values are mean $\pm$ standard deviation over three seeds. For each seed, all metrics are taken from the checkpoint with the lowest test loss. The best case is highlighted in bold.}
\label{tab:gmm_ablation_all_models_noniid_mal10_test_loss}
\begin{tabular}{lccc}
\toprule
GMM rule & GPT / WikiText & BERT / Shakespeare & LLaMA / StackOverflow \\
\midrule
BIC 1/2-GMM & \textbf{7.12 $\pm$ 0.00} & \textbf{6.36 $\pm$ 0.06} & \textbf{5.38 $\pm$ 0.03} \\
Forced 2-GMM & \textbf{7.12 $\pm$ 0.00} & \textbf{6.36 $\pm$ 0.06} & \textbf{5.38 $\pm$ 0.03} \\
\bottomrule
\end{tabular}
\end{table}

\begin{table}[ht]\centering
\caption{GMM-selection ablation under Non-IID partition with $10\%$ malicious clients. Metric: Test PPL. Values are mean $\pm$ standard deviation over three seeds. For each seed, all metrics are taken from the checkpoint with the lowest test loss. The best case is highlighted in bold.}
\label{tab:gmm_ablation_all_models_noniid_mal10_test_ppl}
\begin{tabular}{lccc}
\toprule
GMM rule & GPT / WikiText & BERT / Shakespeare & LLaMA / StackOverflow \\
\midrule
BIC 1/2-GMM & \textbf{1238.52 $\pm$ 4.97} & \textbf{576.63 $\pm$ 36.58} & \textbf{216.13 $\pm$ 6.70} \\
Forced 2-GMM & \textbf{1238.52 $\pm$ 4.97} & \textbf{576.63 $\pm$ 36.58} & \textbf{216.13 $\pm$ 6.70} \\
\bottomrule
\end{tabular}
\end{table}

\begin{table}[ht]\centering
\caption{GMM-selection ablation under Non-IID partition with $10\%$ malicious clients. Metric: Semantic Entropy. Values are mean $\pm$ standard deviation over three seeds. For each seed, all metrics are taken from the checkpoint with the lowest test loss. The best case is highlighted in bold.}
\label{tab:gmm_ablation_all_models_noniid_mal10_test_sem_ent}
\begin{tabular}{lccc}
\toprule
GMM rule & GPT / WikiText & BERT / Shakespeare & LLaMA / StackOverflow \\
\midrule
BIC 1/2-GMM & \textbf{6.79 $\pm$ 0.05} & \textbf{6.06 $\pm$ 0.18} & \textbf{5.54 $\pm$ 0.03} \\
Forced 2-GMM & \textbf{6.79 $\pm$ 0.05} & \textbf{6.06 $\pm$ 0.18} & \textbf{5.54 $\pm$ 0.03} \\
\bottomrule
\end{tabular}
\end{table}

\begin{table}[ht]\centering
\caption{GMM-selection ablation under Non-IID partition with $20\%$ malicious clients. Metric: Test Loss. Values are mean $\pm$ standard deviation over three seeds. For each seed, all metrics are taken from the checkpoint with the lowest test loss. The best case is highlighted in bold.}
\label{tab:gmm_ablation_all_models_noniid_mal20_test_loss}
\begin{tabular}{lccc}
\toprule
GMM rule & GPT / WikiText & BERT / Shakespeare & LLaMA / StackOverflow \\
\midrule
BIC 1/2-GMM & \textbf{7.13 $\pm$ 0.01} & \textbf{6.42 $\pm$ 0.14} & \textbf{5.38 $\pm$ 0.03} \\
Forced 2-GMM & \textbf{7.13 $\pm$ 0.01} & \textbf{6.42 $\pm$ 0.14} & \textbf{5.38 $\pm$ 0.03} \\
\bottomrule
\end{tabular}
\end{table}

\begin{table}[ht]\centering
\caption{GMM-selection ablation under Non-IID partition with $20\%$ malicious clients. Metric: Test PPL. Values are mean $\pm$ standard deviation over three seeds. For each seed, all metrics are taken from the checkpoint with the lowest test loss. The best case is highlighted in bold.}
\label{tab:gmm_ablation_all_models_noniid_mal20_test_ppl}
\begin{tabular}{lccc}
\toprule
GMM rule & GPT / WikiText & BERT / Shakespeare & LLaMA / StackOverflow \\
\midrule
BIC 1/2-GMM & \textbf{1251.65 $\pm$ 11.37} & \textbf{618.30 $\pm$ 91.12} & \textbf{216.97 $\pm$ 6.91} \\
Forced 2-GMM & \textbf{1251.65 $\pm$ 11.37} & \textbf{618.30 $\pm$ 91.12} & \textbf{216.97 $\pm$ 6.91} \\
\bottomrule
\end{tabular}
\end{table}

\begin{table}[ht]\centering
\caption{GMM-selection ablation under Non-IID partition with $20\%$ malicious clients. Metric: Semantic Entropy. Values are mean $\pm$ standard deviation over three seeds. For each seed, all metrics are taken from the checkpoint with the lowest test loss. The best case is highlighted in bold.}
\label{tab:gmm_ablation_all_models_noniid_mal20_test_sem_ent}
\begin{tabular}{lccc}
\toprule
GMM rule & GPT / WikiText & BERT / Shakespeare & LLaMA / StackOverflow \\
\midrule
BIC 1/2-GMM & \textbf{7.04 $\pm$ 0.40} & \textbf{6.33 $\pm$ 0.09} & \textbf{5.53 $\pm$ 0.03} \\
Forced 2-GMM & \textbf{7.04 $\pm$ 0.40} & \textbf{6.33 $\pm$ 0.09} & \textbf{5.53 $\pm$ 0.03} \\
\bottomrule
\end{tabular}
\end{table}

\begin{table}[ht]\centering
\caption{GMM-selection ablation under Non-IID partition with $30\%$ malicious clients. Metric: Test Loss. Values are mean $\pm$ standard deviation over three seeds. For each seed, all metrics are taken from the checkpoint with the lowest test loss. The best case is highlighted in bold.}
\label{tab:gmm_ablation_all_models_noniid_mal30_test_loss}
\begin{tabular}{lccc}
\toprule
GMM rule & GPT / WikiText & BERT / Shakespeare & LLaMA / StackOverflow \\
\midrule
BIC 1/2-GMM & \textbf{7.15 $\pm$ 0.02} & \textbf{6.38 $\pm$ 0.18} & \textbf{5.38 $\pm$ 0.03} \\
Forced 2-GMM & 7.15 $\pm$ 0.02 & 6.38 $\pm$ 0.18 & \textbf{5.38 $\pm$ 0.03} \\
\bottomrule
\end{tabular}
\end{table}

\begin{table}[ht]\centering
\caption{GMM-selection ablation under Non-IID partition with $30\%$ malicious clients. Metric: Test PPL. Values are mean $\pm$ standard deviation over three seeds. For each seed, all metrics are taken from the checkpoint with the lowest test loss. The best case is highlighted in bold.}
\label{tab:gmm_ablation_all_models_noniid_mal30_test_ppl}
\begin{tabular}{lccc}
\toprule
GMM rule & GPT / WikiText & BERT / Shakespeare & LLaMA / StackOverflow \\
\midrule
BIC 1/2-GMM & \textbf{1277.28 $\pm$ 23.52} & \textbf{594.61 $\pm$ 108.75} & \textbf{218.02 $\pm$ 7.07} \\
Forced 2-GMM & 1277.81 $\pm$ 22.93 & 594.63 $\pm$ 108.74 & \textbf{218.02 $\pm$ 7.07} \\
\bottomrule
\end{tabular}
\end{table}

\begin{table}[ht]\centering
\caption{GMM-selection ablation under Non-IID partition with $30\%$ malicious clients. Metric: Semantic Entropy. Values are mean $\pm$ standard deviation over three seeds. For each seed, all metrics are taken from the checkpoint with the lowest test loss. The best case is highlighted in bold.}
\label{tab:gmm_ablation_all_models_noniid_mal30_test_sem_ent}
\begin{tabular}{lccc}
\toprule
GMM rule & GPT / WikiText & BERT / Shakespeare & LLaMA / StackOverflow \\
\midrule
BIC 1/2-GMM & 6.90 $\pm$ 0.02 & 6.00 $\pm$ 0.38 & \textbf{5.53 $\pm$ 0.02} \\
Forced 2-GMM & \textbf{6.87 $\pm$ 0.06} & \textbf{6.00 $\pm$ 0.38} & \textbf{5.53 $\pm$ 0.02} \\
\bottomrule
\end{tabular}
\end{table}

\begin{table}[ht]\centering
\caption{GMM-selection ablation under Non-IID partition with $40\%$ malicious clients. Metric: Test Loss. Values are mean $\pm$ standard deviation over three seeds. For each seed, all metrics are taken from the checkpoint with the lowest test loss. The best case is highlighted in bold.}
\label{tab:gmm_ablation_all_models_noniid_mal40_test_loss}
\begin{tabular}{lccc}
\toprule
GMM rule & GPT / WikiText & BERT / Shakespeare & LLaMA / StackOverflow \\
\midrule
BIC 1/2-GMM & \textbf{7.16 $\pm$ 0.01} & \textbf{6.48 $\pm$ 0.13} & \textbf{5.39 $\pm$ 0.03} \\
Forced 2-GMM & \textbf{7.16 $\pm$ 0.01} & \textbf{6.48 $\pm$ 0.13} & \textbf{5.39 $\pm$ 0.03} \\
\bottomrule
\end{tabular}
\end{table}

\begin{table}[ht]\centering
\caption{GMM-selection ablation under Non-IID partition with $40\%$ malicious clients. Metric: Test PPL. Values are mean $\pm$ standard deviation over three seeds. For each seed, all metrics are taken from the checkpoint with the lowest test loss. The best case is highlighted in bold.}
\label{tab:gmm_ablation_all_models_noniid_mal40_test_ppl}
\begin{tabular}{lccc}
\toprule
GMM rule & GPT / WikiText & BERT / Shakespeare & LLaMA / StackOverflow \\
\midrule
BIC 1/2-GMM & \textbf{1286.50 $\pm$ 15.34} & \textbf{658.21 $\pm$ 91.66} & \textbf{220.18 $\pm$ 6.45} \\
Forced 2-GMM & \textbf{1286.50 $\pm$ 15.34} & \textbf{658.21 $\pm$ 91.66} & \textbf{220.18 $\pm$ 6.45} \\
\bottomrule
\end{tabular}
\end{table}

\begin{table}[ht]\centering
\caption{GMM-selection ablation under Non-IID partition with $40\%$ malicious clients. Metric: Semantic Entropy. Values are mean $\pm$ standard deviation over three seeds. For each seed, all metrics are taken from the checkpoint with the lowest test loss. The best case is highlighted in bold.}
\label{tab:gmm_ablation_all_models_noniid_mal40_test_sem_ent}
\begin{tabular}{lccc}
\toprule
GMM rule & GPT / WikiText & BERT / Shakespeare & LLaMA / StackOverflow \\
\midrule
BIC 1/2-GMM & \textbf{6.92 $\pm$ 0.24} & \textbf{6.19 $\pm$ 0.13} & \textbf{5.52 $\pm$ 0.03} \\
Forced 2-GMM & \textbf{6.92 $\pm$ 0.24} & \textbf{6.19 $\pm$ 0.13} & \textbf{5.52 $\pm$ 0.03} \\
\bottomrule
\end{tabular}
\end{table}

\subsection{Interpretation}

The BIC-selected and forced two-component rules yield nearly identical endpoint values in the evaluated runs. The forced rule always partitions the current participants, whereas BIC preserves the option to retain all clients when the one-component model receives the lower criterion value. This ablation therefore characterizes endpoint sensitivity to the model-selection rule; it does not establish statistical identifiability of the fitted mixtures.

\end{document}